\newtheorem{theorem}{Theorem}
\newtheorem{lemma}{Lemma}
\newtheorem{proposition}{Proposition}
 \newtheorem{definition}{Definition}
\newtheorem{example}{Example}
\newcommand{\argmax}{\operatornamewithlimits{argmax}}
\newcommand{\argmin}{\operatornamewithlimits{argmin}}
\newcommand{\scalprod}[2]{\left\langle #1,#2 \right\rangle}
\renewcommand{\algorithmicrequire}{\textbf{Input:}}
\newcommand*\samethanks[1][\value{footnote}]{\footnotemark[#1]}
\title{Image registration with sparse approximations in parametric dictionaries}
\author{Alhussein Fawzi\thanks{Ecole Polytechnique Federale de Lausanne (EPFL), Signal Processing Laboratory (LTS4), Lausanne 1015-Switzerland (\href{mailto:alhussein.fawzi@epfl.ch}{alhussein.fawzi@epfl.ch}, \href{mailto:pascal.frossard@epfl.ch}{pascal.frossard@epfl.ch})}
\and Pascal Frossard\samethanks}
\begin{document}

\maketitle

\begin{abstract}
We examine in this paper the problem of image registration from the new perspective where images are given by sparse approximations in parametric dictionaries of geometric functions. We propose a registration algorithm that looks for an estimate of the global transformation between sparse images by examining the set of relative geometrical transformations between the respective features. We propose a theoretical analysis of our registration algorithm and we derive performance guarantees based on two novel important properties of redundant dictionaries, namely the robust linear independence and the transformation inconsistency. We propose several illustrations and insights about the importance of these dictionary properties and show that common properties such as coherence or restricted isometry property fail to provide sufficient information in registration problems. We finally show with illustrative experiments on simple visual objects and handwritten digits images that our algorithm outperforms baseline competitor methods in terms of transformation-invariant distance computation and classification.
\end{abstract}

\section{Introduction}

With the ever-increasing quantity of information produced by sensors, efficient processing techniques for identifying meaningful information in high-dimensional data sets become crucial.
One of the key challenges is to be able to identify relevant objects captured at different times, from various viewpoints, or by different sensors.
Sparse signal representations, which decompose linearly signals into key features, have recently been shown to be a powerful tool in image analysis tasks \cite{wright2009robust, mairal2008supervised, elad2006image}. 
In general, it is however necessary to align signals a priori in order to derive meaningful comparisons or distances in the analysis. Image alignment or \textit{registration} thus represents a crucial yet non-trivial task in many image processing and computer vision applications, such as object detection, localization and classification to name a few.

In this paper, we propose a registration algorithm for sparse images that are given as a linear combination of geometric features drawn from a parametric \textit{dictionary}. The estimation of the global geometric transformation between images is performed first by building a set of candidate transformation solutions with all the relative transformations between features in each image. The transformation that leads to the smallest transformation-invariant distance is finally selected as the global transformation estimate. While image registration is generally a complex optimization problem, our algorithm offers a low complexity solution when the images have a small number of constitutive components. We analyze its theoretical performance, which mainly depends on the construction of the dictionary that supports the sparse image representations. We introduce two novel properties for redundant dictionaries, namely the robust linear independence and transformation inconsistency, which permit to characterize the performance of the registration algorithm. The benefits of these properties are studied in detail and compared to common properties such as the coherence or the restricted isometry property. We finally provide illustrative registration and classification experiments, where our algorithm outperforms baseline solutions from the literature, particularly when relative transformations between images are large.

The image registration problem has been widely investigated from different perspectives in the literature, but not from the point of view of sparse image approximations as studied in this paper. Image registration algorithms are usually classified into direct (pixel-based) methods, and featured-based methods \cite{szeliski2006image}. We review these two classes of methods, and refer the reader to \cite{zitova2003image, szeliski2006image} for a general survey on image alignment. 

Direct pixel-based methods simply consist in trying all candidate transformations and see how much pixels agree when the images are transformed relatively to each other. A major drawback of these methods is their inefficiency when the number of candidate transformations becomes large. Therefore, hierarchical coarse-to-fine techniques based on image pyramids have been developed \cite{bergen1992hierarchical, vural2013analysis} to offer a compromise between accuracy and computational complexity. 
In a different approach, the authors of \cite{peng2010rasl, zhang2012tilt} formulate the registration problem as a low-rank matrix recovery problem with sparse noise, and leverage the recent advances in convex optimization to find the optimal transformation that best aligns the images. The approaches developed in \cite{segman1992canonical, ferraro1988relationship} map the images to a canonical space where deformations take a simple form and thus allows easier registration. 


The popular \emph{feature-based} approaches \cite{szeliski2010computer} represent a more efficient class of methods for image registration. They are usually built on several steps: (i) \emph{feature detection}, which searches for stable distinctive locations in the images, (ii) \emph{feature description}, which provides a description of each detected location with an invariant descriptor, (iii) \emph{features matching} between the images and (iv) \emph{transformation estimation} that estimates the global transformation by looking at matched features. Note that it is crucial in this class of methods to describe the features in a transformation-invariant way for easier matching. We refer the reader to \cite{mikolajczyk2005performance} for a comparison of the main different methods.  A popular  example of the feature-based approach relies on the scale invariant feature transform (SIFT) \cite{lowe2004distinctive} that combines the Difference-of-Gaussian (DoG) detector with a descriptor based on image gradient orientations around the keypoint. The SIFT method is invariant to rotation, scaling and translation and some of its extensions achieve invariance to affine transformations \cite{morel2009asift}. Moreover, the SIFT descriptors are often used in combination with affine invariant detectors such as those proposed in \cite{matas2002robust, mikolajczyk2005comparison, kimmel2011mser, bronstein2010spatially} for affine image registration.
Even though SIFT has been very successful in many computer vision applications, it is mostly built on empirical results and several parameters need to be set manually. Feature-based methods in general are not well suited for estimating large transformations between target images, as the matching accuracy and keypoint localization degrade for large transformations. 

   

Finally, we mention some recent advances in transformation-invariant distance estimation, which is closely related to image registration. 
The transformation-invariant distance is defined as the minimum distance between the possible transformations of two patterns. In general, the signals generated by the possible transformations of a pattern can be represented by a non linear manifold. Computing the transformation-invariant distance between two patterns or equivalently the manifold distance is thus a difficult problem in general. The authors in \cite{simard1998transformation} locally approximate the transformation invariant distance with the distance between the linear spaces that are tangent to both manifolds. Vasconcelos et. al. \cite{vasconcelos2005multiresolution} go beyond the limitations of local invariance in tangent distance methods by embedding the tangent distance computation in a multiresolution framework. Kokiopoulou et. al. in \cite{kokiopoulou2009minimum} achieve global invariance by approximating the original pattern with a linear combination of atoms from a parametric dictionary. Thanks to this approximation, the manifold is given in a closed form and the objective function becomes equal to a difference of convex functions that can be globally minimized using cutting plane methods. Unfortunately, this class of optimization methods have a slow convergence rate with complexity limitations in practical settings.

In this paper, we propose to examine the image registration problem from a novel perspective by building on our earlier work \cite{fawzi2012_icassp} where we consider that images are given in the form of sparse approximations. 
Unlike the existing methods, this approach guarantees invariance to transformations of arbitrary magnitude and is generic with respect to the transformation group considered in the registration problem. The detailed analysis of our new framework further provides useful insights on the connections between image registration problems and sparse signal processing.

The rest of this paper is organized as follows. In Section \ref{sec:registration}, we formulate the problem of registration of sparse images and present our registration algorithm. Section \ref{sec:theoretical_analysis} proposes a theoretical performance analysis of our algorithm, and introduces two new dictionary properties. We finally present illustrative experiments in Section \ref{sec:experiments}.

\section{Registration of sparse images}
\label{sec:registration}

\subsection{Preliminaries}

We first define the notations and conventions used in this paper. We denote respectively by $\mathbb{R}$, $\mathbb{R}^+$, $\mathbb{R}^+_{^*}$ the set of real numbers, the set of non negative real numbers and the set of positive real numbers. We consider images to be continuous functions in $L^2 = \{ f: \mathbb{R}^2 \rightarrow \mathbb{R}: \int_{-\infty}^{+\infty} |f(x)|^2 dx < \infty \}$. We denote the scalar product associated with $L^2$ as: $\scalprod{f}{g} = \int_{-\infty}^{+\infty} f(x) g(x) dx$, and the norm by $\| f \|_2 = \sqrt{\int_{-\infty}^{+\infty} |f(x)|^2 dx}$. Then, we define $\mathcal{T}$ to be a transformation group and denote by $\circ$ its associated composition rule. We consider that the group $\mathcal{T}$ includes the transformations between pairs of images in our registration problem. We represent any transformation $\eta \in \mathcal{T}$ by a vector in $\mathbb{R}^P$ (where $P$ denotes the dimension of $\mathcal{T}$) containing the parameters of the transformation. 

Alternatively, we represent a transformation $\eta \in \mathcal{T}$ with its unitary representation $U(\eta)$ in $L^2$. Therefore, for any $\eta \in \mathcal{T}$, $U(\eta)$ is the function that maps an image $f$ to its transformed image $U(\eta) f \in L^2$ by $\eta$. Moreover, as $U(\eta)$ is a unitary operator, we have $\| U(\eta) f \|_2 = \| f \|_2$. In order to avoid heavy notations, we also use $f_{\eta}$ to denote $U(\eta) f$. We give in Table \ref{tab:transf_groups} some examples of transformation groups and their unitary representation in $L^2$. 

\begin{table}[ht]
\footnotesize
\centering
\begin{tabular}{|c|c|c|c|}
  \hline
  Group & Parameters & Composition   & Unitary representation  \\
  \hline 
   &  $\eta$ &  $\eta \circ \eta'$  &  $U(\eta) f = f_{\eta}$ \\
  \hline \hline
  $\mathbb{R}^2$ & $b$ & $b + b'$ & $f(x_1 - b_1, x_2 - b_2)$ \\
  \hline
  Special Euclidean group $SE(2)$ & $(b, \theta)$ & $(b + R_{\theta} b', \theta + \theta')$ & $f\left( R_{-\theta} ( x - b ) \right)$ \\
  \hline
  Similarity group $SIM(2)$ & $(b, a, \theta)$ & $(b + a R_{\theta} b', a a', \theta + \theta')$ & $a^{-1} f\left( \frac{R_{-\theta}}{a} ( x - b ) \right)$ \\
  \hline
\end{tabular}
\caption{\label{tab:transf_groups} Examples of transformation groups and their unitary representation in $L^2$. Parameters with a prime are associated with a secondary transformation $\eta'$, and $R_{\theta}$ denotes the rotation matrix with angle $\theta$.}
\end{table}

The group $\mathbb{R}^2$ is the group of translations in the plane. The Special Euclidean group $SE(2)$ is the group of translations and rotations in the plane. Its dimension is equal to $3$ ($2$ degrees of freedom are associated with the translation and one is associated with rotation). The similarity group $SIM(2)$ of the plane is the set of transformations consisting of translations, isotropic dilations and rotations. This group plays a particular importance in transformation invariant image processing since it contains the basic transformations we usually want to be invariant to.

Finally, if $c \in \mathbb{R}^n$ and $1 \leq p < \infty$, we denote by $\| c \|_p$ the $\ell_p$ norm of $c$ defined by $\| c \|_p = \left( \sum_{i=1}^n |c_i|^p \right)^{1/p}$. Note that the notation $\| \cdot \|_2$ is overloaded since it denotes either the continuous $L^2$ norm or the discrete $\ell_2$ norm. However, the distinction between both cases will be clear from the context.

\subsection{Problem formulation}
\label{sec:probform}

We formulate now the registration problem that we consider in the paper. Let $I_1$ and $I_2$ be two images in $L^2$. We are interested in computing the optimal transformation between images $I_1$ and $I_2$. Hence, we formulate the original alignment problem as follows:
\begin{align*}
\text{(P'): Find } \eta'_0 = \argmin_{\eta \in \mathcal{T}} \left\| U(\eta) I_1 - I_2 \right\|_2.
\end{align*}
We denote by $d(I_1, I_2) = \left\| U(\eta'_0) I_1 - I_2 \right\|_2$ the \emph{transformation invariant distance} between $I_1$ and $I_2$. It corresponds to the regular Euclidean distance when the images are aligned optimally in the $L^2$ sense. Unfortunately, computing the transformation $\eta'_0$ and the transformation invariant distance $d(I_1, I_2)$ is a hard problem since the objective function is typically non convex and exhibits many local minima. 

In order to circumvent this problem, we consider that the images are well approximated by their sparse expansion in a series of geometric functions. Specifically, let $\mathcal{D}$ be a \textit{parametric dictionary of geometric features} constructed by transforming a generating function $\phi \in L^2$ as follows: 
\begin{align}
\label{eq:dictionary}
\mathcal{D} =\{ \phi_{\gamma}: \gamma \in \mathcal{T}_d \} \subset L^2,
\end{align}
where $\mathcal{T}_d \subset \mathcal{T}$ is a finite discretization of the transformation group $\mathcal{T}$ and $\phi_{\gamma} = U(\gamma) \phi$ denotes the transformation of the generating function $\phi$ by $\gamma$. We denote by $p$ and $q$ the respective $K$-sparse approximations of $I_1$ and $I_2$  in the dictionary $\mathcal{D}$: 
\begin{eqnarray}
p & = \sum_{i=1}^K c_i \phi_{\gamma_i}, \nonumber \\
q & = \sum_{i=1}^K d_i \phi_{\delta_i}. \label{eq:sparseapprox}
\end{eqnarray}
Since the dictionary $\mathcal{D}$ contains features that represent potential parts of the image, we assume that coefficients $c_i$ and $d_i$ are all non negative so that the different features do not cancel each other. 
 
We refer to any element $\phi_{\gamma}$ in $\mathcal{D}$ as a \emph{feature} or \emph{atom}. We suppose in this paper that the generating function $\phi$ is non negative. 
Besides, we suppose for simplicity that $\gamma \mapsto \phi_{\gamma}$ defines a one-to-one mapping. This assumption means that the generating function does not have any symmetries in $\mathcal{T}$\footnote{We extend this assumption to the more general setting where the stabilizer of $\phi$ defined by $\mathcal{S}_{\phi} = \{ \gamma \in \mathcal{T}: U(\gamma) \phi = \phi \}$ is a finite set in Appendix \ref{app:not_bijective}.}. Finally, we suppose without loss of generality that the mother function $\phi$ is normalized so that $\| \phi \|_2 = 1$. 

We can now reformulate the registration problem as the problem of finding the optimal relative transformation between sparse patterns. In particular, we reformulate our registration problem as follows: 
\begin{align*}
\text{(P): Find } \eta_0 = \argmin_{\eta \in \mathcal{T}} \left\| U(\eta) p - q \right\|_2.
\end{align*}
The smallest distance $d(p,q) = \left\| U(\eta_0) p - q \right\|_2$ is the transformation invariant distance computed between the sparse image approximations $p$ and $q$. 
Compared to the original problem, the images $I_1$ and $I_2$ are replaced by their respective sparse approximations $p$ and $q$.
This presents some potential advantages in applications where users do not have access to the original images; more importantly, the prior information on the support of $p$ and $q$ effectively guides the registration process, as we will see in the next paragraph. We should note that if the images are not well approximated by their sparse expansions, the solution of $(P)$ may substantially differ from the true transformation obtained by solving $(P')$.
%

\subsection{Registration algorithm}

We propose now a novel and simple algorithm to solve the registration problem for images given by their sparse approximations. The core idea of our registration algorithm lies in the \textit{covariance} property of the dictionary $\mathcal{D}$: a global transformation applied on the image induces an equivalent transformation on the corresponding features\footnote{The meaning of covariance that is used in this paper is not to be confused with that of covariance used in statistics.}. 
Thanks to this covariance property, it is possible to infer the global transformation between the images by a simple computation of the relative transformations between the features in both images. 

Specifically, let $\mathcal{T}_a^{p,q}$ be the set of relative transformations between pairs of features taken respectively in $p$ and $q$: $\mathcal{T}_a^{p,q} = \{ \delta_i \circ \gamma_{j}^{-1}: 1 \leq i,j \leq K \}$. We can thus estimate the relative transformation between the images by solving the following relaxed problem of $(P)$:
\begin{align*}
\text{($\hat{P}$): Find } \hat{\eta} = \argmin_{\eta \in \mathcal{T}_a^{p,q}} \left\| U(\eta) p - q \right\|_2.
\end{align*}
The minimum of the objective function $d_a(p,q) = \| U(\hat{\eta}) p - q \|_2$ is defined as the \emph{approximate transformation-invariant distance} between $I_1$ and $I_2$.

Even though problems $(P)$ and $(\hat{P})$ share some similarities, they differ in an important aspect, that is the search space. It is reduced from $\mathcal{T}$ to the finite set $\mathcal{T}_a^{p,q}$. This constrains the estimated transformation to be equal to a transformation that exactly maps two features taken respectively from $p$ and $q$. The assumption that $\mathcal{T}$ can be replaced by $\mathcal{T}_a^{p,q}$ originates from the observation that features are covariant to the global transformation applied on the original image. Even though this assumption is not necessarily true for all features when innovation exists between the images (other than a global transformation), we expect to have at least one feature whose transformation is consistent with the optimal transformation $\eta_0$. We analyze in detail the error due to this assumption in Section \ref{sec:theoretical_analysis}. The advantage of replacing $\mathcal{T}$ by $\mathcal{T}_a^{p,q}$ is however immediate: we have reduced an intractable problem to a problem whose search space is of cardinality at most $K^2$. Since $K$ is generally chosen to be small enough, the problem $(\hat{P})$ can be efficiently solved by a full search over all the elements of $\mathcal{T}_a^{p,q}$. The registration algorithm is summarized in Algorithm \ref{alg:registration_algo}. 

\begin{algorithm}[ht]
\algorithmicrequire{\hspace{1mm} sparse approximations $p = \sum_{i=1}^K c_i \phi_{\gamma_i}$ and $q = \sum_{i=1}^K d_i \phi_{\delta_i}$.} \\
\begin{algorithmic}
\STATE $\mathbf{1.}$ Construct the set $\mathcal{T}_a^{p,q}$:
\begin{align*}
\mathcal{T}_a^{p,q} & = \{ \delta_i \circ \gamma_j^{-1}: 1 \leq i,j \leq K \}.
\end{align*}
\STATE $\mathbf{2.}$ Estimate the transformation $\hat{\eta}$ and $d_a(p,q)$.
\begin{align*}
\hat{\eta} & \leftarrow \argmin_{\eta \in \mathcal{T}_a^{p,q}} \left\| U(\eta) p - q \right\|_2, \\
d_a(p,q) & \leftarrow \left\| U(\hat{\eta}) p - q \right\|_2.
\end{align*}
\STATE $\mathbf{3.}$ Return $(\hat{\eta}, d_a(p,q))$.
\end{algorithmic}
\caption{\label{alg:registration_algo} Image registration algorithm}
\end{algorithm}

The value of $K$ controls the computational complexity of Algorithm \ref{alg:registration_algo}: a large value of $K$ results in a large cardinality of the search space $\mathcal{T}_a^{p,q}$. Furthermore, the value of $K$ also generally controls the error in the approximation of the original images by their sparse expansions $p$ and $q$. We discuss more in detail the influence of $K$ on our registration algorithm in Section \ref{sec:experiments}. Note finally that we have supposed for simplicity that both images $I_1$ and $I_2$ are approximated by the same number of features. However, it is easy to see that one can generalize it to the case where the number of features are different in the two images. In this case, we have $| \mathcal{T}_a^{p,q} | = K_1 K_2$ instead of $K^2$, where $K_1$ and $K_2$ are the number of features in $I_1$ and $I_2$ respectively.

In the next section, we analyze the performance of the proposed registration algorithm in different settings, and focus in particular on the influence of the dictionary $\mathcal{D}$ on the registration performance.


\section{Theoretical analysis}

\label{sec:theoretical_analysis}

In this section, we examine the penalty of relaxing the original problem $(P')$ into $(\hat{P})$ in terms of registration performance. We first discuss the framework and the assumptions used in our analysis. Then, we study a simple case where the image patterns are exactly related by a (possibly very large) geometrical transformation. We show that under a mild assumption on the dictionary, our algorithm achieves perfect registration. We then extend the analysis to the general case and introduce two key properties of the dictionary (namely \emph{robust linear independence} and \emph{transformation inconsistency}). We show that under some conditions on these properties, our algorithm succeeds in recovering the correct relative transformation with a bounded error in the general case, as long as the innovation between the images (other than the global geometrical transformation) is controlled. We give at each step of the analysis the main intuitions and several examples to illustrate the novel notions introduced in our analysis.

\subsection{Analysis framework}

We first define a performance metric to measure the image registration accuracy. As we want to capture the performance of our registration algorithm with respect to the optimal image alignment obtained by solving $(P')$, a natural metric consists in computing the difference between the transformation invariant distance and its approximate version, i.e., $E'(p, q, I_1, I_2) = | d_a(p,q) - d(I_1, I_2) |$. We however assume in this paper that the images are given by their sparse expansions. Therefore, we use an alternative registration performance given by $E(p,q) = d_a(p,q) - d(p,q)$, where we use the transformation invariant distance computed between the sparse image approximations $p$ and $q$ instead of the original images. Note that $E(p,q) \geq 0$ since $\mathcal{T}_a^{p,q} \subset \mathcal{T}$.

We relate in the following proposition the two registration metrics $E(p,q)$ and $E'(p, q, I_1, I_2)$ to the sparse approximation errors $\| I_1 - p \|_2$ and $\| I_2 - q \|_2$.
\vspace{4mm}
\begin{proposition}
\quad $E'(p,q,I_1,I_2) \leq E(p,q) + \| I_1 - p \|_2 + \| I_2 - q\|_2$.
\label{prop:assumption_1}
\end{proposition}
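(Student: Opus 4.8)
The plan is to exploit the fact that the transformation-invariant distance $d(\cdot,\cdot) = \min_{\eta \in \mathcal{T}} \| U(\eta) \cdot - \cdot \|_2$ is a pseudometric on $L^2$, and in particular obeys a triangle inequality. First I would verify this. Symmetry follows from unitarity, since $\| U(\eta) f - g \|_2 = \| f - U(\eta^{-1}) g \|_2$ and $\eta \mapsto \eta^{-1}$ is a bijection of $\mathcal{T}$, so reparametrizing the minimization gives $d(f,g) = d(g,f)$. For the triangle inequality $d(f,h) \leq d(f,g) + d(g,h)$, I would use that $U$ is a unitary representation, hence a group homomorphism satisfying $U(\eta_2 \circ \eta_1) = U(\eta_2) U(\eta_1)$. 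Then for any $\eta_1, \eta_2 \in \mathcal{T}$, inserting $U(\eta_2) g$ and invoking unitarity of $U(\eta_2)$,
\[
\| U(\eta_2 \circ \eta_1) f - h \|_2 \leq \| U(\eta_1) f - g \|_2 + \| U(\eta_2) g - h \|_2.
\]
Minimizing the right-hand side over $\eta_1$ and $\eta_2$, and noting that $\eta_2 \circ \eta_1$ ranges over all of $\mathcal{T}$, yields the claim.

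With the triangle inequality in hand, the rest is a short chain of estimates. Since $E(p,q) = d_a(p,q) - d(p,q) \geq 0$, I would write $d_a(p,q) - d(I_1,I_2) = \big(d(p,q) - d(I_1,I_2)\big) + E(p,q)$, and hence $E'(p,q,I_1,I_2) = |d_a(p,q) - d(I_1,I_2)| \leq |d(p,q) - d(I_1,I_2)| + E(p,q)$. The reverse triangle inequality for the pseudometric $d$ then gives $|d(p,q) - d(I_1,I_2)| \leq d(I_1, p) + d(I_2, q)$. Finally, bounding each term by evaluating the minimization at the identity transformation, $d(I_1, p) = \min_{\eta} \| U(\eta) I_1 - p \|_2 \leq \| I_1 - p \|_2$ and similarly $d(I_2, q) \leq \| I_2 - q \|_2$, completes the proof once the pieces are assembled.

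The only delicate point is the triangle inequality for $d$, and specifically the use of $U(\eta_2 \circ \eta_1) = U(\eta_2) U(\eta_1)$: this is where the group structure of $\mathcal{T}$ and the homomorphism property of the unitary representation are essential, since they let the two separate minimizations over $\eta_1$ and $\eta_2$ recombine into a single minimization over $\mathcal{T}$. Everything after that is a routine application of the triangle inequality together with the elementary observation that restricting the minimization to the identity can only increase the distance.
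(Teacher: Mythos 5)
Your proof is correct, but it reaches the key estimate $|d(p,q) - d(I_1,I_2)| \leq \| I_1 - p \|_2 + \| I_2 - q \|_2$ by a genuinely different route than the paper. You first promote $d$ to a pseudometric on $L^2$ (symmetry via the reparametrization $\eta \mapsto \eta^{-1}$, triangle inequality via the homomorphism property $U(\eta_2 \circ \eta_1) = U(\eta_2)U(\eta_1)$), then invoke the quadrilateral inequality $|d(p,q) - d(I_1,I_2)| \leq d(I_1,p) + d(I_2,q)$, and finally bound $d(I_1,p) \leq \|I_1 - p\|_2$ by evaluating the minimization at the identity. The paper instead argues directly: for each fixed $\eta$ it sandwiches $\|U(\eta) I_1 - I_2\|_2$ between $\|U(\eta) p - q\|_2 \pm \left( \|I_1-p\|_2 + \|I_2-q\|_2 \right)$, using only linearity and unitarity of the single operator $U(\eta)$, and then takes the minimum over $\eta$ on all sides; the same $\eta$ indexes both minimizations, so no composition or inversion of transformations is ever needed. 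The trade-off is instructive: your argument isolates a reusable structural fact (the transformation-invariant distance is a pseudometric), and this is precisely where the group structure of $\mathcal{T}$ and the representation property of $U$ earn their keep; the paper's argument is shorter and holds under strictly weaker hypotheses, since it would survive even if $\mathcal{T}$ were an arbitrary family of linear unitary operators with no group structure at all. The opening decomposition, $E'(p,q,I_1,I_2) \leq E(p,q) + |d(p,q)-d(I_1,I_2)|$, is identical in both proofs.
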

\vspace{4mm}
\begin{proof}
We have:
\begin{align*}
E'(p,q,I_1,I_2) & = |d_a(p,q) - d(I_1, I_2)| \\
					  & = |d_a(p,q) - d(p,q) + d(p,q) - d(I_1, I_2)| \\
					  & \leq E(p,q) + | d(p,q) - d(I_1, I_2) |,
\end{align*}
using the triangle inequality. We now show that $|d(p,q) - d(I_1,I_2)| \leq \| I_1 - p \|_2 + \| I_2 - q \|_2$. Let $\eta \in \mathcal{T}$. We  have:
\begin{align*}
\| U(\eta) I_1 - I_2 \|_2 & = \| U(\eta) (p + I_1 - p) - (q + I_2 - q) \|_2 \\ & = \| U(\eta) p - q + U(\eta) (I_1 - p) - (I_2 - q) \|_2.
\end{align*}
Using the triangle inequality, we derive a lower and an upper bound as follows:
\begin{align*}
\| U(\eta) p - q \|_2 - \| U(\eta) (I_1 - p) \|_2 - \| I_2 - q \|_2 \leq \| U(\eta) I_1 - I_2 \|_2 \leq \| U(\eta) p - q \|_2 + \| U(\eta) (I_1 - p) \|_2 + \| I_2 - q \|_2.
\end{align*}
As $U$ is a unitary operator, we have $\| U(\eta) (I_1 - p) \|_2 = \| I_1 - p \|_2$. Hence, rewriting the previous equation, we get:
\begin{align}
\label{eq:pf1_eq1}
\| U(\eta) p - q \|_2 - \| I_1 - p \|_2 - \| I_2 - q \|_2 \leq \| U(\eta) I_1 - I_2 \|_2 \leq \| U(\eta) p - q \|_2  + \| I_1 - p \|_2 + \| I_2 - q \|_2.
\end{align}
Recall that $d(p,q) = \min_{\eta \in \mathcal{T}} \| U ( \eta ) p - q \|_2$ and $d(I_1, I_2) = \min_{\eta \in \mathcal{T}} \| U ( \eta ) I_1 - I_2 \|_2$. Hence, by taking the minimum over all $\eta \in \mathcal{T}$, we obtain $|d(I_1, I_2) - d(p,q)| \leq \| I_1 - p \|_2 + \| I_2 - q \|_2$, which concludes the proof of the proposition.
\end{proof}

When most of the energy of $I_1$ and $I_2$ is captured by $p$ and $q$ (namely when $\| I_1 - p \|_2 + \| I_2 - q \|_2$ is small), the registration errors $E(p,q)$ and $E'(p, q, I_1, I_2)$ are equivalent. We suppose in the rest of this section that this condition is satisfied and we measure the registration error with $E(p,q) = d_a(p, q) - d(p,q)$.
Hence we focus exclusively in this analysis on the penalty induced by restricting the search space $\mathcal{T}$ to $\mathcal{T}_a^{p,q}$, that is the penalty induced by relaxing the problem $(P)$ into the problem $(\hat{P})$ in the above section. 

Before studying the registration performance, we describe additional assumptions on the discretization of the transformation group $\mathcal{T}$ . Recall that the transformation $\eta_0$ optimally aligns $p$ and $q$ in the $L^2$ sense in problem $(P)$. We assume that it satisfies the following assumptions:
\begin{align}
\eta_0 \circ \gamma_i & \in \mathcal{T}_d \text{ for all } i \in \{1, \dots, K\} \label{eq:assumption_2}, \\
\eta_0^{-1} \circ \delta_i & \in \mathcal{T}_d \text{ for all } i \in \{1, \dots, K\},  \label{eq:assumption_3}
\end{align}
where $\mathcal{T}_d$ is the discretization of $\mathcal{T}$ used to construct dictionary $\mathcal{D}$ as given in Eq. (\ref{eq:dictionary}). These hypotheses state that the atoms of $U(\eta_0) p$ and $U(\eta_0^{-1}) q$ belong to the dictionary, where $U(\eta_0) p$ is the optimal alignment of $p$ with $q$ and $U(\eta_0^{-1}) q$ is the optimal alignment of $q$ with $p$. As $\eta_0$ is obviously not known beforehand, it is difficult to verify this assumption in practice. However, we can assume that Eq.   (\ref{eq:assumption_2}) and Eq. (\ref{eq:assumption_3}) hold when the parameter space used to design $\mathcal{D}$ is discretized finely.

Finally, the assumptions in our performance analysis can be summarized as follows:
\begin{align*}
\mathbf{(A_1):} \quad & \| I_1 - p \|_2 + \| I_2 - q \|_2 \approx 0, \\
\mathbf{(A_2):} \quad & \eta_0 \circ \gamma_i \in \mathcal{T}_d, \\
									    & \eta_0^{-1} \circ \delta_i \in \mathcal{T}_d.
\end{align*}

\subsection{Registration performance with exact pattern transformation}
\label{sec:performance_exactly_transformed_patterns}

In our performance analysis, we first consider the special case where $d(p, q) = 0$. This means that there exists a transformation $\eta_0 \in \mathcal{T}$ for which $q = U(\eta_0) p$, i.e., the sparse image approximations can be aligned exactly. We show that in this case, our registration algorithm is able to recover the exact global transformation between $p$ and $q$, as long as any subset of size $2K$ in $\mathcal{D}$ is linearly independent. 
We have the following proposition: 
\vspace{4mm}
\begin{proposition}
\quad Suppose that any subset of size $2K$ in $\mathcal{D}$ is linearly independent. In this case, if $d(p,q) = 0$, then $E(p,q) = 0$.
\label{prop:prop1}
\end{proposition}
\vspace{4mm}
\begin{proof}
If $d(p,q) = 0$, then we have $\sum_{i=1}^K c_i \phi_{\eta_0 \circ \gamma_i} - \sum_{i=1}^K d_i \phi_{\delta_i} = 0$. Thanks to the linear independence of any subset of size $2K$ in $\mathcal{D}$, for any $\gamma_i$ there exists $\delta_j$ such that $\phi_{\eta_0 \circ \gamma_i} = \phi_{\delta_j}$. Indeed, if this is not the case, we could write $\phi_{\eta_0 \circ \gamma_i}$ as a linear combination of $2K-1$ atoms in $\mathcal{D}$ that are all different from $\phi_{\eta_0 \circ \gamma_i}$  and that all belong to $\mathcal{D}$ thanks to assumption $(A_2)$. This contradicts the assumption that any subset of $2K$ atoms in $\mathcal{D}$ is linearly independent. Then, since the mapping $\gamma \mapsto U(\gamma) \phi$ is one-to-one function thanks to our dictionary design assumption, we have $\eta_0 \circ \gamma_i = \delta_j$. Thus, $\eta_0 = \delta_j \circ \gamma_i^{-1} \in \mathcal{T}_a^{p,q}$ and $d_a(p,q) = \min_{\eta \in \mathcal{T}_a^{p, q}} \left\| U(\eta) p - q \right\|_2 = d(p,q) = 0$.
\end{proof}



We can make the following remark about the design of the dictionary. The linear independence assumption guarantees that, when two $K$-sparse signals are equal, they have at least one atom in common\footnote{The linear independence of any subset of size $2K$ in the dictionary actually guarantees a stronger result: it guarantees that any $K$-sparse signal has a unique decomposition in $\mathcal{D}$ \cite{donoho2003optimally}. In other words, it guarantees that when two $K$-sparse signals are equal, \textit{all} the atoms are equal.}. If this condition is violated, the patterns $U(\eta_0) p$ and $q$ can have several decompositions in the dictionary with disjoint supports. In this case, all the features of the transformed pattern $U(\eta_0) p$ and $q$ are distinct, which generally lead to $d_a(p,q) \neq d(p,q)$. Note that this assumption appears in many problems related to overcomplete dictionaries since it guarantees the uniqueness of $K$-sparse decompositions \cite{davenport2011introduction, Candes05, tropp2004greed}. 

Finally, since Proposition \ref{prop:prop1} ensures that $E(p,q) = 0$ for an exactly transformed pattern, and we have $E'(p, q, I_1, I_2) \approx E(p,q)$ when the sparse approximation errors are not too large (Assumption $(A_1)$), we can guarantee that the registration error $E'(p, q, I_1, I_2)$ is small in this case.

\subsection{Registration performance in the general case}

\subsubsection{Bound on the registration error}

We now study the performance of our registration algorithm in the general case. The previous result only applies to an ideal scenario since the condition $d(p,q) = 0$ is rarely satisfied in practice. There is usually some slight innovation between the images (other than a transformation in $\mathcal{T}$), which result in a distance $d(p, q)$ that is non-zero. In addition, even when the original images are exactly related by a global transformation (i.e., $d(I_1, I_2) = 0$), there is no guarantee that the sparse approximations are can be perfectly aligned (i.e., $d(p,q) = 0$) due to the discretization of the dictionary. 

We study the general case where where the sparse image approximations $p$ and $q$ have differences that cannot be explained by a global geometric transformation in $\mathcal{T}$. In more detail, when $c$ and $d$ denote respectively the coefficient vectors for patterns $p$ and $q$ following Eq. (\ref{eq:sparseapprox}), we suppose that there exists a real number $\epsilon > 0$ such that $d(p,q) < \epsilon \sqrt{ \| c \|_2^2 + \| d \|_2^2 }$. The quantity $\epsilon$ therefore measures the normalized innovation between $p$ and $q$.


We now turn to the main result of our paper, which is formulated in Theorem \ref{th:main_theorem}. This result relates the error of the registration algorithm in Algorithm \ref{alg:registration_algo} to the properties of the dictionary, namely the \emph{Robust Linear Independence (RLI)} and the \emph{transformation inconsistency}. It reads as follows.

\vspace{4mm}
\begin{theorem}
\label{th:main_theorem}
\quad
If $d(p,q) < \epsilon \sqrt{\| c \|_2^2 + \| d \|_2^2}$ with $\epsilon > 0$, then:
\begin{align*}
E(p,q) \leq \alpha \rho \min \left( \| c \|_1, \| d \|_1 \right), 
\end{align*}
when $\mathcal{D}$ is $\left(2K, \epsilon, \alpha \right)$-RLI for some $\alpha \in [0, \sqrt{2})$, and $\rho$ is the transformation inconsistency of $\mathcal{D}$.
\end{theorem}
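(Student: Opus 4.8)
The plan is to build directly on the mechanism of Proposition \ref{prop:prop1} and to make each of its steps quantitative. In the exact case the vanishing residual $U(\eta_0)p - q = 0$ is an exact linear dependence among the (at most) $2K$ atoms $\{\phi_{\eta_0\circ\gamma_i}\}\cup\{\phi_{\delta_j}\}$, and $2K$-linear independence forces two of them to coincide. Here the residual does not vanish but is small: setting $r = U(\eta_0)p - q = \sum_{i=1}^K c_i \phi_{\eta_0\circ\gamma_i} - \sum_{j=1}^K d_j\phi_{\delta_j}$, assumption $(A_2)$ places all these atoms in $\mathcal{D}$, the coefficient vector of this $2K$-term expansion has $\ell_2$ norm exactly $\sqrt{\|c\|_2^2+\|d\|_2^2}$, and $\|r\|_2 = d(p,q) < \epsilon\sqrt{\|c\|_2^2 + \|d\|_2^2}$. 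This is precisely the approximate-linear-dependence regime, so I would invoke the \emph{robust linear independence} of $\mathcal{D}$ — which I read as the statement that any $2K$-term expansion whose norm falls below $\epsilon$ times its coefficient norm must contain two atoms within $L^2$-distance $\alpha$ — to extract indices $i_0,j_0$ with $\|\phi_{\eta_0\circ\gamma_{i_0}} - \phi_{\delta_{j_0}}\|_2 \le \alpha$. The sign pattern (the $c_i$ enter positively, the $d_j$ negatively) is what should force this near-collision to be \emph{cross-group}, one atom from each pattern, which is exactly what produces a usable candidate transformation; a genuine exact collision $\eta_0\circ\gamma_{i_0} = \delta_{j_0}$ would already place $\eta_0$ in $\mathcal{T}_a^{p,q}$ and give $E(p,q)=0$.

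From the near-collision I would form the candidate $\hat\eta = \delta_{j_0}\circ\gamma_{i_0}^{-1}\in\mathcal{T}_a^{p,q}$ and bound the error by comparing $\hat\eta$ against $\eta_0$. Since $\hat\eta$ is feasible for $(\hat P)$, a triangle inequality gives
\begin{align*}
E(p,q) = d_a(p,q) - d(p,q) \le \|U(\hat\eta)p - q\|_2 - \|U(\eta_0)p - q\|_2 \le \|U(\hat\eta)p - U(\eta_0)p\|_2.
\end{align*}
Expanding $p$, using $c_i \ge 0$ and the unitarity and homomorphism property of $U$, this is at most $\sum_{i=1}^K c_i \|\phi_{\hat\eta\circ\gamma_i} - \phi_{\eta_0\circ\gamma_i}\|_2$. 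Writing $\beta = \hat\eta^{-1}\circ\eta_0$, each summand equals $\|\phi_{\gamma_i} - \phi_{\beta\circ\gamma_i}\|_2$, and at the reference index it equals $\|\phi_{\hat\eta\circ\gamma_{i_0}} - \phi_{\eta_0\circ\gamma_{i_0}}\|_2 = \|\phi_{\delta_{j_0}} - \phi_{\eta_0\circ\gamma_{i_0}}\|_2 \le \alpha$.

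The role of the \emph{transformation inconsistency} $\rho$ is then to propagate control from the single reference atom $\gamma_{i_0}$ to all atoms: I read $\rho$ as the worst-case ratio $\|\phi_\gamma - \phi_{\beta\circ\gamma}\|_2 / \|\phi_{\gamma'} - \phi_{\beta\circ\gamma'}\|_2$ over transformations $\beta$ and indices $\gamma,\gamma'$, measuring how unevenly one transformation displaces different features. With this reading every summand is at most $\rho\,\|\phi_{\gamma_{i_0}} - \phi_{\beta\circ\gamma_{i_0}}\|_2 \le \alpha\rho$, so that $E(p,q)\le \alpha\rho\sum_i c_i = \alpha\rho\|c\|_1$. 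Finally I would run the symmetric argument with $p$ and $q$ exchanged — aligning $q$ to $p$ via $\eta_0^{-1}$, whose atoms lie in $\mathcal{D}$ by the second line of $(A_2)$, and inverting the resulting candidate so that it lands back in $\mathcal{T}_a^{p,q}$ — to obtain the companion bound $\alpha\rho\|d\|_1$; taking the smaller of the two yields $\alpha\rho\min(\|c\|_1,\|d\|_1)$.

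I expect the main obstacle to be the clean interface between the two dictionary properties. First, one must ensure the near-collision produced by robust linear independence is cross-group rather than internal to $\{\phi_{\eta_0\circ\gamma_i}\}$ or $\{\phi_{\delta_j}\}$; this is where the non-negativity of the coefficients and of $\phi$ (hence $\langle\phi_\gamma,\phi_\delta\rangle\ge 0$, consistent with the range $\alpha<\sqrt 2$) should enter, and it is the step most likely to demand a careful argument or a mild extra hypothesis. Second is the bookkeeping that reduces the $2K$ formal atoms to a set of \emph{distinct} atoms with a well-defined coefficient vector before robust linear independence can be applied — near or exact collisions among the $\gamma$'s or $\delta$'s must be merged, and one must check the residual-to-coefficient ratio stays below $\epsilon$ after merging (the only delicate case, an exact cross-group collision, already gives $E(p,q)=0$). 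The quantitative heart is then verifying that the constant genuinely collapses to the product $\alpha\rho$ and that $\alpha<\sqrt 2$ is the correct non-vacuous range; the remaining manipulations are the routine triangle-inequality and unitarity steps indicated above.
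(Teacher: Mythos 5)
Your proposal is correct and follows essentially the same route as the paper's proof in Appendix A: the triangle-inequality reduction to $\left\| U(\hat{\eta}) p - U(\eta_0) p \right\|_2$, the application of the $(2K,\epsilon,\alpha)$-RLI property to the concatenated expansion of $U(\eta_0)p - q$ with coefficients $(c,-d)$, the use of non-negativity of coefficients and atoms to force the near-cancellation to be cross-group (exactly the paper's $\sqrt{2}$ argument), the propagation from the reference pair to all atoms via the transformation inconsistency $\rho$, and the symmetry in $p$ and $q$ to obtain the $\min(\|c\|_1, \|d\|_1)$ bound. The only differences are cosmetic (you extract the near-colliding pair directly from RLI rather than defining it as an argmin first, and you invoke $\rho$ after RLI rather than before), and your flagged "delicate cases" (exact cross-collision giving $E(p,q)=0$, distinctness of the $2K$ atoms) are handled exactly as the paper handles them.
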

\vspace{4mm}

Theorem \ref{th:main_theorem} shows that robust linear independence with a small $\alpha$ and a small transformation inconsistency are key properties of the dictionary in order to guarantee the success of our algorithm. The RLI property can be thought as an extension of the linear independence assumption to the case where $d(p,q) \neq 0$. Specifically, it guarantees the existence of two approximately similar features in $U(\eta_0) p$ and $q$ when $d(p,q)$ is small. The transformation inconsistency captures the fact that geometrical transformations have a  different effect on distinct atoms in the dictionary. We defer the proof of Theorem \ref{th:main_theorem} to Appendix \ref{app:proof_main_theorem}, and we study in details in the rest of this section the novel RLI and transformation inconsistency properties.

\subsubsection{Robust linear independence}

We study now in more detail the novel dictionary properties. We first show that the linear independence assumption introduced in Section \ref{sec:performance_exactly_transformed_patterns} is no longer sufficient to bound the registration performance in the case where $d(p,q) \neq 0$ (but close to zero). To see this, we construct a linearly independent dictionary $\mathcal{D}$ and two sparse patterns $p$ and $q$ for which $d(p,q)$ can be made arbitrarily close to zero (i.e., $\epsilon \rightarrow 0$) yet the registration error is large. As illustrated in Fig. \ref{fig:non_rli}, we consider a dictionary $\mathcal{D}$ containing four square atoms and an additional big square atom parametrized by its position $\kappa$ with respect to $\phi_{\gamma_1}$. Clearly, when $\kappa \neq 0$, the dictionary $\mathcal{D}$ is linearly independent since one cannot write an atom as a linear combination of the four other atoms. We consider the patterns $p = \frac{1}{2} \sum_{i=1}^4 \phi_{\gamma_i}$ and $q = \phi_{\gamma_5}$. 
When $\kappa$ is small, the transformation that best aligns $p$ and $q$ is the identity transformation\footnote{If we look among all possible transformations, the optimal transformation $\eta_0$ is a translation that exactly aligns $p$ and $q$. However, this transformation does not satisfy  the assumptions in Eq. (\ref{eq:assumption_2}) and (\ref{eq:assumption_3}). To illustrate the main issue here, we consider only transformations that satisfy these assumptions. For small $\kappa$, the optimal transformation is therefore the identity.}. 
All relative transformations between features in $p$ and $q$ are however dilations composed with translations, which result in an estimated transformation $\hat{\eta}$ in our algorithm that is significantly different from the identity. Hence we obtain a large registration error $d_a(p,q) - d(p,q)$ in this example. This example shows that the linear independence assumption defined in Section \ref{sec:performance_exactly_transformed_patterns} is fragile: it does not allow us to bound the registration error even when $d(p,q)$ is very small. One needs a more \emph{robust} condition 
in order to guarantee a small registration error even in cases where the innovation between images is small (but nonzero). 

\begin{figure}[ht]
\centering
\includegraphics[width=0.25\textwidth]{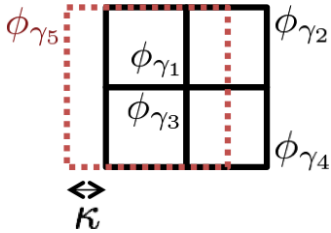}
\caption{\label{fig:non_rli}Example of a linearly independent dictionary $\mathcal{D}$ that induces a large registration error $d_a(p,q) - d(p,q)$, when $p = 1/2 (\phi_{\gamma_1} + \phi_{\gamma_2} + \phi_{\gamma_3} + \phi_{\gamma_4})$ and $q = \phi_{\gamma_5}$. We note that for $\epsilon > \sqrt{\kappa}$, this dictionary is not RLI unless $\alpha \geq 1$. Note that $ \epsilon$ can be made very small since $\kappa$ can be chosen to be any positive real number. }
\end{figure}
Therefore, we propose to extend the notion of linear independence to a novel property called \emph{robust linear independence (RLI)} to characterize sets of vectors. It is formally defined as follows.

\vspace{4mm}

\begin{definition}
\quad Let $\left( H, \| \cdot \| \right)$ be a normed space and $K \geq 1$. A family of vectors $(v_1, \dots, v_K) \in H^K$ is $(\epsilon, \alpha)$-\emph{robustly linearly independent} (RLI) if the following implication holds for any vector $a \in \mathbb{R}^K$: 
\begin{align}
\label{eq:rli}
\left\| \sum_{i=1}^K a_i v_i \right\|  < \epsilon \| a \|_2 \implies \exists i, j \text{ with } a_i, a_j \neq 0, \left\| \frac{a_i v_i}{\left\| a_i v_i \right\|} + \frac{a_j v_j}{\left\| a_j v_j \right\|} \right\| \leq \alpha.
\end{align}
\label{def:rli}
\end{definition}
 \vspace{4mm}

In other words, when $\epsilon$ and the parameter $\alpha$ are small, any linear combination of vectors that nearly vanishes in a RLI vector set contains at least two vectors that approximately cancel each other. 

We now discuss the relation between RLI and linear independence. While linear independence prevents having collinear vectors, it is natural in our registration framework to allow collinear vectors in the dictionary since they represent essentially the same feature. Specifically, as the underlying transformation parameter of collinear atoms is the same, selecting one atom or the other is not important for the purpose of registration\footnote{Note that this is in contrast to recovery problems (e.g., compressed sensing) where collinear vectors (in the measurement matrix) are not allowed, since it will not be possible then to recover the active component of the signals.}. The notion of linear independence where collinear vectors are allowed can be written as follows. For any $a \in \mathbb{R}^K$ such that $a \neq 0$,   
\begin{align*}
\sum_{i} a_i v_i = 0 \implies \exists i, j \text{ with } a_i, a_j \neq 0, \frac{a_i v_i}{\left\| a_i v_i \right\|} + \frac{a_j v_j}{\left\| a_j v_j \right\|} = 0.
\end{align*}
Note that this essentially corresponds to the notion of robust linear independence in the case where $\alpha, \epsilon = 0$. Since we want to study the behavior of the algorithm for \emph{nonzero} innovation between the images, we naturally extend the notion of linear independence (where collinear vectors are allowed) to Definition \ref{def:rli}; if a linear combination of vectors has a small magnitude (where $\epsilon$ quantifies the magnitude), there exist two vectors that approximately cancel each other (where $\alpha$ quantifies this approximation). Note that, for a fixed $\alpha$, the RLI gets harder to satisfy for a larger $\epsilon$. In addition, for a fixed $\epsilon$, the condition is harder to satisfy for a smaller $\alpha$. 

The following toy example illustrates the notion of robust linear independence in $\mathbb{R}^3$. 

\begin{example}
\quad Consider the setting of Figure \ref{fig:rli_R3} with $\theta = \varphi = \pi/20$. Then, for $\epsilon = 0.2$, we have:
\begin{enumerate}
\item $(e_1, e_2, v)$ is RLI with $\alpha = 0.2$.
\item $(e_1, e_2, v')$ is \emph{not} RLI  unless $\alpha \geq 0.78$.
\end{enumerate}
\begin{figure}[ht]
\centering
\includegraphics[width=0.25\textwidth]{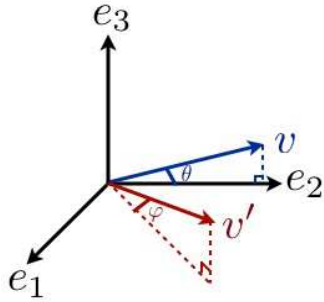}
\caption{\label{fig:rli_R3} Illustration of robust linear independence property in $\mathbb{R}^3$.} 
\end{figure}
\label{ex:example_rli_R3}
\end{example}
The proof of Example \ref{ex:example_rli_R3} is straightforward from simple trigonometry.
The set of vectors $(e_1, e_2, v)$ has a better behavior in terms of robust linear independence than $(e_1, e_2, v')$. The underlying reason is that $v$ is very close to the vector $e_2$ (i.e., $\| e_2 - v \|_2$ is close to zero), while $v'$ is close to a \emph{linear combination} of $e_1$ and $e_2$ (but not to $e_1$ or $e_2$).  While it is acceptable to have vectors that are close to each other, the RLI property prevents having a vector that is close to a linear combination of the other vectors. This can also be readily seen in the example of Fig. \ref{fig:non_rli}



The definition of robust linear independence can be extended to dictionaries as follows.

\vspace{1mm}
\begin{definition}
\quad
A dictionary $\mathcal{D}$ is $(K, \epsilon, \alpha)$-RLI if any subset of size $K$ in $\mathcal{D}$ is $(\epsilon, \alpha)$-RLI.
\end{definition}

The dictionary in the example of Fig. \ref{fig:non_rli} is \emph{not} $K$-RLI for $K=5$, with small $\epsilon$ (unless $\alpha$ is large). Indeed, by choosing a vector of coefficients $a = [0.5, 0.5, 0.5, 0.5, -1]^T$, we obtain $\left\| \sum_i a_i v_i \right\|_2 \approx 0$, yet $\alpha = 1$. Note that the RLI property on the dictionary has to be satisfied in order to obtain a good registration performance, as it ensures the existence of two approximately similar features (in the $L^2$ sense) in $U(\eta_0) p$ and $q$, when $d(p,q)$ is small. 


We study now in more detail the RLI property on dictionaries. In particular, we examine the main difference between RLI and the well known Restricted Isometry Property (RIP) \cite{Candes05}. The restricted isometry condition assumes that a collection of vectors behaves almost like an orthonormal system but only for sparse linear combinations. Specifically, the RIP with constant $\delta_K$ implies that any linear combination of $K$ elements in the dictionary satisfies:
\begin{align*}
\left\| \sum_{i=1}^K a_i v_i \right\|^2 \geq (1 - \delta_K) \left\| a \right\|_2^2  
\end{align*}
By imposing a RIP property on the dictionary $\mathcal{D}$ with $\delta_K \ll 1$ , the norm of any sparse linear combination of atoms is guaranteed to be large (i.e., larger than $\sqrt{(1 - \delta_K) \| a \|_2}$). In our case, contrarily to the RIP, we are interested in linear combinations of atoms that nearly vanish. The RLI property imposes in this case the existence of two atoms that approximately cancel each other in the signal support. Consequently, RLI can be seen as a weak form of RIP, where we allow the norm of linear combinations to be close to zero provided that two atoms approximately cancel each other in the sense of Eq. (\ref{eq:rli}). 
In particular, any dictionary $\mathcal{D}$ that satisfies the RIP property with a parameter $\delta_K$ will be ($K$, $\sqrt{1-\delta_K}$, $0$)-RLI. Indeed, since $\left\| \sum_{i=1}^K a_i v_i \right\| \geq \sqrt{1 - \delta_K} \| a \|_2$ holds for any subset of $K$ dictionary elements, the left hand side of Eq. (\ref{eq:rli}) cannot be satisfied when $\epsilon = \sqrt{1 - \delta_K}$. 

Let us consider a simple example to compare the new RLI property with the common ways of characterizing dictionaries, namely, the coherence \cite{tropp2004greed} and the restricted isometry property \cite{Candes05}\footnote{Even though the definitions of RIP and coherence are originally for vectors in $\mathbb{R}^N$, we consider here a straightforward extension of the definitions of RIP and coherence to the case where vectors are in $L^2$.}.

\vspace{4mm}
\begin{example}[Dictionary of translated box functions]
\label{prop:rli_example}
Let $H = L^2(\mathbb{R})$ and define the box function $$v(t) = \begin{cases}  1, & \text{if } t \in [0,1] \\
  																													0, & \text{otherwise.}
  																					                    \end{cases}$$ 
We consider the infinite-size dictionary $\mathcal{D}_{\text{box}} = \{ T_{\tau} v = v_{\tau}: \tau \in \mathbb{R} \}$, where $T_\tau$ is the translation operator by $\tau$. The dictionary has the following properties:

\begin{itemize}
\item $\mathcal{D}_{\text{box}}$ is RIP with a constant $\delta_K(\mathcal{D}_{\text{box}})$ equal to 1, for any $K \geq 2$.
\item The coherence of $\mathcal{D}_{\text{box}}$ is equal to 1.
\item $\mathcal{D}_{\text{box}}$ is $\left(K, \epsilon, \epsilon \sqrt{\frac{2}{3} (4^K-1)}\right)$-RLI for $K \geq 1$ and $\epsilon \in \left(0, \sqrt{\frac{3}{4^K-1}} \right)$.
\end{itemize}

\end{example}
\vspace{4mm}

As the proof of the robust linear independence of $\mathcal{D}_{\text{box}}$ is rather technical and not essential to the main understanding of the paper, it is given in Appendix \ref{app:proof_example_rli}.

Even if the dictionary $\mathcal{D}_{\text{box}}$ hardly satisfies the RIP and is highly coherent, it is still an interesting one in our framework. Indeed, it satisfies the key property that two sparse signals that are close in the $L^2$ sense have at least two approximately similar features. When applied to our registration problem, this guarantees the existence of two features that are related approximately by a transformation $\eta_0$ in the $L^2$ sense\footnote{More precisely, this means that there exists a $\gamma_i$ and a $\delta_j$ such that $\| U(\eta_0) \phi_{\gamma_i} - \phi_{\delta_j} \|_2$ is small.} when $d(p,q)$ remains small. This property is at the core of our registration algorithm since we infer the global transformation by looking at the relative transformations between the features.

We finally stress the differences between the proposed RLI property and other dictionary properties as the RIP, coherence or more recently the properties introduced in \cite{candes2011compressed, giryes2012greedy, peleg2012performance}. While the latter properties are specifically designed for the task of  \textit{signal recovery}, the proposed RLI property is introduced in the context of image registration. This explains in particular why a dictionary can be well-behaved in terms of RLI property despite having coherent atoms. In contrast, coherent columns are forbidden in the context of recovery problems (e.g., compressed sensing) as it is then difficult to distinguish between similar components in the signal reconstruction. 
 




%


\subsubsection{Transformation inconsistency}

The second dictionary property that is important to study the performance of our algorithm is \emph{the transformation inconsistency}, which measures the difference in the effect of the same transformation on distinct atoms in the dictionary. It is formally defined as follows for parametric dictionaries given by Eq. (\ref{eq:dictionary}).

\vspace{4mm}

\begin{definition}
\quad The transformation inconsistency $\rho$ of a parametric dictionary $\mathcal{D}$ is equal to:
\begin{align*}
\rho = \sup_{\gamma, \gamma' \in \mathcal{T}_d} \sup_{\eta \in \mathcal{T} \backslash \{\mathbb{I} \}} \left\{ \frac{\| U(\eta) \phi_{\gamma'} - \phi_{\gamma'} \|_2}{\| U(\eta) \phi_{\gamma} - \phi_{\gamma} \|_2} \right\},
\end{align*}
\label{def:inconsistency}
\end{definition}
where $\mathbb{I}$ is the identity transformation. The transformation inconsistency $\rho$ is always larger than or equal to $1$. Furthermore, when $\mathcal{T}$ is commutative, the transformation inconsistency takes it minimal value and is equal to $1$. Indeed, for any $\gamma, \gamma'$ in $\mathcal{T}_d$ and $\eta \in \mathcal{T}$, we have:
\begin{align*}
\frac{\| U(\eta) \phi_{\gamma'} - \phi_{\gamma'} \|_2}{\| U(\eta) \phi_{\gamma} - \phi_{\gamma} \|_2} & = \frac{\| U(\gamma') ( \phi_{\eta} - \phi ) \|_2}{\| U(\gamma) ( \phi_{\eta} - \phi ) \|_2}  = \frac{\| \phi_{\eta} - \phi \|_2}{\| \phi_{\eta} - \phi \|_2}  = 1.
\end{align*}
Hence, taking the supremum over all $\eta \in \mathcal{T}$ and atoms $\gamma, \gamma'$ in $\mathcal{T}_d$ results in having $\rho = 1$. This is expected since when $\mathcal{T}$ is commutative, a fixed transformation acts on all atoms similarly. 

On the other hand, a large value of the transformation inconsistency $\rho$ (i.e., $\rho \gg 1$) means that there exist two atoms in the dictionary that are affected in a very different way when they are subject to the same transformation. The transformation inconsistency plays a key role in our registration algorithm. Indeed, as the global transformation between two sparse patterns is estimated from one of the relative transformations between features, it is preferable that transformations act in a similar way on all the features of the sparse patterns for more consistent registration. That means that dictionaries with small transformation inconsistency provide better registration performance. 

In order to outline the importance of this novel property in our registration framework, we give a few illustrative examples of dictionaries with different transformation inconsistency parameters.

\vspace{4mm}

\begin{example}[Dictionary with quasi isotropic mother function, $\mathcal{T} = SE(2)$]
We consider $\mathcal{T}$ to be the Special Euclidean group ($\mathcal{T} = SE(2)$). That is, $\mathcal{T}$ accounts for translations, rotations and combinations of those. We consider an ellipse-shaped mother function $\phi$ as shown in Figure \ref{fig:example1_rho} (a) with anisotropy $r = \frac{l}{L}$. 
Then, we suppose for the sake of simplicity that $\mathcal{T}_d = \mathcal{T}$ (i.e., the dictionary is built by applying all transformations $\gamma \in \mathcal{T}$ to the generating function $\phi$). 

We illustrate in Fig \ref{fig:example1_rho} (b) the effect of transformation $\eta$, which is a simple rotation, on two different atoms with parameters $\gamma$ and $\gamma'$ positioned at different points in the 2D plane. While the rotation of the atom parametrized by $\gamma$ induces a very slight change on it (when $r \approx 1$), the same rotation applied on the atom $\phi_{\gamma'}$ changes completely its position. This is due to the fact that translations and rotations do not commute. Hence, the transformation $\eta$ has a very different impact on atoms $\phi_{\gamma}$ and $\phi_{\gamma'}$, and we get $\rho \rightarrow \infty$ from Definition \ref{def:inconsistency}. Therefore, when the generating function $\phi$ approaches isotropy, the transformation inconsistency grows to infinity.

In this example, our registration algorithm is not guaranteed to have a small error. To illustrate it, let us consider the patterns $p$ and $q$ illustrated in Fig. \ref{fig:example1_rho} (c), which are each composed of two atoms whose coefficients are all equal. The distance $d(p,q)$ between the patterns can be made arbitrarily small with a generating function that is close to isotropic (i.e., $r \rightarrow 1$) while the minimal distance $d_a(p,q)$ in our algorithm remains large. Indeed, since our algorithm considers only relative transformations between pairs of atoms, the estimated global transformation between the patterns can only be equal to a combination of a translation and rotation of $\frac{\pi}{2}$. However, when $r \approx 1$, the optimal transformation is clearly the identity, which cannot be selected with our algorithm: this results in a large registration error $d_a(p,q) - d(p,q)$. Note that the error here is entirely related to the fact that the transformation inconsistency $\rho$ is large, and not to the RLI property since the dictionary under consideration here is robustly linearly independent for small values of the sparsity $K$.
\label{ex:example_rho1}

\end{example}

\vspace{4mm}

\begin{figure}[ht]
\centering
\includegraphics[width=0.7\textwidth]{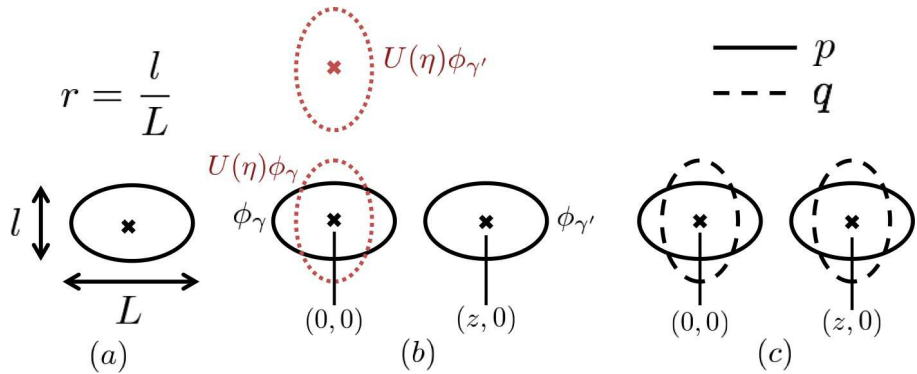}
\caption{\label{fig:example1_rho} Example of a dictionary where the transformation inconsistency $\rho$ is large. (a): Mother function of the dictionary with anisotropy $r = \frac{l}{L}$. (b): Atoms $\phi_{\gamma}$, $\phi_{\gamma'}$, and a transformation $\eta$ that leads to a large transformation inconsistency $\rho$ . (c): Examples of patterns $p$ (atoms represented with solid line) and $q$ (atoms represented with dashed line) where our algorithm has a large registration error $d_a(p,q) - d(p,q)$.}
\end{figure}

\vspace{4mm}
\begin{example}[Dictionary built on an elongated mother function, $\mathcal{T} = SE(2)$]
Similarly to the previous example, we consider the transformation group $\mathcal{T} = SE(2)$ and that $\mathcal{T}_d = \mathcal{T}$. However, the dictionary is now built on an elongated mother function as shown in Fig \ref{fig:example2_rho} (a). As in the previous example, we can make the transformation inconsistency $\rho$ very large by taking elongated atoms (large $L$) 
and a transformation $\eta$ that is a small translation, as shown in Fig \ref{fig:example2_rho} (b). It is again possible to construct an example where the registration algorithm performs poorly (see Fig \ref{fig:example2_rho} (c)) : the set $\mathcal{T}_a^{p,q}$ of transformations between features in each sparse pattern contains only translations and rotations of $\frac{\pi}{2}$. Therefore, any candidate transformation $\eta \in \mathcal{T}_a^{p,q}$ results in a large value of the global registration error term $\| U (\eta) p - q \|_2$; the optimal global transformation is the identity in this case, which leads to a small value of the minimal distance $d(p,q)$ between the patterns when $L$ is large.
\label{ex:example_rho2}
\end{example}

\begin{figure}[ht]
\centering
\includegraphics[width=0.7\textwidth]{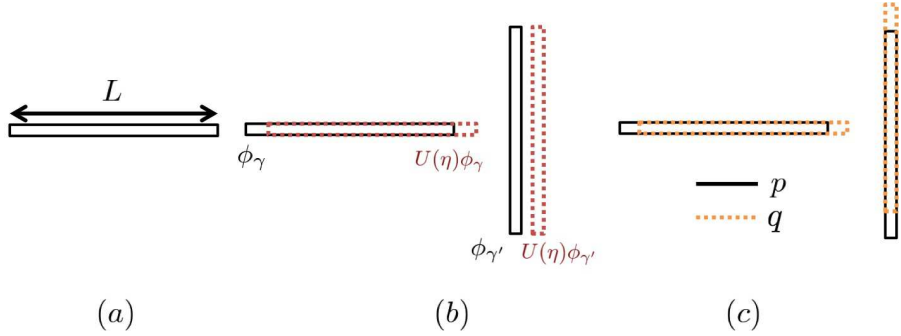}
\caption{\label{fig:example2_rho} Example of a dictionary where the transformation inconsistency $\rho$ is large. (a): Mother function of the dictionary, where $L$ is the length of the atom. (b): Atoms $\phi_{\gamma}$, $\phi_{\gamma'}$, along with the results of a transformation $\eta$ that causes the transformation inconsistency $\rho$ to be large. (c): Examples of patterns $p$ (atoms represented with solid line) and $q$ (atoms represented with dashed line) where our algorithm has a large registration error $d_a(p,q) - d(p,q)$.}
\end{figure}
\vspace{4mm}

To be complete, we should note that the one-to-one mapping assumption defined in Section \ref{sec:probform} for the function $\gamma \mapsto U(\gamma) \phi$ is not satisfied in Example \ref{ex:example_rho1} and Example \ref{ex:example_rho2}, since $\phi$ has a rotational symmetry of $\pi$. In this case, a slightly more complicated definition of the transformation inconsistency $\rho$ has to be made to avoid having $\rho = \infty$ (with the definition of $\rho$ given in Definition \ref{def:inconsistency}, we obtain $\rho = \infty$ by setting $\eta$ to be a rotation of $\pi$, $\gamma$ to be the identity and choosing any $\gamma'$ different from $\gamma$). The main intuitions of the transformation inconsistency $\rho$, as defined in Definition \ref{def:inconsistency} however hold when $\phi$ has a finite number of symmetries. We study in detail the generalization of the transformation inconsistency $\rho$ to the case where $\phi$ has symmetries in $\mathcal{T}$ in Appendix \ref{app:not_bijective}. 

\vspace{4mm}
\begin{example}[Dictionary built with translation and isotropic dilations, $\mathcal{T} = \mathcal{T}_d = \mathbb{R}^2 \times \mathbb{R}^+ _{*}$]
In this example, we let $\mathcal{T}$ to be the group of translations and isotropic dilations. The generating function of the dictionary could have any form, as long as its support is much smaller than the dimension of the image. For example, we can choose a circle-shaped mother function, as depicted in Fig \ref{fig:example3_rho} (a). Then, we consider the scenario where the two atoms $\phi_{\gamma}$ and $\phi_{\gamma'}$ are separated by $z$ (where $z$ is considered to be very large) as illustrated in Fig.\ref{fig:example3_rho} (b). A transformation $\eta$ that consists of a small isotropic dilation has a very different effect on both atoms since translations and dilations do not commute. In particular, the transformation $\eta$ applied to $\phi_{\gamma'}$ results in an atom that has no intersection with $\phi_{\gamma'}$, while the same transformation has almost no effect on $\phi_{\gamma}$, i.e., $U(\eta) \phi_{\gamma} \approx \phi_{\gamma}$. Thus, the transformation inconsistency is very high and $\rho \approx \infty$ according to Definition \ref{def:inconsistency}. In Fig.  \ref{fig:example3_rho} (c), we illustrate why this may cause a problem in our registration algorithm: we consider the two sparse patterns $p$ and $q$ composed of two features each, where the coefficients of all the atoms are equal. It is not hard to see that the optimal global transformation between both patterns is the identity. At the same time, our algorithm can only estimate a global transformation that is a dilation (combined possibly with a translation) since all transformations between pairs of atoms in $p$ and $q$ consist in combinations of dilation and translation. 
\label{ex:example_rho3}
\end{example}

\begin{figure}[ht]
\centering
\includegraphics[width=0.7\textwidth]{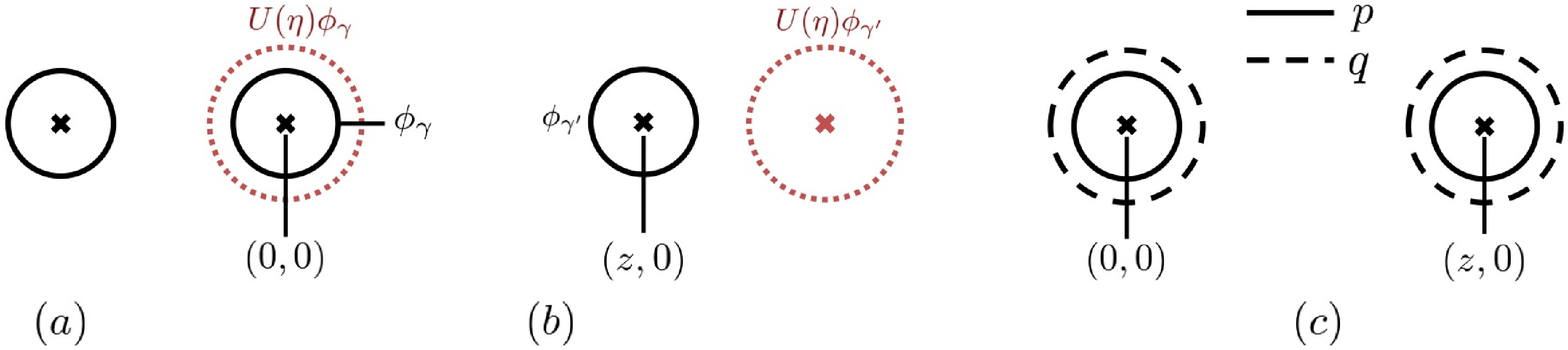}
\caption{\label{fig:example3_rho} Example of a dictionary where the transformation inconsistency $\rho$ is large. (a): Mother function of the dictionary (b): Atoms $\phi_{\gamma}$, $\phi_{\gamma'}$, and transformation $\eta$ that causes $\rho$ to be large. (c): Examples of patterns $p$ (atoms represented with solid line) and $q$ (atoms represented with dashed line) where our algorithm has a large registration error $d_a(p,q) - d(p,q)$.}
\end{figure}

Overall, the above examples suggest that, whenever the transformation inconsistency of the dictionary is large, one may construct an example where our registration algorithm approximates poorly the transformation invariant distance. It is worth mentioning that even though the previous examples consider localized atoms with finite support, our approach is not constrained to such atoms. In the general setting where $\mathcal{T}$ is any transformation group (and $\mathcal{T}_d = \mathcal{T}$ for the sake of simplicity), such example of failure could be constructed as follows. The basic idea is to build two patterns $p$ and $q$ of the form $p = \phi_{\gamma} + \phi_{\gamma'}$ and $q = U(\eta_1) \phi_{\gamma} + U(\eta_2) \phi_{\gamma'}$ for which: (i) $p \approx q$, (ii) $\| U(\eta_1) p - q \|_2$ and $\| U(\eta_2) p - q \|_2$ are large (with respect to $\| p - q \|_2$). The optimal transformation between $p$ and $q$ is then simply the identity, whereas the transformations considered in our algorithm (namely $\eta_1$ and $\eta_2$, along with $\eta_1 \circ \gamma \circ \gamma'^{-1}$ and $\eta_2 \circ \gamma' \circ \gamma^{-1}$) result in a poor registration performance as they all differ from the identity transformation.  

In more details, when $\rho \gg 1$ we know that there exist two atoms $\phi_{\gamma}$ and $\phi_{\gamma'}$ with $\gamma \in \mathcal{T}$ and $\gamma' \in \mathcal{T}$, along with a transformation $\eta_1$ for which $\| U ( \eta_1 ) \phi_{\gamma} - \phi_{\gamma} \|_2 \approx 0$ while $\| U ( \eta_1 ) \phi_{\gamma'} - \phi_{\gamma'} \|_2$ is large. By posing $\eta_2 = (\gamma' \circ \gamma^{-1}) \circ \eta_1 \circ (\gamma \circ (\gamma')^{-1})$, we get that $\| U(\eta_2) \phi_{\gamma'} - \phi_{\gamma'} \|_2 = \| U(\eta_1) \phi_{\gamma} - \phi_{\gamma} \|_2 \approx 0$. Hence, the norm $\| p - q \|_2$ is necessarily small since $\| p - q \|_2 = \| \phi_{\gamma} + \phi_{\gamma'} - U(\eta_1) \phi_{\gamma} - U(\eta_2) \phi_{\gamma'} \|_2 \leq \| \phi_{\gamma} - U(\eta_1) \phi_{\gamma} \|_2 + \| \phi_{\gamma'} - U(\eta_2) \phi_{\gamma'} \|_2$. Besides, we know by construction that $\| U ( \eta_1 ) \phi_{\gamma'} - \phi_{\gamma'} \|_2$ is large and $\| U ( \eta_2 ) \phi_{\gamma} - \phi_{\gamma} \|_2$ is also generally large since the group $\mathcal{T}$ is non commutative. This gives us, in general, large values of $\| U (\eta_1) p - q \|_2$ and $\| U(\eta_2) p - q \|_2$. This construction shows that, when the dictionary has a large inconsistency parameter, one can find patterns for which the registration algorithm fails to recover the right global transformation. 

In general, the above examples show that it is better to choose a dictionary with a small transformation inconsistency (i.e., $\rho$ small) to have good registration performance irrespectively of the patterns to be aligned.
 
\vspace{4mm}
The performance of the registration algorithm depends on the transformation inconsistency as well as on the robust linear independence of the dictionary, as shown in Theorem \ref{th:main_theorem}. The success of our registration algorithm for \textit{all} sparse signals in the dictionary is guaranteed when the RLI and transformation inconsistency conditions are satisfied. Note that the conditions on the dictionary properties are essentially tight, as one can construct an example where our algorithm fails whenever one of the parameters is large enough. The performance bound should be interpreted more in a qualitative way than a quantitative way. It provides two rather intuitive conditions for our algorithm to provide low registration error. In order to use this bound quantitatively, one has however to be able to compute explicitly the newly defined properties on generic dictionaries. We outline here the fact that such a bound could not have been established with traditional measures for characterizing dictionaries, namely coherence or restricted isometry property constant.  Finally, we remark that the result in Theorem \ref{th:main_theorem} can be used to bound the registration error $E'(p, q, I_1, I_2)$ thanks to Proposition \ref{prop:assumption_1}. The price to pay in this case is the approximation error $\| I_1 - p \|_2 + \| I_2 - q \|_2$. 

\section{Image registration experiments}
\label{sec:experimental_results} \label{sec:experiments}

In this section, we evaluate the performance of our algorithm in image registration experiments. We first describe the implementation choices in our registration algorithm. Then, we study its performance for different dictionaries and put the results in perspective with the theoretical guarantees in Section \ref{sec:theoretical_analysis}. Then, we present illustrative image registration and classification experiments with simple test images and handwritten digits. Finally, we provide some simple comparisons with baseline registration algorithms with simple features from the computer vision literature. 

\subsection{Algorithm implementation}

In all the experiments of Section \ref{sec:illustrative_examples}, we focus on achieving invariance to translation, rotation and scaling. Invariance to these transformations is indeed considered to be a minimal requirement in invariant pattern recognition. These three operations generate the \emph{group of similarities} that we denote by $\mathcal{T} = SIM(2)$. Any element in $\mathcal{T}$ is therefore indexed by 4 parameters: a translation vector $b = (b_x, b_y)$, dilation $a$ and rotation parameter $\theta$.  We describe now the sparse approximation algorithm and the dictionary design used in our experiments.

\subsubsection{Sparse approximation algorithm}

There are many methods to construct sparse approximations of images. In our experiments, we use a modified implementation of the Matching Pursuit (MP) \cite{Mallat93} algorithm, as MP is a pretty simple algorithm that works relatively well in practice. It is an iterative algorithm that successively identifies the atoms in $\mathcal{D}$ that best match the image to be approximated. More precisely, MP iteratively computes the correlation between the atoms in $\mathcal{D}$ and the signal residual, which is obtained by subtracting the contributions of the previously chosen atoms from the original image. At each iteration, the atom with the highest correlation is selected and the residual signal is updated. While the standard MP algorithm solves the sparse approximation problem without positivity constraint on the coefficients, we propose a slightly modified algorithm (that we call Non negative Matching Pursuit (NMP)) in order to select atoms that have the highest positive correlation with the residual signal. This choice is driven by the objective of having a part-based signal expansion, where each feature participate to constructing the signal representation. The NMP algorithm is formally defined in Algorithm \ref{alg:mp}. 

\begin{algorithm}[ht]
\algorithmicrequire{\hspace{1mm} image $I$, sparsity $K$, dictionary $\mathcal{D}$.}  \\ 
\algorithmicensure{\hspace{1mm} coefficients $c$, support $\Gamma$.}
\begin{algorithmic}
\STATE $\mathbf{1.}$ Initialization of the residual: $r_0 \leftarrow I$ and support: $\Gamma \leftarrow \emptyset$.
\STATE $\mathbf{2.}$ While $1 \leq i \leq K$, do:
\STATE \hspace{2mm}$\mathbf{2.1}$ Selection step:
	\begin{align*}
		\gamma_i & \leftarrow \argmax_{\gamma \in \mathcal{T}_d} \scalprod{r_{i-1}}{\phi_{\gamma}} \\
		\Gamma & \leftarrow \Gamma \cup \{ \gamma_i \}.
	\end{align*}
\STATE \hspace{2mm}$\mathbf{2.2}$ If $\scalprod{r_{i-1}}{\phi_{\gamma_i}} \leq 0$, go to $\mathbf{3}$.
\STATE \hspace{2mm}$\mathbf{2.3}$ Update step:
	\begin{align*}
			c_{i} & \leftarrow \scalprod{r_{i-1}}{\phi_{\gamma_i}} \\
			r_{i} & \leftarrow r_{i-1} - \scalprod{r_{i-1}}{\phi_{\gamma_i}} \phi_{\gamma_i}
	\end{align*}
\STATE $\mathbf{3.}$ Return $c$, $\Gamma$. 
\end{algorithmic}
\caption{Non negative Matching Pursuit (NMP) for feature extraction}
\label{alg:mp}
\end{algorithm}

One way to choose the sparsity $K$ consists in controlling the approximation error of $I_1$ and $I_2$. Specifically, we can impose a stopping criterion in the NMP algorithm of the form $\| r_K \|_2 \leq e$ where $r_K$ is the residual at iteration $K$ and $e$ is a fixed threshold controlling the approximation error. When $e$ is chosen to be small enough, this guarantees a relatively small sparse approximation error. 

Note that the complexity of NMP is governed by the selection step, hence $O(K|\mathcal{D}|)$ operations need to be performed. Besides, the complexity of solving $(\hat{P})$ using Algorithm \ref{alg:registration_algo} is $O(K^2 N)$ with  $N = \max(N_1, N_2)$ with $N_1$ and $N_2$ respectively the dimensions of the discretized images corresponding to $p$ and $q$. 
Therefore, if the sparse approximation step is necessary for registration, the complexity of the overall registration algorithm is $O(K |\mathcal{D}| + K^2 N)$. Depending on the factor $\frac{|\mathcal{D}|}{K N}$, the complexity might be governed by either step of the algorithm. Overall, the choice of $K$ results from a trade-off between approximation error (hence registration performance) and computational complexity. 
Finally, note that in applications involving the registration of a \emph{test} image with possibly many \emph{training} images, the sparse approximations of the training images are computed offline. Hence, only the sparse approximation of the test image needs to be computed during the test phase.

\subsubsection{Choice of the dictionary}
 
We discuss now the choice of the dictionary $\mathcal{D}$ that is used in our experiments. As pointed out in Eq. (\ref{eq:dictionary}), the dictionary $\mathcal{D}$ is simply constructed by applying geometric transformations $\gamma \in \mathcal{T}_d$ to a mother function $\phi$. We thus need to choose appropriately the mother function $\phi$ as well as the discretization for constructing the subset $\mathcal{T}_d$ of $\mathcal{T}$. In the light of the derived analytical results, ideally we would like to design a dictionary that satisfies the following constraints: 
\begin{itemize}
\item Images should have a good sparse approximation in the dictionary (assumption $(A_1)$ of the analysis).
\item The dictionary should be robustly linearly independent. (Theorem \ref{th:main_theorem}).
\item The transformation inconsistency parameter of the dictionary should not be too large (Theorem \ref{th:main_theorem}).
\end{itemize}

We propose to use an anisotropic Gaussian generating function as it has been shown to provide good approximation results in natural images \cite{test_article}. It is defined as follows:
\begin{align*}
\phi(x,y) = \frac{1}{\xi} \exp \left( - \left( \frac{x}{\nu} \right)^2 - y^2 \right),
\end{align*}
where $\nu > 1$ controls the anisotropy and the normalization factor $\xi$ is chosen to have $\| \phi \|_2 = 1$\footnote{Formally, the Gaussian mother function does not satisfy the one-to-one mapping assumption of $\gamma \mapsto U(\gamma) \phi$. We circumvent this by slightly modifying the definition of $\mathcal{T}_a^{p,q}$. We define the stabilizer of $\phi$ to be the set that keeps the mother function unchanged: $\mathcal{S}_\phi = \{ \gamma: U(\gamma) \phi = \phi \}$. Then, we define $\mathcal{T}_a^{p,q} = \{ \delta_i \circ \pi \circ (\gamma_j)^{-1} : 1 \leq i,j  \leq K, \pi \in \mathcal{S}_\phi \}$. For more details, refer to Appendix \ref{app:not_bijective}.}. The choice of $\nu \approx 1$ results in an isotropic mother function that causes the transformation inconsistency $\rho$ to be very large (see Example \ref{ex:example_rho1}). The transformation inconsistency is also large when the value of $\nu$ is chosen to be large  (see Example \ref{ex:example_rho2}). In our experiments, we have generally chosen an intermediate value $\nu = 4$ as a compromise between the two extreme values.

The dictionary $\mathcal{D}$ is built by transforming the generating function $\phi$ with all transformations in $\mathcal{T}_d$. 
In our experiments, we consider the following discretization: 

\begin{itemize}
\item The translation parameters can take any positive integer value smaller than the image dimension.
\item The rotation angles are uniformly discretized in $[0, \pi)$ with a step size of $\frac{\pi}{8}$. We have seen experimentally that this step size results in a good directional accuracy. A denser discretization comes at the expense of higher computational cost. 
\item The scaling parameters are sampled uniformly on a logarithmic scale with a step size of half an octave. This step size results in a compromise between the sparse approximation error and an oversampling of the scale space that might lead to wrong registration (and a too high computational complexity). We set the minimum scale to one, and the maximum scale is designed to have 99\% of the energy of a centred atom inside the image domain.
\end{itemize}


Fig. \ref{fig:parts_based_representation} illustrates several examples of parts-based representations obtained with NMP and a dictionary of Gaussian atoms, as described above. We observe that the part-based decomposition manages to approximate well the main geometric characteristics of the image. Furthermore, the same features are used in the different approximations, up to some geometrical transformation that corresponds to the relative transformation between the different versions of the original image. This is exactly the property that is at the core of our registration algorithm.

\begin{figure}[ht]
\centering
\subfigure[]{
\includegraphics[width=0.1\textwidth]{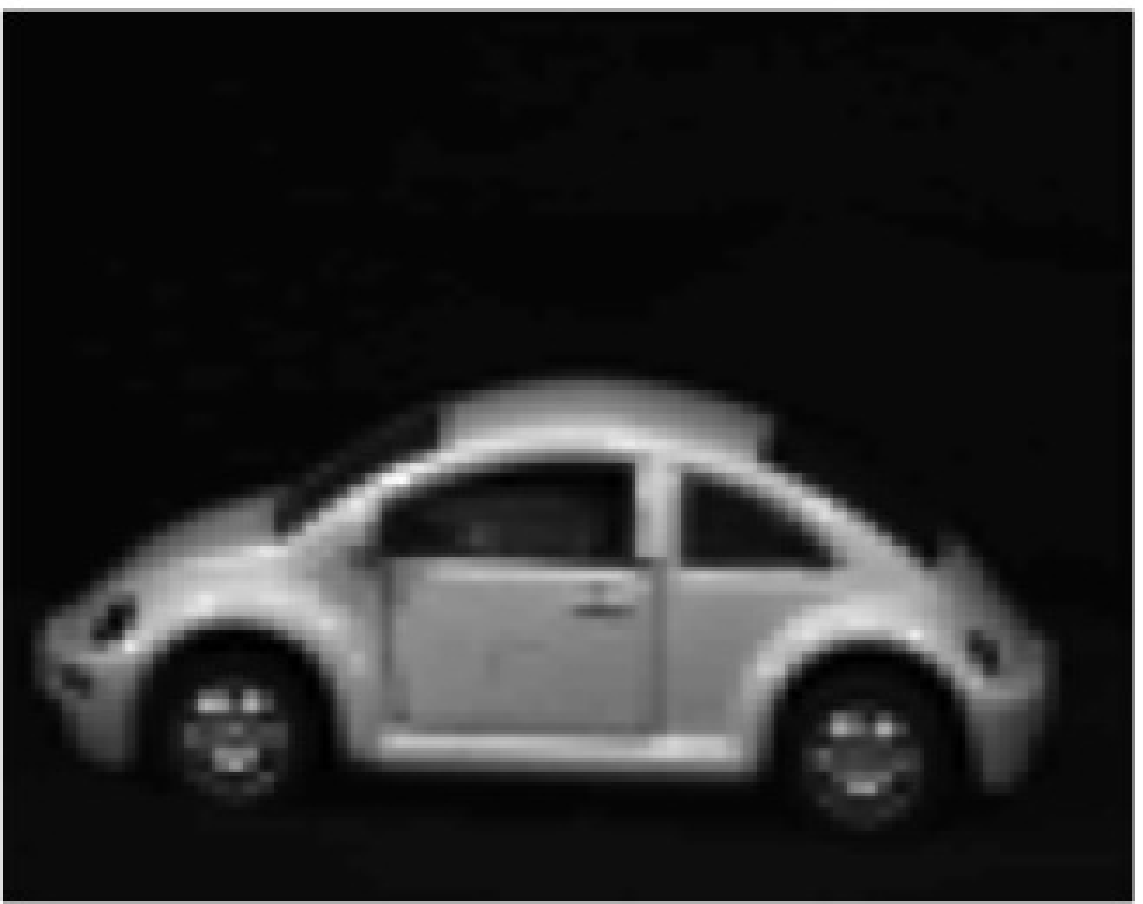}
}
\subfigure[]{
\includegraphics[width=0.1\textwidth]{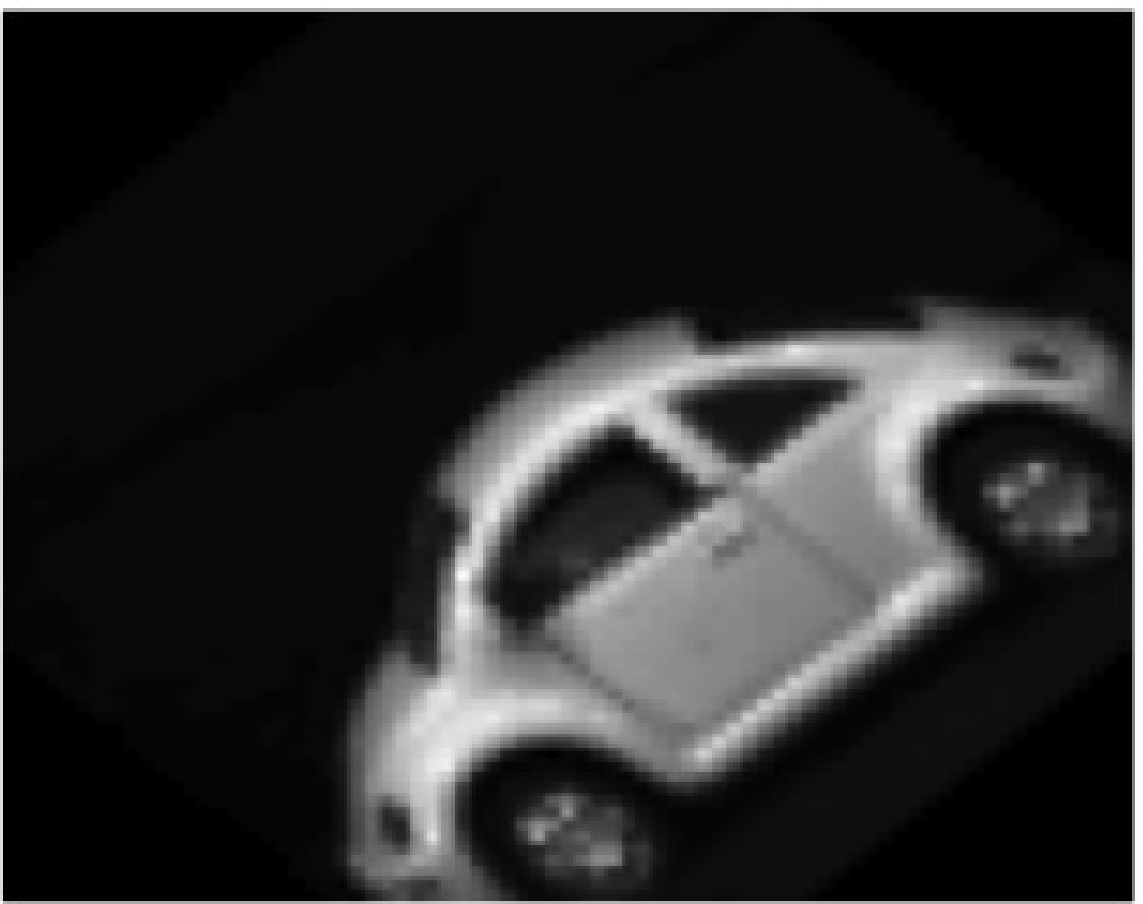}
}
\subfigure[]{
\includegraphics[width=0.1\textwidth]{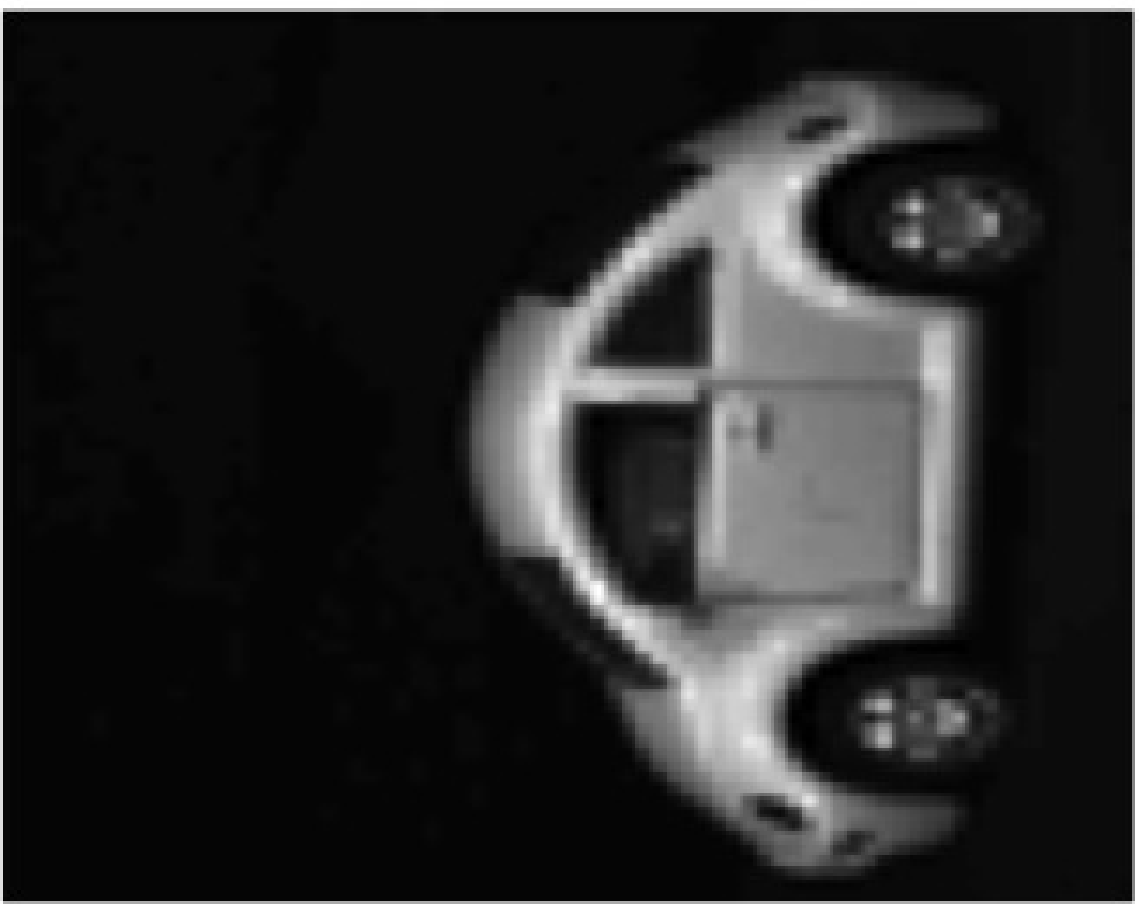}
}
\\
\subfigure[]{
\includegraphics[width=0.1\textwidth]{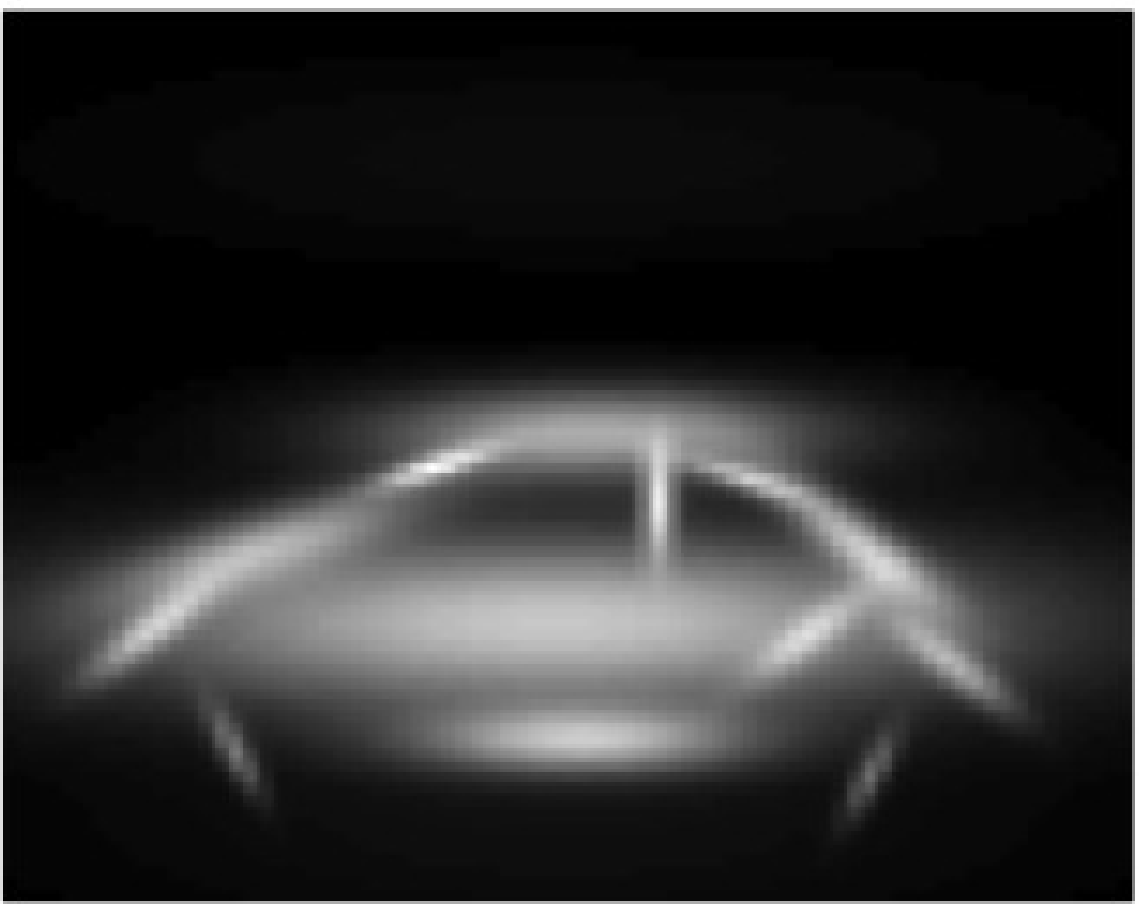}
}
\subfigure[]{
\includegraphics[width=0.1\textwidth]{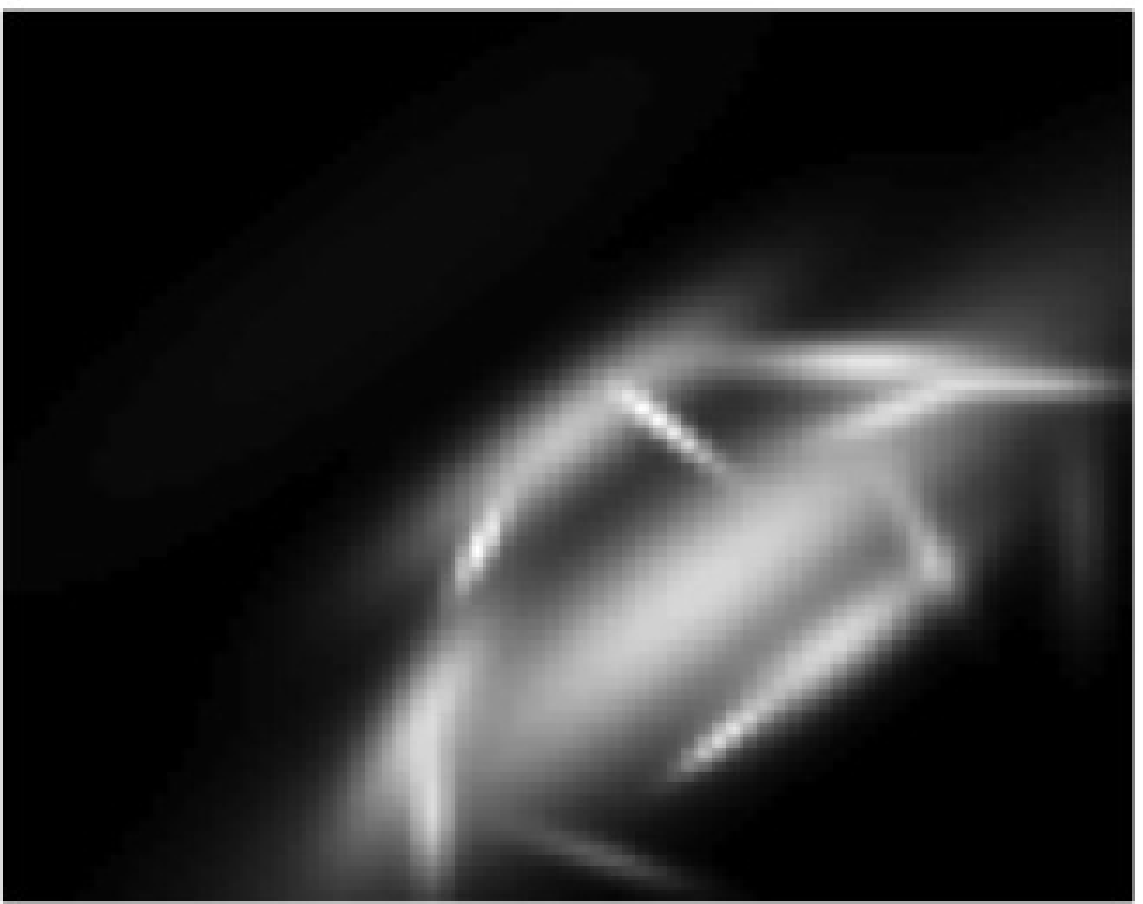}
}
\subfigure[]{
\includegraphics[width=0.1\textwidth]{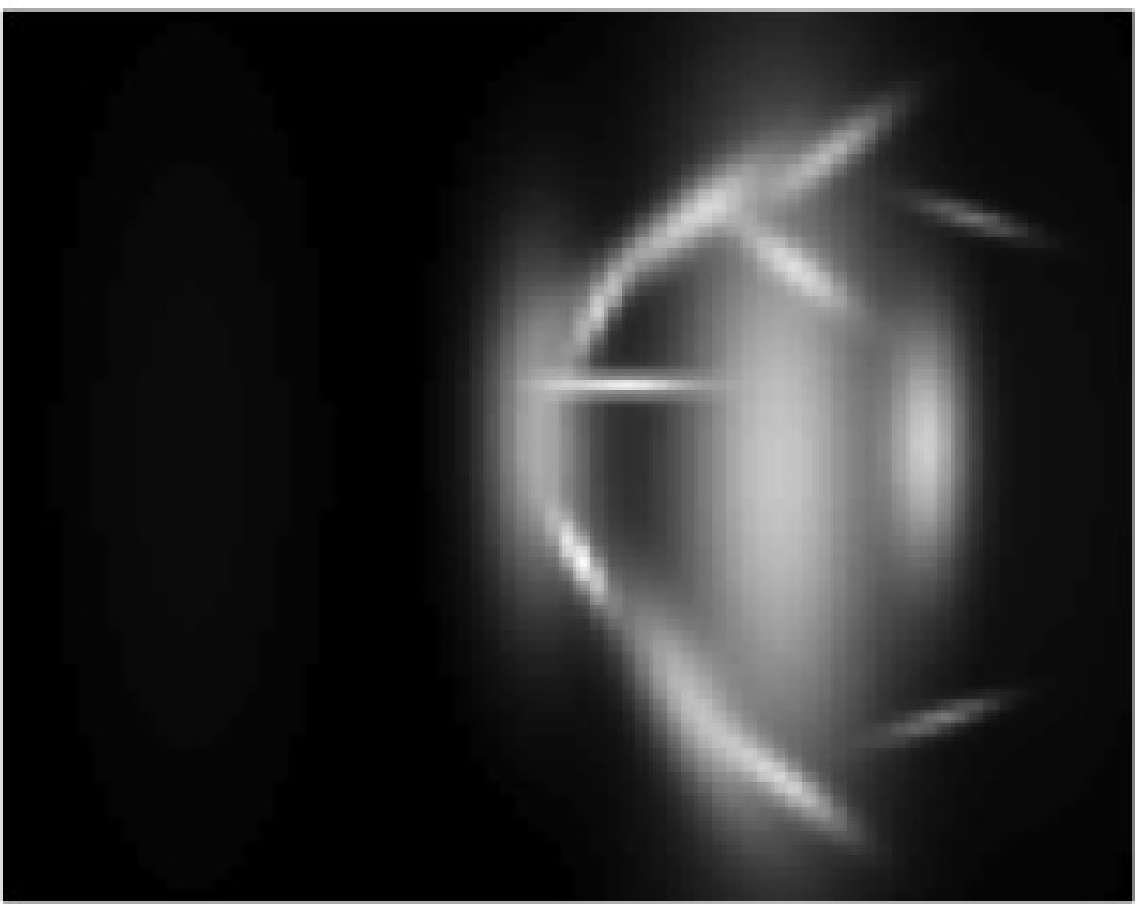}
}
\caption{\label{fig:parts_based_representation} Sparse approximations of transformed versions of the 'Car' image, computed with NMP and a dictionary constructed from a Gaussian generating function, with $\nu = 4$. The first row shows the original images, size $75 \times 75$ pixels. The second row shows the corresponding sparse approximations with a sparsity of $K = 15$ atoms.} 
\end{figure}

\subsubsection{Registration refinement}

Our registration algorithm estimates a transformation in the set $\mathcal{T}_a^{p,q} \subset \{ \gamma \circ \delta^{-1}: \gamma, \delta \in \mathcal{T}_d \}$, where $\mathcal{T}_d$ is the chosen discretization of the parameter space $\mathcal{T}$. In order to reduce the registration error that is due to the discretization of the dictionary, we have chosen in the experiments to extend our registration algorithm with a gradient descent technique that refines the estimated transformation. Hence, even if the optimal transformation $\eta_0$ is not located on the lattice formed by the discretization of the transformation parameter space, the additional local optimization step allows to converge to the optimal transformation if it lies close to the estimation computed by our registration algorithm. 

Specifically, the problem consists in minimizing the objective function $J(\eta) = \| U(\eta) p - q \|_2^2$, where the unknown transformation $\eta$ is constrained to be in $\mathcal{T}$. 
Following the same approach as the authors in \cite{jacques2008geometrical}, we consider the gradient descent induction given by:
\begin{align*}
\tau_{i+1} = \tau_{i} - w \nabla J(\tau_{i}) \text{ for } i \geq 0,
\end{align*}
where the gradient is defined by
\begin{align*}
\nabla J(\tau_{i}) = G_{\tau_i}^{-1} \begin{bmatrix} \partial_1 J(\tau_i) \\ \vdots \\ \partial_P J(\tau_i) \end{bmatrix},
\end{align*} 
with $G_{\gamma} = (\scalprod{\partial_i \phi_{\gamma}}{\partial_j \phi_{\gamma}})_{1 \leq i,j \leq P}$ (for any $\gamma \in \mathcal{T}$) and $w$ defines the step size. For more details on the derivation of this gradient descent scheme, we refer the reader to Appendix \ref{app:gradient_descent}.


\subsection{Influence of the dictionary on the registration performance}
\label{sec:influence_of_dictionary_reg_performance}


\begin{figure}[ht]
\centering
\includegraphics[width=0.7\textwidth]{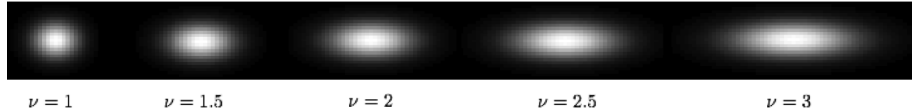}
\caption{\label{fig:gaussian_mother_functions} Gaussian mother functions with different values of the anisotropy $\nu$.}
\end{figure}

In a first set of experiments, we examine the influence of the dictionary choice on the registration performance\footnote{In this set of experiments, we apply our registration algorithm without the gradient descent refinement. We do so in order to focus exclusively on the performance of Algorithm \ref{alg:registration_algo} in terms of the considered dictionary.}. We fix here the transformation group $\mathcal{T}$ to be the special Euclidean group $SE(2)$ (containing translations and rotations). We consider that the dictionary mother function is a 2D anisotropic Gaussian function. We vary the anisotropy parameter $\nu$ of the mother function to generate a class of different dictionaries. Several generating functions obtained by varying the anisotropy parameter $\nu$ are illustrated in Fig. \ref{fig:gaussian_mother_functions}. Note that the discretization of the parameter space $\mathcal{T}_d$ is kept fixed for all dictionaries. 
We study now the registration performance of each of these dictionaries. 

\begin{figure}[ht]
\centering
\subfigure[$I_1$]{
\includegraphics[width=0.2\textwidth]{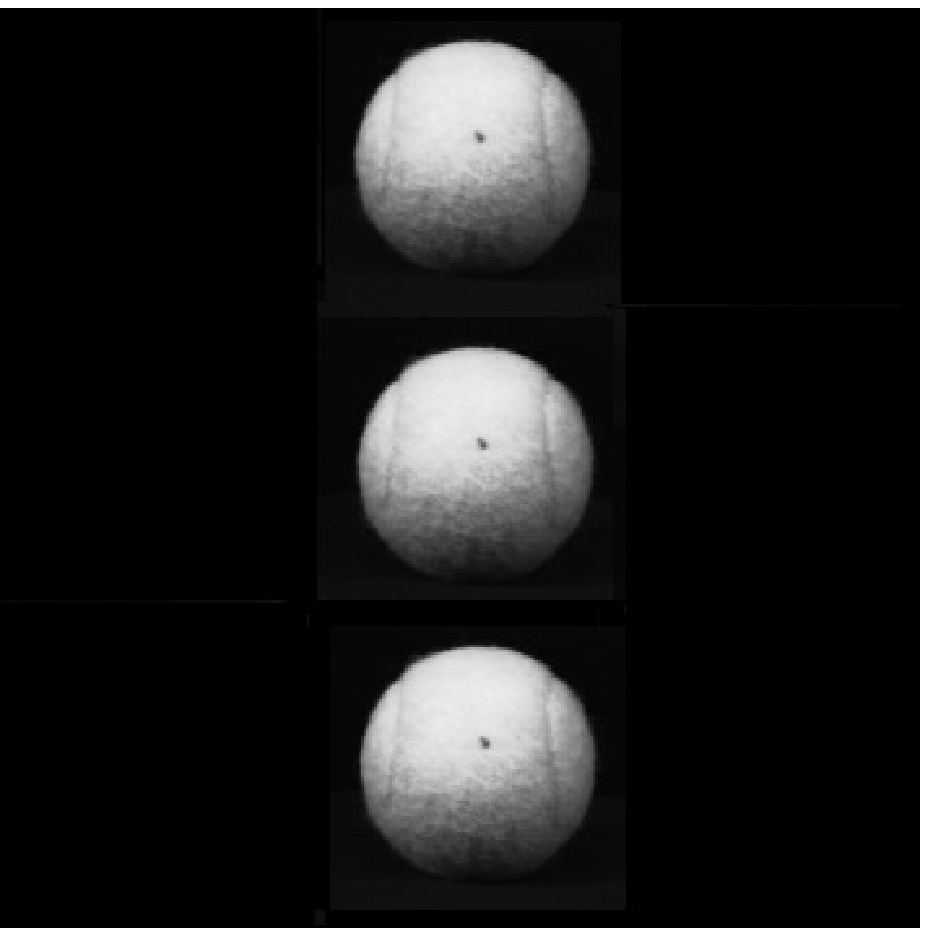}
}
\subfigure[$I_2$]{
\includegraphics[width=0.2\textwidth]{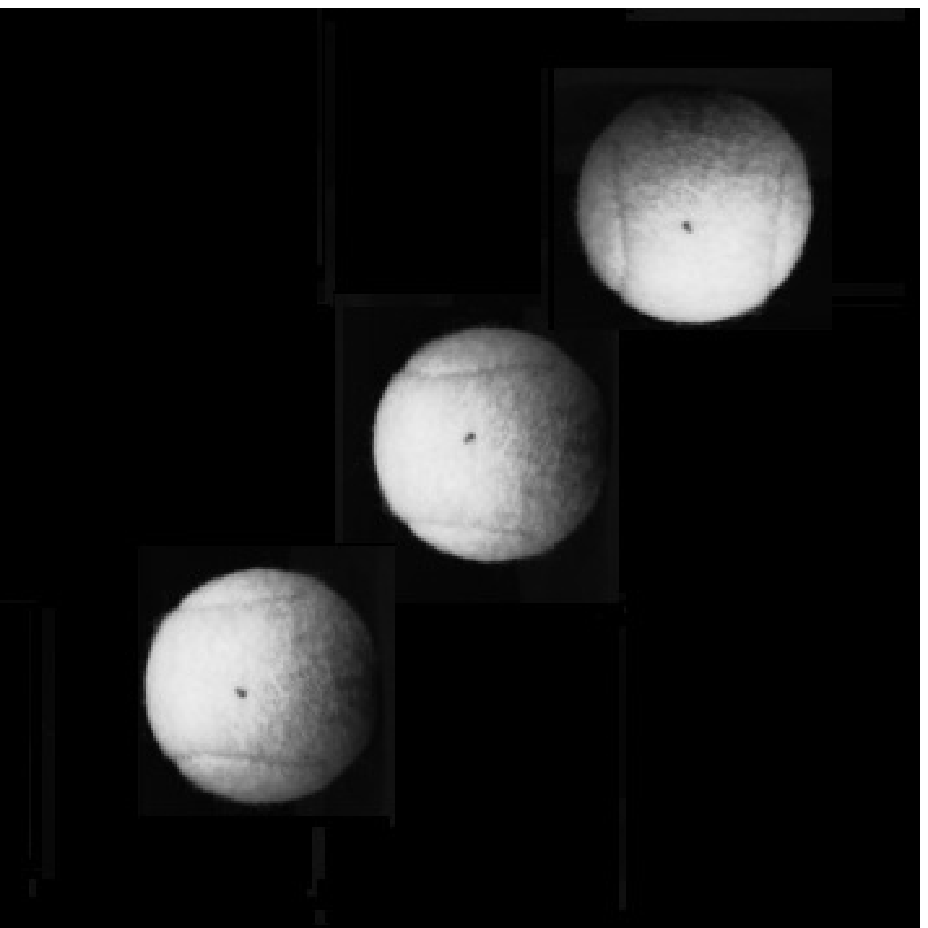}
}
\caption{\label{fig:tennis1}Original images used in the first experiment.}
\end{figure}
 
In a first experiment, we test our registration algorithm with the images $I_1$ and $I_2$ illustrated in Fig. \ref{fig:tennis1} for our class of dictionaries. We first represent in Fig. \ref{fig:tennis_1_results} the mean sparse approximation error $\frac{1}{2} \left( \| I_1 - p \|_2 + \| I_2 - q \|_2 \right)$ for decompositions with $K=3$ atoms, when the anisotropy parameter $\nu$ in the dictionary mother function varies. For the same class of dictionaries, we also measure the registration performance \mbox{$\left| \| U( \eta_0 ) I_1 - I_2 \|_2 - \| U( \hat{\eta} ) I_1 - I_2 \|_2 \right|$} where $\eta_0$ and $\hat{\eta}$ are respectively the optimal transformation (namely a rotation of $\pi/4$), and the estimated transformation. Note that we used this notion of error instead of $E'(p, q, I_1, I_2)$ in order to focus exclusively on the error due to a wrong estimate of the transformation. 
The registration performance is illustrated in Fig. \ref{fig:tennis_1_results}. One can see clearly that the sparse approximation error is increasing with the anisotropy of the mother function. Indeed, when the mother function approaches isotropy, the dictionary approximates well the tennis balls in images $I_1$ and $I_2$. The registration performance has however an opposite behavior: the error decreases with increasing values of the anisotropy. This suggests that the sparse approximation error is not the only quantity controlling the performance of the registration algorithm, as predicted by our theoretical performance analysis. Indeed, using the same arguments as in Example \ref{ex:example_rho1}, we know that the transformation inconsistency parameter goes to infinity when the mother function is isotropic: this explains the poor registration performance for generating functions that are close to isotropic. 



\begin{figure}[ht]
	\centering
		\includegraphics[width=0.4\textwidth]{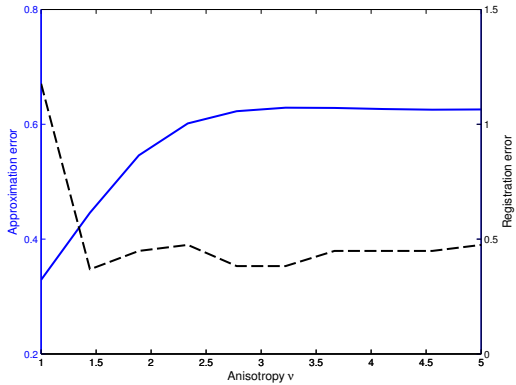}
	\caption{\label{fig:tennis_1_results}Approximation error (solid) and registration error (dashed) for images in Fig. \ref{fig:tennis1} as a function of the anisotropy of the dictionary generating function. The sparsity $K$ is fixed to $3$. The sparse approximation error is given by $\frac{1}{2} ( \| I_1 - p\|_2 + \| I_2 - q \| )$, and the registration error by  $\left| \| U( \eta_0 ) I_1 - I_2 \|_2 - \| U( \hat{\eta} ) I_1 - I_2 \|_2 \right|$.}
\end{figure}

As the transformation inconsistency parameter looks crucial in the registration performance, we estimate its value for the same class of dictionaries. This estimation is performed by applying the definition of the transformation inconsistency\footnote{Note that we applied the definition in Eq. (\ref{eq:inconsistency_symmetry}) since these atoms have a rotational symmetry of $\pi$.}, where the infinite set $\mathcal{T}$ is finely discretized. In a final step of the estimation, the transformation $\eta \in \mathcal{T}$ that maximizes the transformation inconsistency is refined with a local gradient descent search. Fig. \ref{fig:rho_experiment} shows the estimated value of transformation inconsistency parameter with respect to the anisotropy of the generating function. One can see that the evolution of the transformation inconsistency parameter is consistent with the theoretical analysis in Section \ref{sec:theoretical_analysis}. For near-isotropic atoms, the parameter $\rho$ is large (Example \ref{ex:example_rho1}). Similarly, when $\nu$ is large, the transformation inconsistency increases 
as shown in Example \ref{ex:example_rho2}. 
Even though our estimation of the transformation inconsistency may not be perfectly accurate (due to the discretization of $\mathcal{T}$), it confirms the tendencies described earlier in the theoretical analysis. It further contributes to explaining the trade-off between approximation and registration error that has been illustrated in Fig. \ref{fig:tennis_1_results}.

\begin{figure}[ht]
\centering
\includegraphics[width=0.4\textwidth]{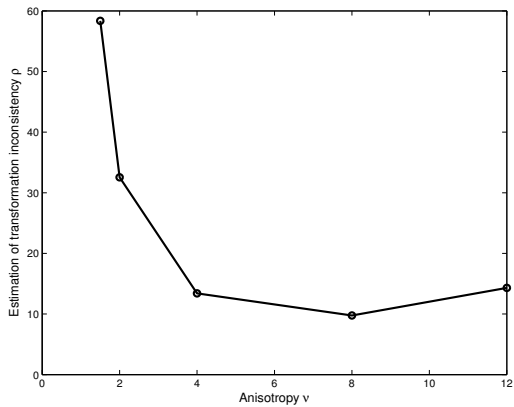}
\caption{\label{fig:rho_experiment} Estimation of the transformation inconsistency parameter for dictionaries built on Gaussian mother functions with different anisotropy $\nu$.}
\end{figure}

We study now a second experiment where we consider that the transformation group is $\mathcal{T} = \mathbb{R}^2 \times \mathbb{R}^+_{*}$. That is, $\mathcal{T} $ contains transformations that can be written as combinations of translation and isotropic dilation. We construct another class of dictionaries by fixing the generating function to be an isotropic Gaussian (as shown in Fig.  \ref{fig:gaussian_mother_functions}, $\nu = 1$) but we vary the step size that is used for the discretization of the dilation parameter. More precisely, the set of transformations $\mathcal{T}_d$ that is used to build the dictionary, is constructed from $\mathcal{T}$ by imposing a fixed uniform discretization of the translation parameter and a uniform discretization of the dilation parameter whose step size $\Delta_s$ can take different values. Note that the minimum and maximum scales are kept fixed in all dictionaries and only the space $\Delta_s$ between two consecutive scale parameters is varied. We finally measure the sparse approximation performance with $K = 3$, as well as the registration accuracy that can be obtained with this second class of dictionaries for the images $I_1$ and $I_2$ shown in Fig. \ref{fig:tennis2}. Both sparse approximation and registration errors are computed similarly to the previous experiment. They are illustrated in Fig. \ref{fig:tennis2_results} as a function of the different values of the scale step size $\Delta_s$.

\begin{figure}[ht]
\centering
\subfigure[Original image $I_1$]{
\includegraphics[width=0.4\textwidth]{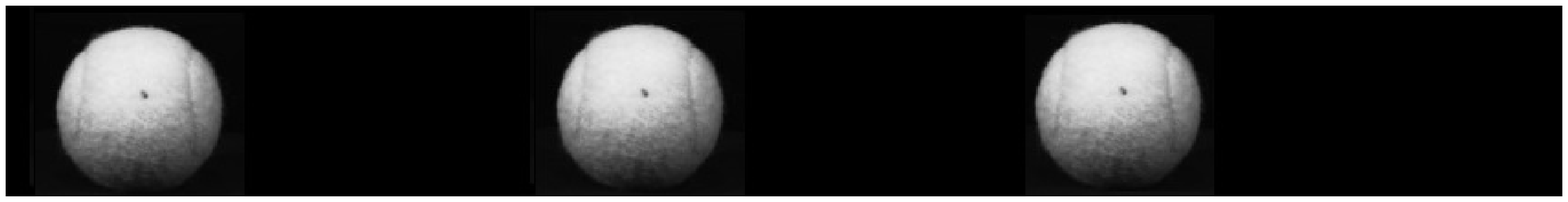}
}
\subfigure[Transformed image $I_2$]{
\includegraphics[width=0.4\textwidth]{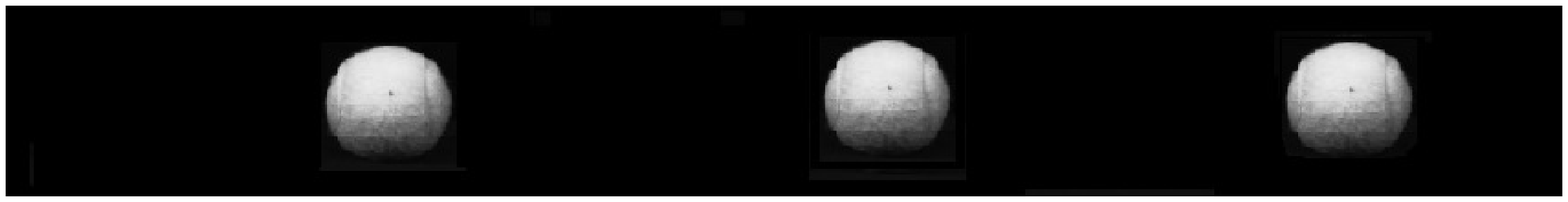}
}
\caption{\label{fig:tennis2} Original images used in the second experiment.}
\end{figure}

\begin{figure}[H]
	\centering
		\includegraphics[width=0.4\textwidth]{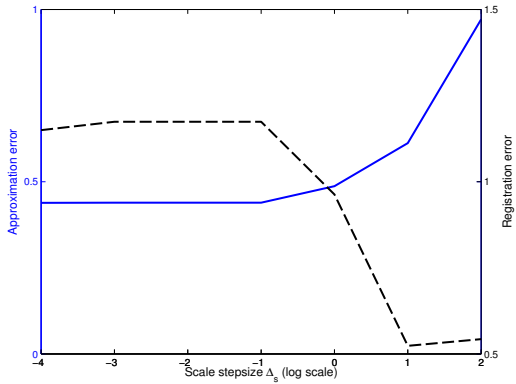}
	\caption{\label{fig:tennis2_results}Approximation error (solid) and registration error (dashed) for images in Fig \ref{fig:tennis2} as a function of the scale stepsize used for constructing the dictionary. The sparsity $K$ is fixed to $3$.}
\end{figure}

We observe in Fig. \ref{fig:tennis2_results} that the sparse approximation and registration errors have opposite behaviors with respect to the scale space discretization. This is in-line with our observations on the first experiment above. Indeed, a fine discretization leads to a small approximation error. At the same time, the registration is less accurate when the discretization is fine. Conversely, coarser discretization of the scale parameter results in less compact dictionary, hence in larger approximation errors, but better registration performance. These tendencies can be explained using the arguments developed in Example \ref{ex:example_rho3}.

In summary, these two experiments show that constructing a dictionary that guarantees a small approximation error of the images is not enough to have a low registration error. As we have seen earlier in Section \ref{sec:theoretical_analysis}, crucial parameters such as robust linear independence and transformation inconsistency have to be taken into account in the design of the dictionary in order to reach good registration performance. 

\subsection{Illustrative examples}
\label{sec:illustrative_examples}
We propose in this section some illustrative experiments that study the performance of our registration algorithm for determining the transformation between pairs of images, or for image classification. We further compare the properties of our registration algorithm to other baseline solutions for computing transformation invariant distances. 

\begin{figure}[ht]
\centering
\subfigure[Duck]{
\includegraphics[width=0.1\textwidth]{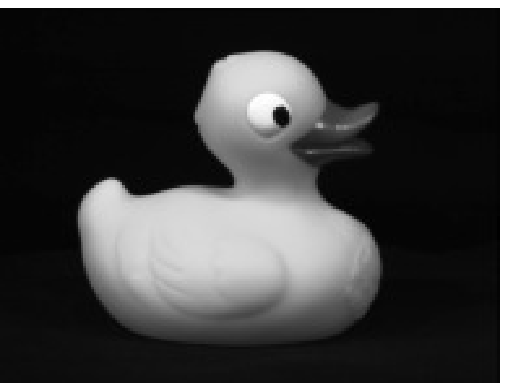}
}
\subfigure[Car]{
\includegraphics[width=0.1\textwidth]{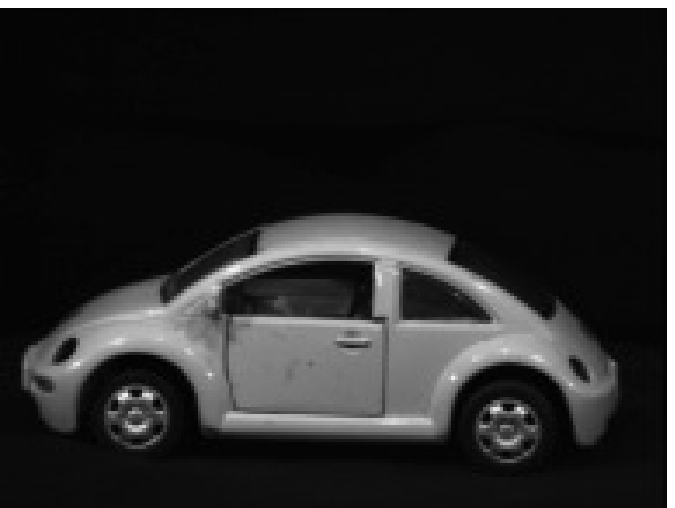}
}
\subfigure[Bear]{
\includegraphics[width=0.1\textwidth]{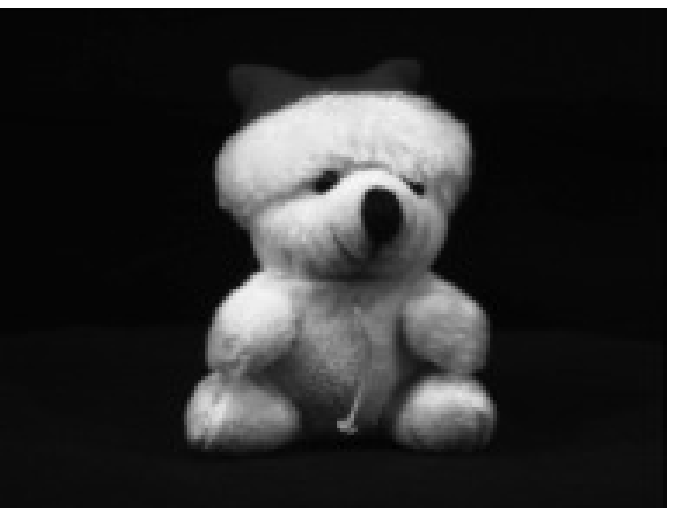}
}
\caption{\label{fig:test_images}Test images \cite{geusebroek2005amsterdam}. All images are resized to be of dimension $75 \times 75$ pixels.}
\end{figure}

In our first experiments, we consider the test images shown in Fig.  \ref{fig:test_images}, which have been collected from the ALOI dataset \cite{geusebroek2005amsterdam}. We generate $100$ random transformations and apply them to the test images. Each of the transformation belongs to $\mathcal{T}$ and consists in a combination of translation, rotation and isotropic scaling. Both components of the translation vector are smaller than half the image size and the isotropic scaling parameter is constrained to be in $[0.5, 1.5]$. These restrictions guarantee that most of the image energy lies in the image space, possibly with some occlusions. We put no specific restrictions on the rotation angle. Fig. \ref{fig:duck_transformations} illustrates some examples of transformed images.  

\begin{figure}[H]
	\centering
		\includegraphics[width=0.25\textwidth]{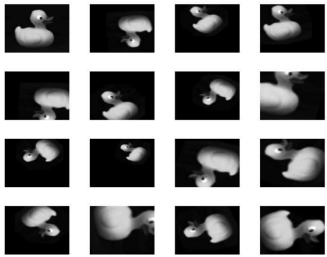}
	\caption{Sample set of test images built by applying random geometric transformations to the Duck image.}
	\label{fig:duck_transformations}
\end{figure}

We first examine the accuracy of our algorithm in estimating the correct global transformation between pairs of images. We register each of the transformed test images with the original image and compute the average registration accuracy over $100$ such operations. Fig. \ref{fig:errors_transformation} shows the average error in the translation, scaling and rotation parameters when registering pairs of 'Duck' images for different number of features $K$ in the sparse image approximations. We see that for $K \geq 10$ our algorithm determines a very good approximation $\hat{\eta} = (\hat{b}, \hat{a}, \hat{\theta})$ of the optimal transformation $\eta_0' = (b_0', a_0', \theta_0')$. That is, we have in average a translation error of approximately 1 pixel, a scaling error of $0.02$ and an angle error of $10$ degrees. 

\begin{figure}[ht]
\centering
\subfigure[Translation error: $\| \hat{b} - b_0' \|_2$]{
\includegraphics[width=0.25\textwidth]{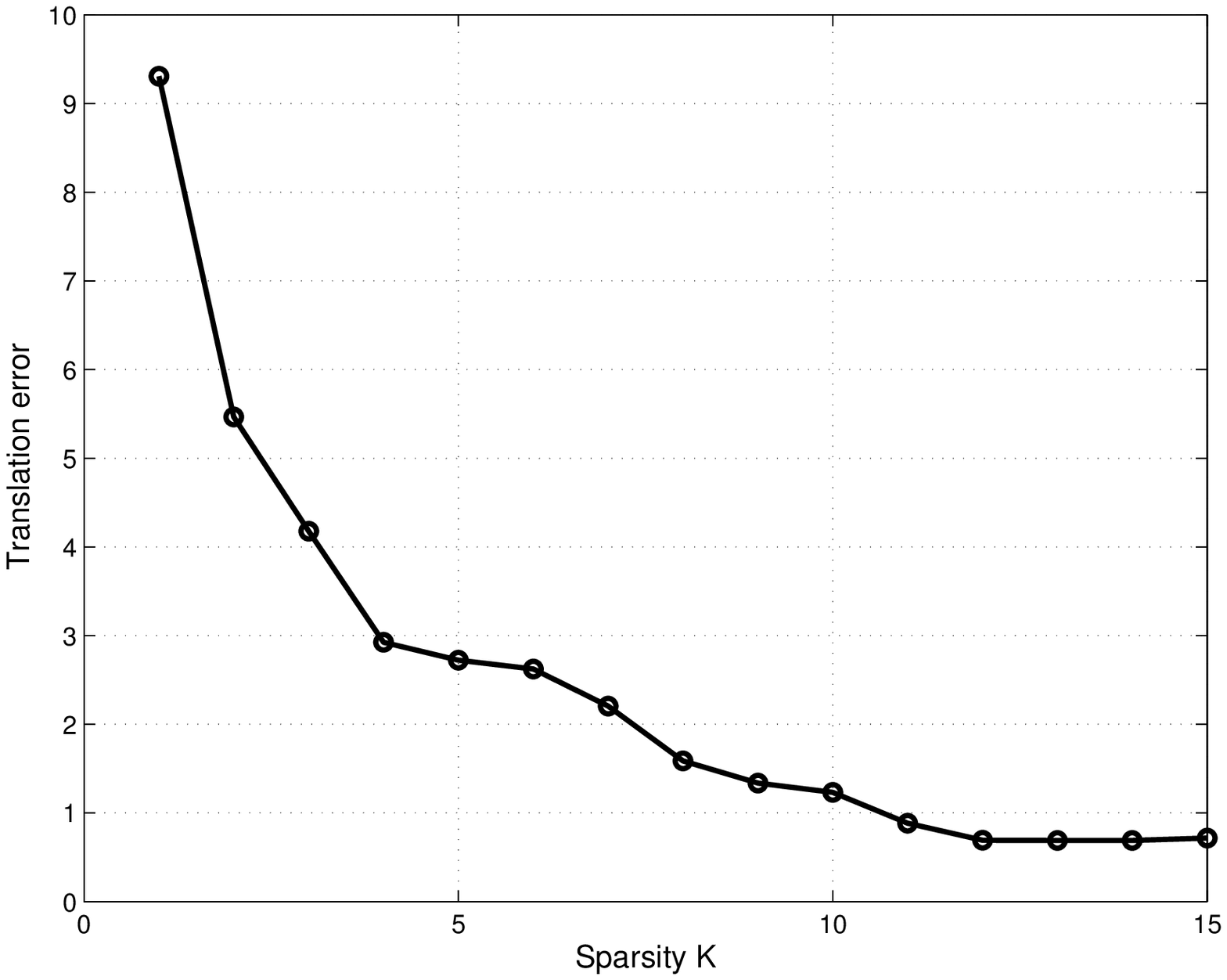}
}
\subfigure[Scale error: $|\hat{a} - a_0'|$ ]{
\includegraphics[width=0.25\textwidth]{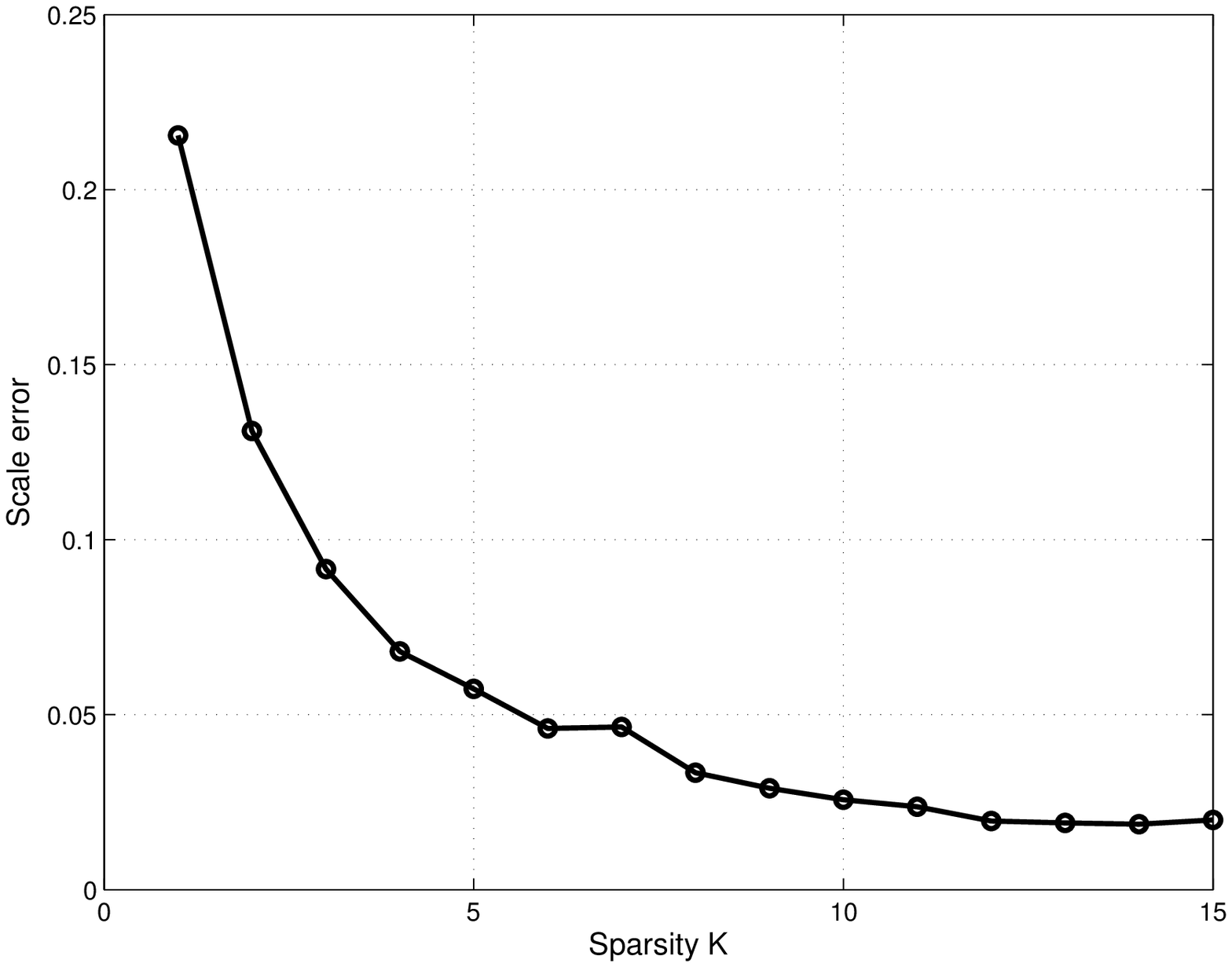}
}
\subfigure[Rotation error: $\min(|\hat{\theta} - \theta_0'|, 180 - |\hat{\theta} - \theta_0'|)$]{
\includegraphics[width=0.25\textwidth]{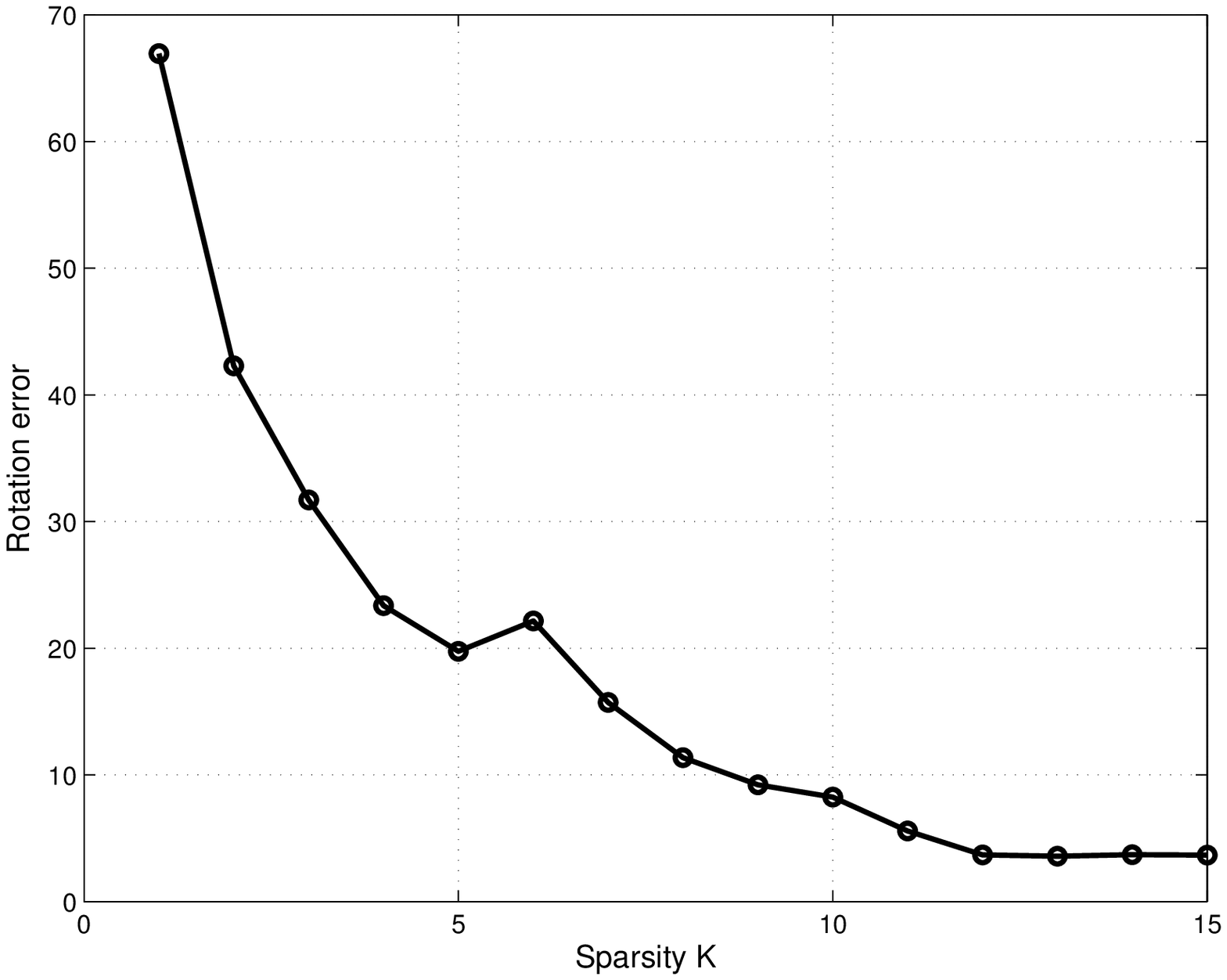}
}
\caption{\label{fig:errors_transformation}Errors in translation, scaling and rotation (in degrees) versus the sparsity $K$ in the approximation of 'Duck' images. The parameter of the optimal transformation obtained by solving $(P')$ is denoted with $\eta_0' = (b_0', a_0', \theta_0')$ and the estimated transformation with $\hat{\eta} =  (\hat{b}, \hat{a}, \hat{\theta})$. The results are averaged over 100 tests.}
\end{figure}


We compare now our method with several baseline algorithms for computing distances that are invariant to transformations. The first of these methods is based on the tangent distance \cite{simard1998transformation} that approximates the transformation invariant distance between the images with the distance between two linear subspaces that can be easily computed. Specifically, the authors in \cite{simard1998transformation} approximate the distance $d(I_1, I_2)$ with:
\[
d_{TD} (I_1, I_2) = \min_{I'_1 \in T(I_1), I'_2 \in T(I_2)} \| I'_1 - I'_2 \|_2,
\]
where $T(I_1)$ and $T(I_2)$ are the tangent planes to the manifold of transformed images of $I_1$ and $I_2$ respectively, evaluated at $I_1$ and $I_2$. The equations of $T(I_1)$ and $T(I_2)$ can be explicitly computed and the original problem of computing the transformation invariant distance reduces to solving a least squares problem \cite{simard1998transformation}.  We also compare our method with an approach that solves the original problem $(P')$ using a simple gradient descent technique starting from the identity transformation. Finally, the last comparative scheme is simply based on the computation of the regular Euclidean distance between the images $I_1$ and $I_2$. Note that in all three competitor solutions, the distances are computed directly on the original images, whereas, in our approach we use only \textit{the sparse image approximations} to compute the distance. We choose to do so since our aim here is to show that our method can be used without explicitly using the complete images in the transformation estimation: a good sparse approximation is indeed sufficient to obtain accurate registration results.

\begin{figure}[ht]
\centering
\subfigure[Euclidean distance]{
\includegraphics[width=0.35\textwidth]{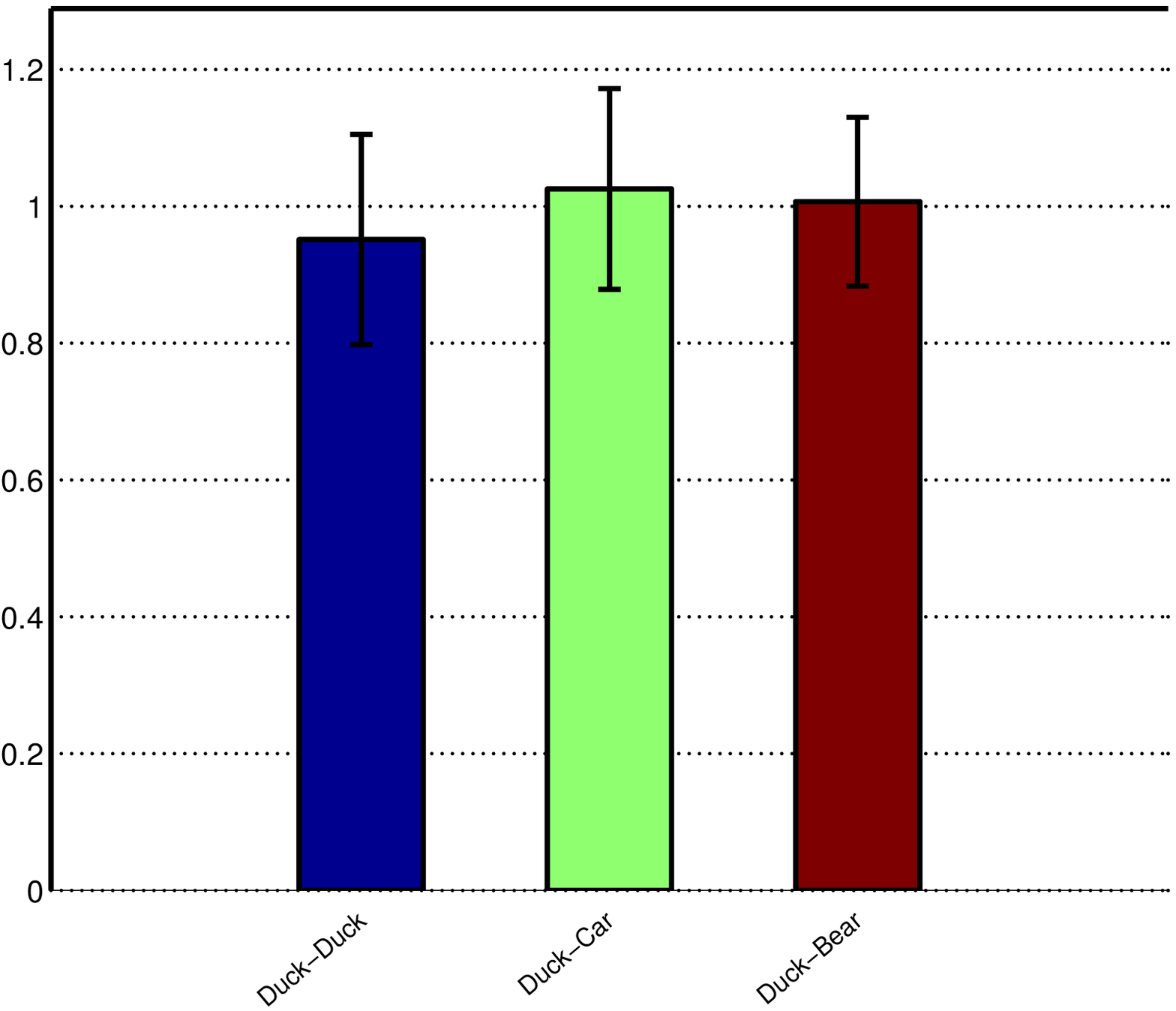}
}
\subfigure[Tangent distance]{
\includegraphics[width=0.35\textwidth]{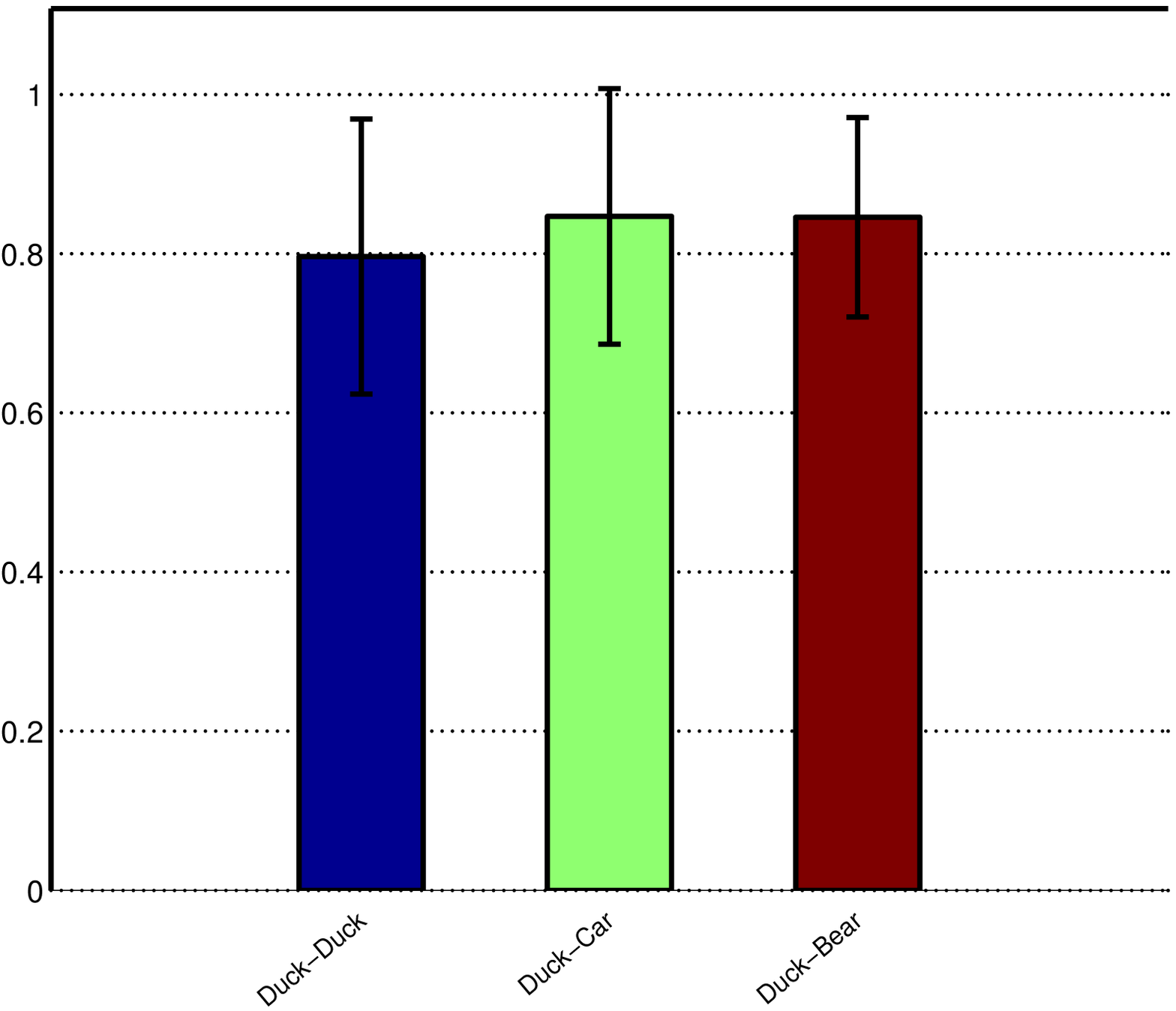}
}
\subfigure[Gradient descent]{
\includegraphics[width=0.35\textwidth]{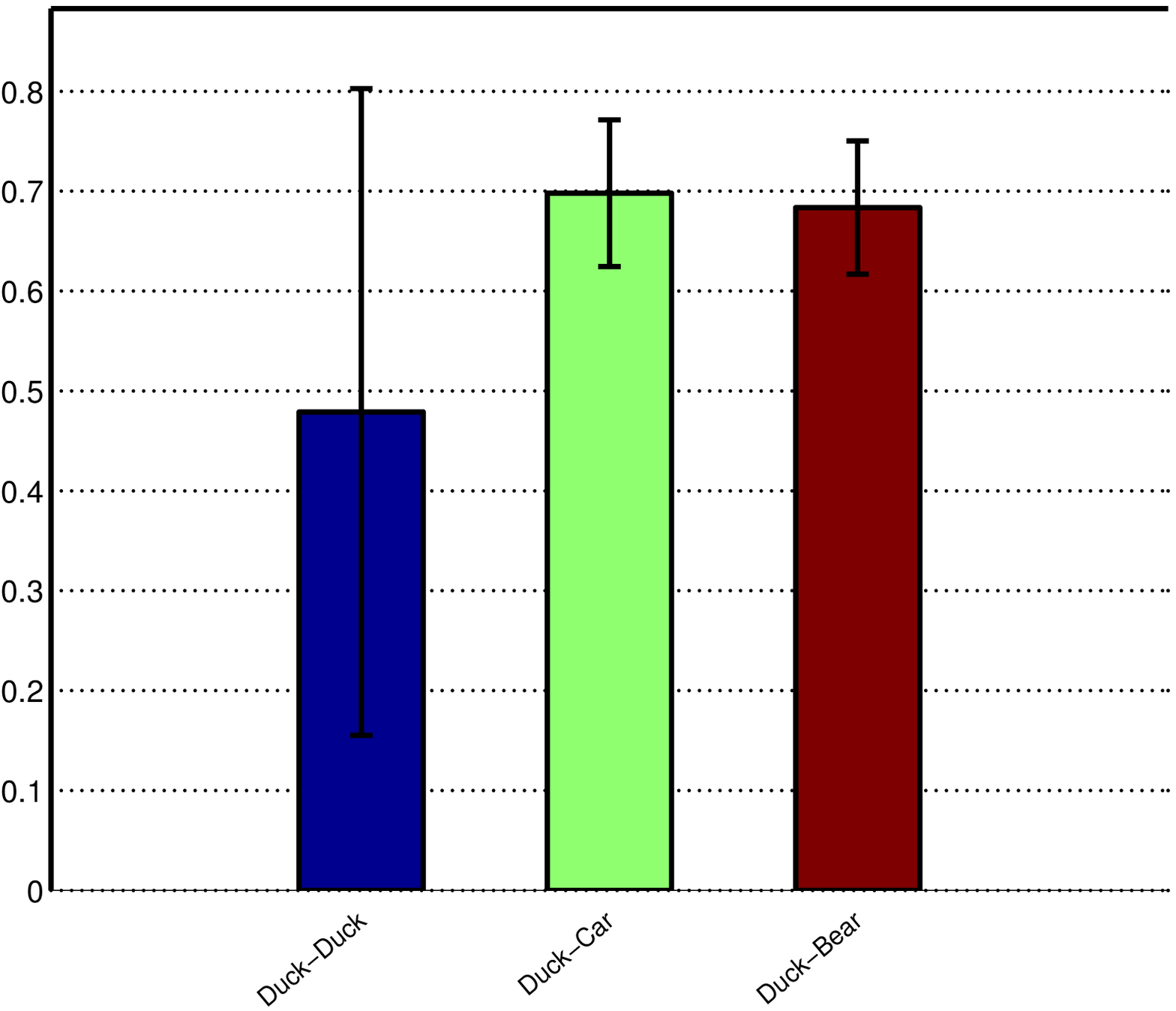}
}
\subfigure[Our method ($K=10$)]{
\includegraphics[width=0.35\textwidth]{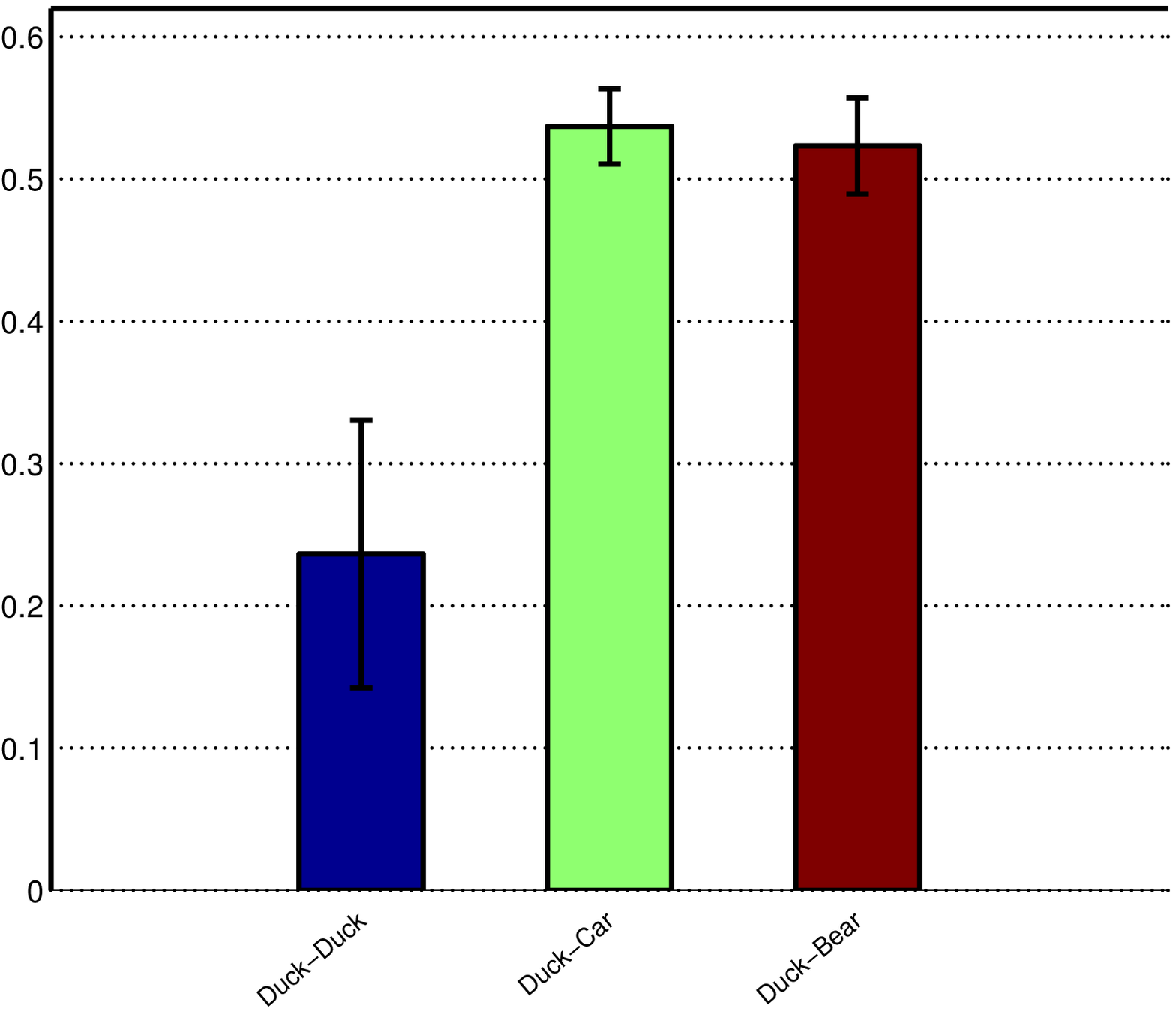}
}
\caption{\label{fig:algo_results1}Average and standard deviation of intra- and inter-class distances for different methods. The blue color denotes intra-class distance while the green and red colors refer to the distance between Duck-Car and Duck-Bear images respectively. The distance has been computed between one reference image ('Duck') and 100 randomly generated transformed images in each class. The intra-class distance should be ideally at zero. For our approach, the transformation invariant distance $d_a(p,q)$ is computed based on the sparse approximations, while the original images are used in the other methods.}
\end{figure}

We extend the previous experiments towards classification of images. In particular, we compare the transformation-invariant distance for images of the same class to the same distance computed between images of different classes. Ideally, the first one (the intra-class distance) should be smaller than the latter one (the inter-class distance) in order to obtain good classification performance. We start with a simple scenario where the reference image is chosen to be the 'Duck' image in Fig. \ref{fig:test_images}. We then compute the transformation invariant distance between the reference image and the transformed versions of images in the same class ('Duck' ), and in the other classes  ('Car' and 'Bear'). Fig.  \ref{fig:algo_results1} shows the average of the transformation invariant distances computed with the different methods. One can see that the euclidean distance between images of the same class is not significantly different from the distance between images of different classes. The tangent distance does not improve the performance since this method provides only local invariance to transformations. Similarly, the gradient descent approach converges to the correct transformation only when it is close enough to the initial transformation. As this happens rarely, this approach does not provide results that are significantly different for intra- and inter-class comparisons. In our method however, one can see that the intra-class distance is significantly smaller than the inter-class distance. Fig. \ref{fig:classVSsparsity} further shows the evolution of the transformation-invariant distance with respect to the sparsity of the images. We see that the intra-class distance is always smaller than any of the inter-class distances in our algorithm, even for very small values of the sparsity $K$. This provides a confirmation that salient geometric features in sparse images are crucial for proper registration. Hence, without having a very accurate sparse representation of the patterns, our registration algorithm succeeds in having an approximation of the distances that allows at least to classify the simple patterns under test. Note that this observation does not contradict the worst case theoretical analysis in which we assume that the sparse approximation error is small. We observe in practice that, even when this assumption does not hold, one can still obtain a good registration accuracy that is sufficient for the classification of simple signals. Finally, we note that the results are essentially the same if we repeat the same experiments with a different reference image in our dataset. Overall, our illustrative experiments so far show that, with a coarse approximation of the original images in the dictionary, our approach succeeds in obtaining an accurate estimation of the transformation, and the computation of the distances show that the intra- and inter-class images are well distinguished. This is an interesting property towards the development of registration algorithms in applications where access to the original (high quality) images is not possible.

\begin{figure}[ht]
	\centering
		\includegraphics[width=0.4\textwidth]{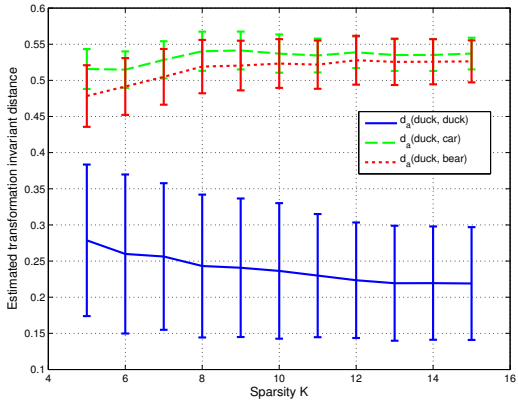}
	\caption{Evolution of the intra-class and inter-class transformation invariant distance $d_a(p,q)$ in the proposed algorithm as a function of the number of features in the sparse images.}
	\label{fig:classVSsparsity}
\end{figure}

\label{sec:handwritten_digits}
We extend the simple classification experiments proposed above and study now the performance of our registration method in a more challenging task of transformation-invariant handwritten digit classification. We use the digits '0' to '5' from the standard MNIST database of handwritten digits \cite{mnist_digits}. We construct the training data by randomly choosing 100 images for each digit, which results in 600 training images. The test data is constructed similarly: 100 images are taken in each class in order to generate 600 test images. Note that the test data does not contain any of the training images. Finally, we apply to each test image a random transformation built on translation, rotation and isotropic scaling. Our classifier then works as follows: each test data is assigned the label of the digit in the training set that best aligns with it, or equivalently that minimizes the transformation-invariant distance to the test image. 
In other words, the label of a test image is chosen to be the label of its nearest neighbour in the training dataset, up to a geometrical transformation. We compare the classification results when the transformation-invariant distance is computed with the different methods proposed above. Moreover, for completeness, we also compare our method to an approach that first extracts MSER regions \cite{matas2002robust}, followed by a similarity normalization \cite{muse2005theory} that transforms the image into a common system of coordinates. The transformation invariant distance between two digits is then defined as the distance between the normalized images. 

 The classification results are shown in Table \ref{tab:results_classification}. 

\begin{table}[H]
\footnotesize
\centering
\begin{tabular}{|c|c|c|}
  \hline
   & Classification accuracy  \\
  \hline 
  Euclidean distance & 14 \% \\
  \hline
  Tangent distance & 33 \% \\
  \hline
  Gradient descent & 62 \% \\
  \hline
  MSER + similarity normalization & 75 \% \\
  \hline
  Proposed registration algorithm ($K=10$) & 86 \% \\
  \hline
\end{tabular}
\caption{Handwritten digits classification accuracy for different approaches in computing transformation-invariant distances.}
\label{tab:results_classification}
\end{table}

One can see that using the Euclidean distance on the transformed test images results in a very poor classifier, whose performance is actually close to the one of a random classifier. Using the tangent distance results in some improvement, but it is still far away from the desired performance. This is due to the fact that the tangent distance is appropriate only for local transformations, while the transformations that we consider are generally of large magnitude. Similarly, the gradient descent approach does not perform well, since it is only guaranteed to reach a local minima. The MSER-based approach outperforms these local methods and achieves a classification performance of $75 \%$. Using our registration method however, we achieve a relatively high classification rate, which is by far the best performance among the compared methods. It is worth noting that the performance of our algorithm ($86 \%$ of classification accuracy) is only slightly worse than the performance of a Euclidean nearest neighbour classifier with aligned images (i.e., no transformations are applied on the test data), which reaches a classification accuracy of $94 \%$. The latter classifier provides an upper-bound on the performance we could achieve in our settings where test images are transformed.

Finally, note that existing methods in the literature achieve close to zero error rate on the MNIST database \cite{mnist_digits}. However, unlike the proposed approach, these methods generally do not support invariance to large transformations. Furthermore, our method is general in the sense that it is not specific to handwritten digit classification and can be used in any application involving image alignment.

\subsection{Relation to feature-based methods}

The proposed registration method shares several similarities with feature-based approaches in the computer vision literature. In such methods, we represent an image using a set of local features (\textit{keypoints}) along with high dimensional \textit{descriptors} that describe the local behaviour of the image around the keypoints. In order to register accurately two images using a feature-based approach, the following two conditions must be met:

\begin{itemize}
\item \textit{Keypoints covariance to transformations:} The keypoints undergo the same transformation as the original image. 

\item \textit{Descriptor invariance to transformations:} The descriptors are oblivious to the transformation of the original image. 
\end{itemize}

Since these conditions are ideal and hard to satisfy in practice, inaccuracies generally happen in keypoint locations and matching. To account for these issues, the registration process first excludes outlier keypoints (i.e., the keypoints that are not consistent with most of the other keypoints). This is usually performed with the RANSAC procedure \cite{fischler1981random}. The relative transformation between pairs of images is finally estimated as the most likely global transformation based on the remaining (inlier) keypoints. Specifically, if $\{x_i\}_{i=1}^r$ and $\{x'_i\}_{i=1}^r$ denote the positions of the matched inlier keypoints respectively in the first and second image, the registration is performed by solving the following minimization problem:
\[
\min_{\eta \in \mathcal{T}} \sum_{i=1}^r \| f(x_i, \eta) - x'_i \|_2,
\]
where $f(x_i, \eta)$ gives the position of the keypoint $x_i$ after the transformation with $\eta$. When $\mathcal{T} = SIM(2)$, the minimum can be found by solving a system of normal equations \cite{szeliski2010computer}.

We compare now our registration approach to a baseline feature-based approach, where the features are built on the popular Scale Invariant Feature Transform (SIFT) \cite{lowe2004distinctive} \footnote{We used the opensource implementation of SIFT available at \url{http://www.vlfeat.org/~vedaldi/code/sift.html} for the experiments.} and a RANSAC \cite{fischler1981random} method for rejecting outliers.
We compare our approach to the SIFT-based solution for the estimation of large rotations. We consider the Duck image in Fig. \ref{fig:test_images} along with multiple transformed versions of this image obtained by rotation around the center of the image. Fig. \ref{fig:reg_error_sift_our} illustrates the registration error versus the angle of rotation, for the SIFT-based approach and for our registration method. The registration error is measured on the original images with $\| U(\eta) I_1 - I_2 \|_2$, where $\eta$ is the estimated transformation. It can be seen that the estimated transformation with the SIFT-based scheme becomes less accurate as the rotation angle increases. On the contrary, the performance of our method is independent of the magnitude of the transformation. This confirms that SIFT keypoints are not covariant to large rotations of the image.

\begin{figure}[ht]
	\centering
		\includegraphics[width=0.4\textwidth]{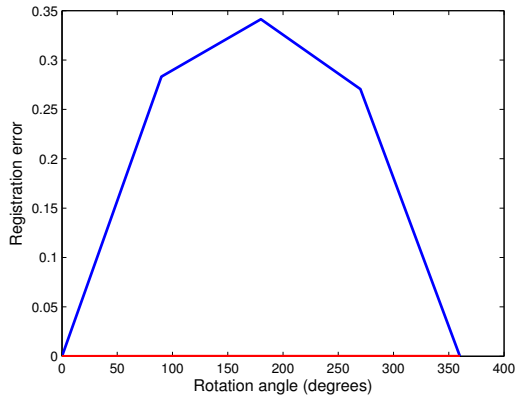}
	\caption{\label{fig:reg_error_sift_our}Registration error vs. transformation angle using our SIFT and our approach.}
\end{figure}

We finally look at the problem of handwritten digits registration with the baseline feature-based approach. We illustrate in Fig. \ref{fig:digits_sift} several examples of handwritten digits, together with the matched keypoints. One can see clearly that the matched keypoints are either inaccurate or insufficient to estimate a similarity transformation, as we need at least two matches for such an estimation. Note that we consider in Fig. \ref{fig:digits_sift} the exact transformation of handwritten digits and that there is no innovation between a pair of images apart from the global geometric transformation. Therefore, in the more difficult case where we consider different handwritten styles, the SIFT-based approach clearly fails in estimating the correct transformation. For instance, the classification of handwritten digits using the baseline SIFT-based registration approach along with a nearest-neighbour classifier leads to a classification accuracy of only $46\%$ in the same setting as above. 

\begin{figure}[ht]
\centering
\subfigure[1 match]{
\includegraphics[width=0.3\textwidth]{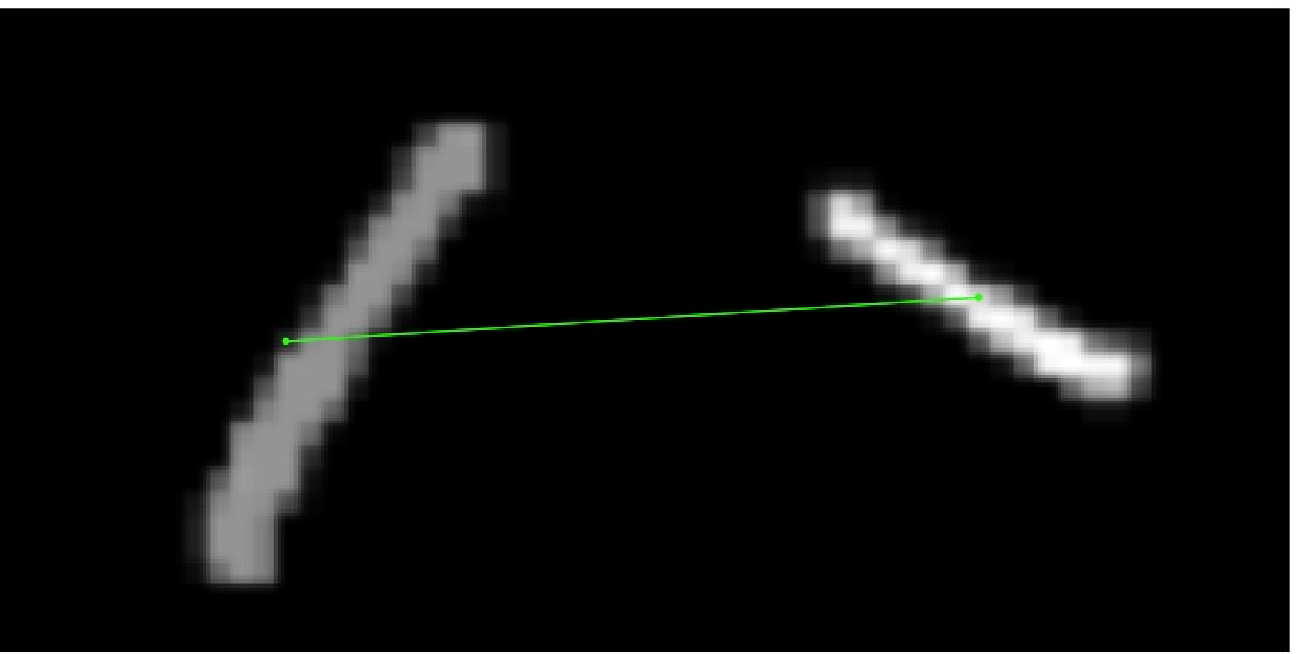}
}
\subfigure[2 matches]{
\includegraphics[width=0.3\textwidth]{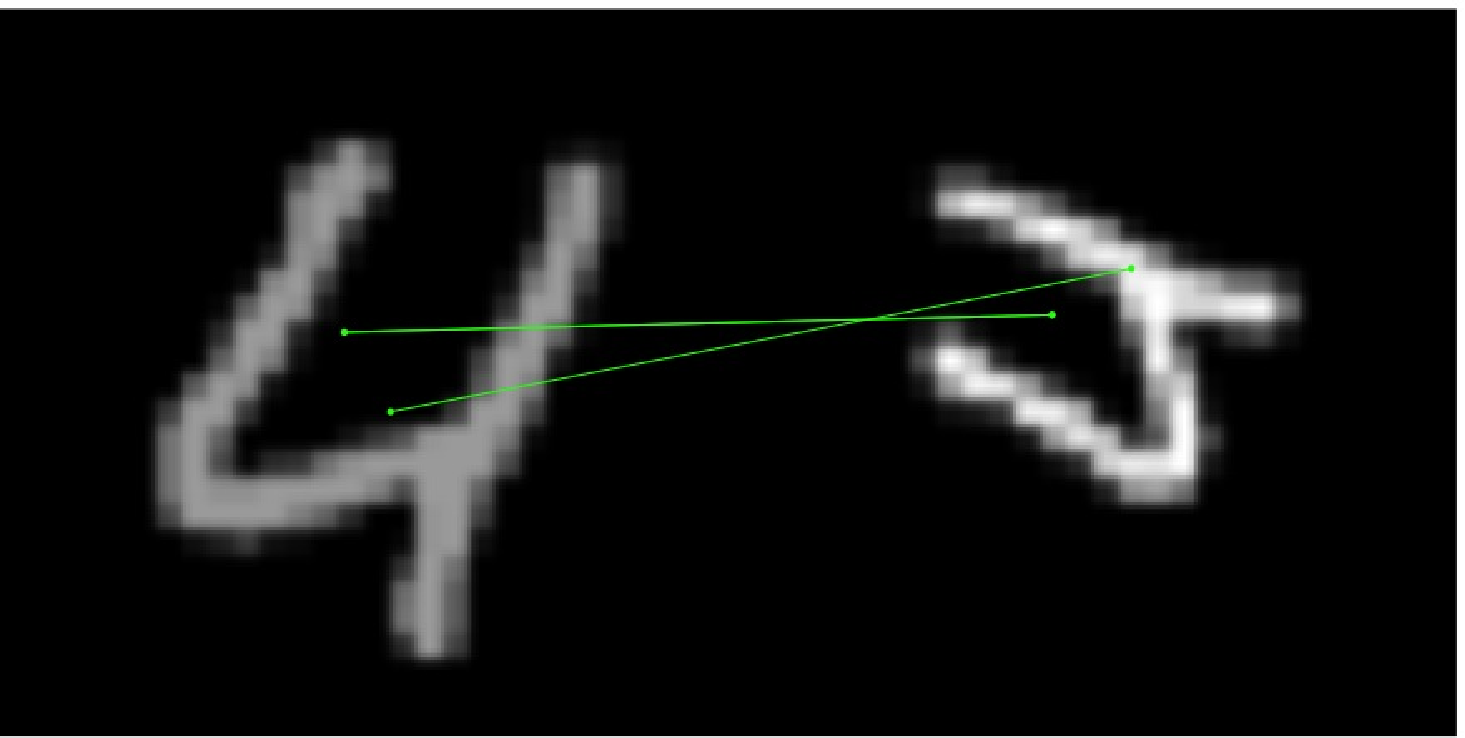}
}
\subfigure[0 matches]{
\includegraphics[width=0.3\textwidth]{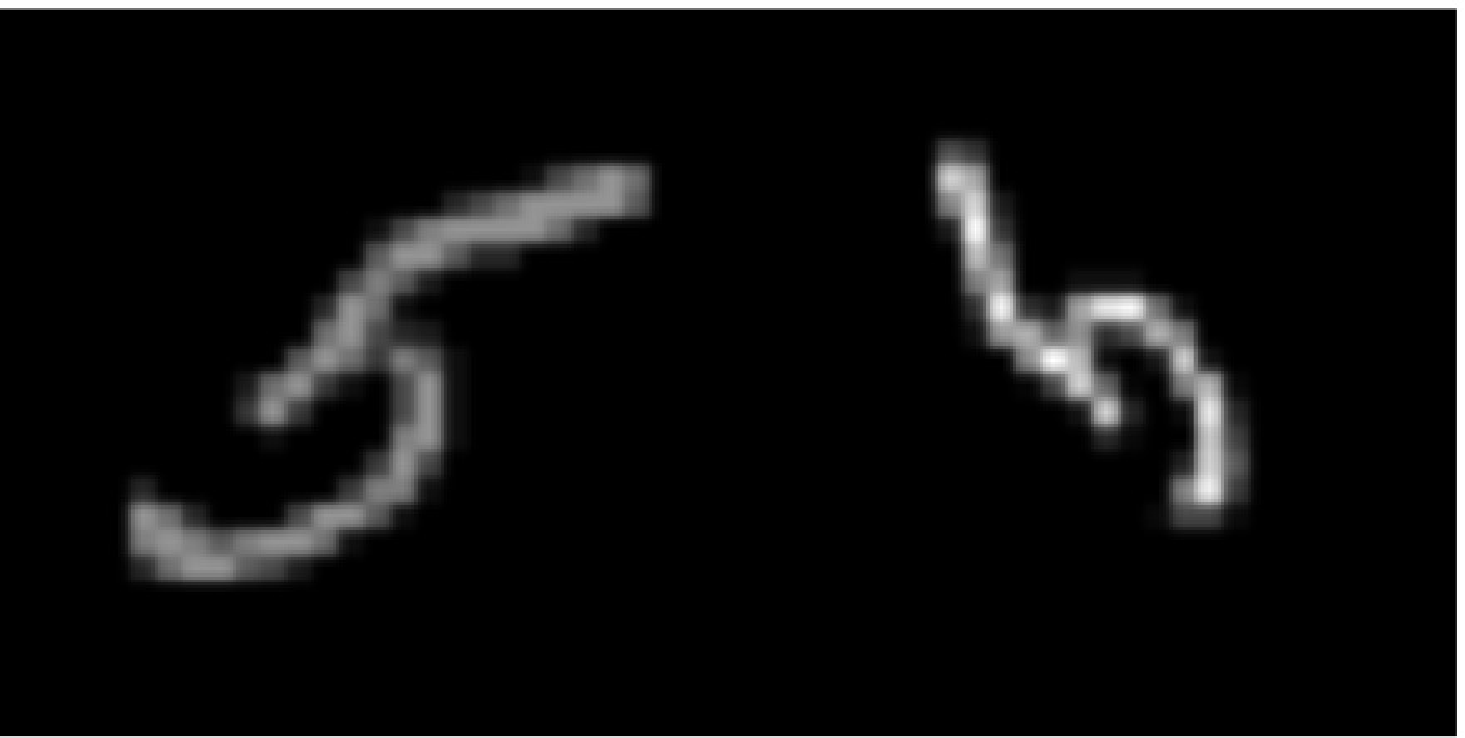}
}
\caption{\label{fig:digits_sift}Matched keypoints with SIFT features on handwritten digits.}
\end{figure}

The above examples show that, in the cases where images are sparse in geometric dictionaries of the form of Eq. (\ref{eq:dictionary}), the proposed registration approach might lead to better performance than baseline registration methods with standard visual features such as SIFT. 

\section{Conclusions}

We have proposed in this paper a simple registration algorithm based on the sparse representation of the input images in a parametric dictionary of geometric functions. Our method is general in the sense that we can achieve invariance to \textit{any} transformation group, provided that the geometric dictionary is properly constructed. We define novel properties of dictionaries, namely the robust linear independence (RLI) and transformation inconsistency in order to characterize the registration performance, which cannot be done with usual properties such as the coherence or the restricted isometry property. We show that our algorithm has low registration error when the RLI and the transformation inconsistency take small values. We also show that the proposed registration algorithm compares favorably with other baseline registration methods from the literature in illustrative alignment and classification experiments on simple visual objects and handwritten digits. To the best of our knowledge, this paper constitutes the first theoretically motivated work for image registration through sparse approximations in parametric dictionaries. We plan to extend our study to account also for the information conveyed by the coefficients of the sparse approximation, in order to further guide the registration process. Moreover, one future research direction consists in extending the algorithm to a scenario where different parts of the image can undergo different transformations. Finally, it is interesting to use the theoretical findings of this paper in order to study the design of proper dictionaries that behave well with respect to the newly introduced properties. 

\newpage
\appendix

\section{Proof of Theorem \ref{th:main_theorem}}

\label{app:proof_main_theorem}

We recall that $\eta_0$ denotes the optimal transformation between $p$ and $q$ and that $p$ and $q$ are given by:
\begin{align*}
p & = \sum_{i=1}^K c_i \phi_{\gamma_i} \\
q & = \sum_{i=1}^K d_i \phi_{\delta_i}.
\end{align*}
We can write:
\begin{align}
d_a(p,q) - d(p,q) & = \min_{\eta \in \mathcal{T}_a^{p,q}} \left\| U(\eta) p - q \right\|_2 - \left\| U(\eta_0) p - q \right\|_2 \label{eq:proof_mt_firstEq} \\
									& = \min_{\eta \in \mathcal{T}_a^{p,q}} \left\| U(\eta) p - U(\eta_0)p + U(\eta_0) p - q\right\|_2 - \left\| U(\eta_0) p - q \right\|_2 \\
									& \leq \min_{\eta \in \mathcal{T}_a^{p,q}} \left\| U(\eta) p - U(\eta_0)p \right\|_2 \\
									& = \min_{\eta \in \mathcal{T}_a^{p,q}} \left\| \sum_{i=1}^K c_i \phi_{\eta \circ \gamma_i} - \sum_{i=1}^K c_i \phi_{\eta_0 \circ \gamma_i} \right\|_2 \label{eq:proof_mt_1}
\end{align}
Let $(i^*, j^*)$ be the indices of the most correlated atoms when the two decompositions are optimally aligned:
\begin{align}
\label{eq:im_jm}
(i^*, j^*) = \argmin_{1 \leq i, j \leq K} \left\| \phi_{\eta_0 \circ \gamma_i} - \phi_{\delta_j} \right\|_2,
\end{align}
and let $\tilde{\eta}$ be the transformation between the corresponding features:
\begin{align*}
\tilde{\eta} = \delta_{j^*} \circ \gamma_{i^*}^{-1}
\end{align*}


By definition, $\tilde{\eta}$ belongs to the set of feature-to-feature transformations $\mathcal{T}_a^{p,q}$.

If $\| \phi_{\eta_0 \circ \gamma_{i^*}} - \phi_{\delta_{j^*}} \|_2 = 0$, then we have $\phi_{\eta_0 \circ \gamma_{i^*}} = \phi_{\delta_{j^*}}$. Since we suppose that $\gamma \mapsto U(\gamma) \phi$ is a bijective mapping, we have $\eta_0 \circ \gamma_{i^*} = \delta_{j^*}$ and we finally get $\eta_0 = \tilde{\eta}$. Hence, we have in this case a registration error $d_a(p,q) - d(p,q) = 0$.

We now focus on the case $\| \phi_{\eta_0 \circ \gamma_{i^*}} - \phi_{\delta_{j^*}} \|_2 > 0$. Thanks to Eq. (\ref{eq:proof_mt_1}), we have:
\begin{align}
d_a(p,q) - d(p,q) & \leq \left\| \sum_{i=1}^K c_i \phi_{\tilde{\eta} \circ \gamma_i} - \sum_{i=1}^K c_i \phi_{\eta_0 \circ \gamma_i} \label{eq:proof_mt_2}\right\|_2 \\ 
									& = \left\| \sum_{i=1}^K c_i \left( \phi_{\tilde{\eta} \circ \gamma_i} - \phi_{\eta_0 \circ \gamma_i} \right) \right\|_2 \\ 
									& \leq \sum_{i=1}^K | c_i | \| \phi_{\tilde{\eta} \circ \gamma_i} - \phi_{\eta_0 \circ \gamma_i} \|_2 \label{eq:proof_mt_lastEq}, 
\end{align}
by using the triangle inequality. Since $\| \phi_{\delta_{j^*}} - \phi_{\eta_0 \circ \gamma_{i^*}} \|_2 > 0$, we factorize the previous expression as follows:
\begin{align}
d_a(p,q) - d(p,q) & \leq \| \phi_{\delta_{j^*}} - \phi_{\eta_0 \circ \gamma_{i^*}} \|_2 \sum_{i=1}^K | c_i | \frac{\| \phi_{\tilde{\eta} \circ \gamma_i} - \phi_{\eta_0 \circ \gamma_i} \|_2}{\| \phi_{\tilde{\eta} \circ \gamma_{i^*}} - \phi_{\eta_0 \circ \gamma_{i^*}} \|_2} \\ 
									& \overset{(*)}{=} \| \phi_{\delta_{j^*}} - \phi_{\eta_0 \circ \gamma_{i^*}} \|_2 \sum_{i=1}^K | c_i | \frac{\left\| U(\eta_0^{-1} \circ \tilde{\eta}) \phi_{\gamma_i} - \phi_{\gamma_i} \right\|_2}{ \left\| U(\eta_0^{-1} \circ \tilde{\eta}) \phi_{\gamma_{i^*}} - \phi_{\gamma_{i^*}} \right\|_2} \\ 
									& \leq \| \phi_{\delta_{j^*}} - \phi_{\eta_0 \circ \gamma_{i^*}} \|_2 \sum_{i=1}^K \rho | c_i | \\ 
									& = \rho \| \phi_{\delta_{j^*}} - \phi_{\eta_0 \circ \gamma_{i^*}} \|_2  \| c \|_1 \label{eq:proof_mt_3},
\end{align}
where we have used in $(*)$ the fact that $U$ is unitary. $\rho$ is the transformation inconsistency parameter introduced in Definition \ref{def:inconsistency}.

We now focus on bounding $\left\| \phi_{\delta_{j^*}} - \phi_{\eta_0 \circ \gamma_{i^*}} \right\|_2$. In order to do so, we notice that $\phi_{\delta_{j^*}}$ and $\phi_{\eta_0 \circ \gamma_{i^*}}$ are respectively features in $q$ and $U(\eta_0) p$. Since we assume that $d(p,q) = \| U(\eta_0)p - q \| < \epsilon \sqrt{\| c \|_2^2 + \| d \|_2^2}$, by using appropriately the robust linear independence property (Definition \ref{def:rli}), we readily obtain an upper bound on $\| \phi_{\delta_{j^*}} - \phi_{\eta_0 \circ \gamma_{i^*}} \|_2$.
Formally, let $e$ be the vector of length $2K$ constructed from the concatenation of the coefficient vectors $c$ and $-d$ and define $\{\chi_j\}_{j=1}^{2K}$ as follows:
\begin{align*}
\chi_i = \eta_0 \circ \gamma_i \\
\chi_{K+i} = \delta_i.
\end{align*}
Using this definition, we have $U(\eta_0) p - q = \sum_{i=1}^K c_i \phi_{\eta_0 \circ \gamma_i} - \sum_{i=1}^K d_i \phi_{\delta_i} = \sum_{i=1}^{2K} e_i \phi_{\chi_i}$ and  $\| e \|_2 = \sqrt{\| c \|_2^2 + \| d \|_2^2}$. Since $d(p,q) < \epsilon \| e \|_2$ by hypothesis, and $\mathcal{D}$ is $\left( 2K, \epsilon, \alpha \right)$-RLI with $\alpha < \sqrt{2}$, there exist $i, j$ for which:
\begin{align}
\label{eq:proof_mt_4}
\left\| \frac{e_i \phi_{\chi_i}}{|e_i|} + \frac{e_j \phi_{\chi_j}}{|e_j|} \right\|_2 \leq \alpha,
\end{align}
as the atoms in the dictionary are normalized. If both $i$ and $j$ are not larger than $K$, the above inequality implies that:
\begin{align}
\left\| \frac{c_i}{|c_i|} \phi_{\eta_0 \circ \gamma_i} + \frac{c_j}{|c_j|} \phi_{\eta_0 \circ \gamma_j} \right\|_2 \overset{(a)}{=} \left\| \phi_{\eta_0 \circ \gamma_i} + \phi_{\eta_0 \circ \gamma_j} \right\|_2 \overset{(b)}{\geq} \sqrt{2},
\label{eq:proof_mt_5}
\end{align}
where (a) is obtained thanks to the positivity of $c$ and (b) is a consequence of the positivity of the atoms. Since we assume that $\alpha < \sqrt{2}$, Eq. (\ref{eq:proof_mt_4}) and Eq.  (\ref{eq:proof_mt_5}) cannot hold together. Hence, we exclude the case where $i \leq K$ and $j \leq K$. For the exact same reasons, it is easy to see that we cannot have $i \geq K+1$ and $j \geq K+1$. Therefore, the only possibility is $i \leq K$ and $j \geq K+1$ (or $j \leq K$ and $i \geq K+1$, which is identical, up to the relabeling of $i$ and $j$). Thus, by rewriting Eq. (\ref{eq:proof_mt_4}) we get:
\begin{align*}
\left\| \phi_{\eta_0 \circ \gamma_i} - \phi_{\delta_{j-K}} \right\|_2 \leq \alpha,
\end{align*}
thanks to the positivity of $c$ and $d$. Since $i^*$ and $j^*$ are by definition chosen to minimize the error between two features in $U(\eta_0) p$ and $q$ (Eq. (\ref{eq:im_jm})) we have: $\left\| \phi_{\eta_0 \circ \gamma_{i^*}} - \phi_{\delta_{j^*}} \right\|_2 \leq \left\| \phi_{\eta_0 \circ \gamma_i} - \phi_{\delta_{j-K}} \right\|_2 \leq \alpha$. Plugging this inequality into Eq.  (\ref{eq:proof_mt_3}), we get:
\begin{align}
d_a(p,q) - d(p,q) \leq \alpha \rho \| c \|_1.
\label{eq:proof_mt_6}
\end{align}
It is not hard to see that $d_a(p,q) = d_a(q,p)$ and $d(p,q) = d(q,p)$. Hence, we get:
\begin{align}
d_a(p,q) - d(p,q) \leq \alpha \rho \| d \|_1.
\label{eq:proof_mt_7}
\end{align}
By combining Eq. (\ref{eq:proof_mt_6}) and Eq. (\ref{eq:proof_mt_7}), we conclude that:
\[
d_a(p,q) - d(p,q) \leq \alpha \rho \min \left( \| c \|_1, \| d \|_1 \right).
\]


\section{Detailed study of the case where $\gamma \mapsto U(\gamma) \phi$ is not bijective}
\label{app:not_bijective}
We study in this appendix the case where $\gamma \mapsto U(\gamma) \phi$ is not a one-to-one mapping. In other words, we assume here that the generating function $\phi$ has symmetries in $\mathcal{T}$. More precisely, let $\mathcal{S}_{\phi}$ be defined by:
\begin{align*}
\mathcal{S}_{\phi} = \{ \gamma \in \mathcal{T} : U(\gamma) \phi = \phi\}.
\end{align*}
In group theory, $\mathcal{S}_{\phi}$ is known as the \emph{stabilizer} of $\phi$ in $\mathcal{T}$. Note that $\mathcal{S}_{\phi}$ is a subgroup of $\mathcal{T}$. Moreover, it is easy to see that the stabilizer of any atom $\phi_{\delta}$ can be obtained from $S_{\phi}$ with $\mathcal{S}_{\phi_{\delta}} = \delta \circ \mathcal{S}_{\phi} \circ \delta^{-1} = \{ \delta \circ \pi \circ \delta^{-1}: \pi \in \mathcal{S}_{\phi} \}$. Hence, given any $\delta \in \mathcal{T}$, the set of elements $\gamma$ in $\mathcal{T}$ that satisfy $\phi_{\delta} = \phi_{\gamma}$ is equal to $\delta \circ \mathcal{S}_\phi$.

When $\gamma \mapsto U(\gamma) \phi$ is a bijective mapping, $\mathcal{S}_{\phi}$ is equal to the trivial group. When $\mathcal{T} = SE(2)$ and $\phi$ is an ellipse-shaped generating function (Fig. \ref{fig:example1_rho}), the stabilizer contains two elements, namely the identity transformation and the rotation of angle $\pi$. Note that when $\phi$ is exactly circular, $\phi$ is symmetric with respect to all rotations; we get $\mathcal{S}_{\phi} = SO(2)$. 

In general, we avoid choosing a generating function whose stabilizer in $\mathcal{T}$ is an infinite subgroup, since the mother function should be discriminative enough for different transformations if we hope to recover the underlying transformation in $\mathcal{T}$. Our goal in this section is to show the modifications we need to perform in order to extend the assumption $|\mathcal{S}_{\phi}| = 1$ to $|\mathcal{S}_{\phi}| < \infty$, that is we need to assume that a limited number of symmetries exist in atom transformations.  

\subsection{Modified algorithm}

The main challenge of having $|\mathcal{S}_{\phi}| > 1$ is that several features can have the exact same appearance although they correspond to different transformations of the mother function. Clearly, arbitrarily choosing the transformation results generally in a wrong registration. The only way of solving this problem exactly is to examine all transformations that potentially generate a feature and test accordingly all feature-to-feature transformations. Formally, let $\phi_{\gamma}$ and $\phi_{\delta}$ be respectively arbitrary features in $p$ and $q$. As we mentioned earlier, the set of parameters that generate features having the same appearance as $\phi_{\gamma}$ is $\gamma \circ \mathcal{S}_{\phi}$. The same result holds for $\phi_{\delta}$. Hence, the set of transformations that map features of appearance $\phi_{\gamma}$ to features of appearance $\phi_{\delta}$ is given by:
\begin{align*}
\{ \delta \circ \pi \circ (\pi')^{-1} \circ \gamma^{-1}: \pi, \pi' \in \mathcal{S}_{\phi} \} = \{ \delta \circ \pi \circ \gamma^{-1}: \pi \in \mathcal{S}_{\phi} \}.
\end{align*}

We thus extend the set of feature-to-feature transformations $\mathcal{T}_a^{p,q}$ to:
\begin{align*}
\mathcal{T}_a^{p, q} = \{ \delta_i \circ \pi \circ \gamma_j^{-1} : 1 \leq i, j \leq K, \pi \in \mathcal{S}_{\phi} \}.
\end{align*}
Note that the only difference with respect to the set $\mathcal{T}_a^{p,q}$ defined in Section \ref{sec:registration} is that we compose in the middle of the expression with all transformations in the stabilizer group of $\phi$. Hence, the cardinality of $\mathcal{T}_a^{p,q}$ is equal to $|\mathcal{S}_{\phi}| K^2$. The rest of the algorithm (Algorithm \ref{alg:registration_algo}) remains unchanged. 

\subsection{Modified analysis}

We now turn to the analysis of the modified algorithm. First, it can be shown that in the case where images can be perfectly aligned, Proposition \ref{prop:prop1} holds for the modified algorithm when $|\mathcal{S}_{\phi}| < \infty$, when there is a finite number of symmetries. 

We then extend the analysis of the modified algorithm to the case where images cannot be perfectly aligned, but where the innovation is limited by $d(p,q) < \epsilon \sqrt{\| c \|_2^2 + \| d \|_2^2}$. The main difficulty of the analysis lies in the fact that we have the transformation inconsistency $\rho$ (as defined in Definition \ref{def:inconsistency}) equal to infinity when the mother function is symmetric (we can see this for example by considering the same setting as in Example \ref{ex:example_rho1} illustrated in Fig \ref{fig:example1_rho} with $\eta$ a rotation of $\pi$). We take into account the symmetries of the generating function in the following new definition of $\rho$: 
\begin{align}
\rho = \sup_{\eta \in \mathcal{T}} \sup_{\substack{\eta' \in \mathcal{T}_d \\ \eta' \notin \eta \circ \mathcal{S}_{\phi}}} \inf_{\pi \in \mathcal{S}_{\phi}} \sup_{\gamma \in \mathcal{T}_d} \frac{\left\| U(\eta \circ \pi \circ (\eta')^{-1}) \phi_{\gamma} - \phi_{\gamma} \right\|_2}{\left\| U(\eta) \phi - U(\eta') \phi \right\|_2},
\label{eq:inconsistency_symmetry}
\end{align}
where $\eta \circ \mathcal{S}_{\phi}$ denotes the set $\{ \eta \circ \gamma, \gamma \in \mathcal{S}_{\phi} \}$. Note that by constraining $\eta'$ to be outside the set $\eta \circ \mathcal{S}_\phi$, the denominator of the above equation is never equal to zero. Therefore, this new definition of the transformation inconsistency solves the problem that we have observed in Example \ref{ex:example_rho1} 
 for the particular case of generating functions having a symmetry of $\pi$ in $\mathcal{T} = SE(2)$. 

Note also that when $\mathcal{S}_\phi$ is the trivial group, the above definition of $\rho$ reduces to  Definition \ref{def:inconsistency}, since it is easy to check that 
\[
\frac{\left\| U(\eta \circ \pi \circ (\eta')^{-1}) \phi_{\gamma} - \phi_{\gamma} \right\|_2}{\left\| U(\eta) \phi - U(\eta') \phi \right\|_2} = 
\frac{\left\| U(\eta \circ (\eta')^{-1}) \phi_{\gamma} - \phi_{\gamma} \right\|_2}{\left\| U(\eta \circ (\eta')^{-1}) \phi_{\eta'} - \phi_{\eta'} \right\|_2} \leq \sup_{\gamma, \gamma' \in \mathcal{T}_d} \sup_{\eta \in \mathcal{T} \backslash \{\mathbb{I} \}} \left\{ \frac{\| U(\eta) \phi_{\gamma'} - \phi_{\gamma'} \|_2}{\| U(\eta) \phi_{\gamma} - \phi_{\gamma} \|_2} \right\},
\]
and the reverse inequality also holds.
Hence, this definition can be seen as an extension to the case where the generating function has intrinsic symmetries in $\mathcal{T}$. Intuitively, the transformation inconsistency $\rho$ is small whenever two transformations $\eta$ and $\eta'$ applied on the generating function that yield similar atoms in appearance will be such that $\eta \circ \pi \circ (\eta')^{-1}$ does not induce a large change in the appearance of \textit{any} atom in the dictionary $\mathcal{D}$, for some $\pi \in \mathcal{S}_\phi$.  

Using this new definition of $\rho$, we obtain the same bound of Theorem \ref{th:main_theorem} for the modified algorithm. In the following, we give the main differences in the proof of this statement with respect to the proof of Theorem \ref{th:main_theorem} given in Appendix \ref{app:proof_main_theorem}. 


\vspace{4mm}

\begin{proof}
Let $(i^*, j^*)$ be the indices defined in Eq. (\ref{eq:im_jm}), and let $\tilde{\eta} = \delta_{j^*} \circ \pi \circ \gamma_{i^*}^{-1}$, for any $\pi \in \mathcal{S}_{\phi}$. Clearly, we have $\tilde{\eta} \in \mathcal{T}_a^{p,q}$. 

In the case where $\| \phi_{\eta_0 \circ \gamma_{i^*}} - \phi_{\delta_{j^*}} \|_2 = 0$, there exists $\pi \in \mathcal{S}_{\phi}$ such that $\eta_0 \circ \gamma_{i^*} = \delta_{j^*} \circ \pi$, thus $\eta_0 = \delta_{j^*} \circ \pi \circ \gamma_{i^*}^{-1} \in \mathcal{T}_a^{p,q}$. Hence, in this case $d_a(p,q) = d(p,q)$.

We consider now the case where $\| \phi_{\eta_0 \circ \gamma_{i^*}} - \phi_{\delta_{j^*}} \|_2 > 0$. By using the same series of inequalities as in Eq. (\ref{eq:proof_mt_firstEq})- (\ref{eq:proof_mt_lastEq}), we know that:
\begin{align*}
d_a(p,q) - d(p,q) & \leq \| \phi_{\delta_{j^*}} - \phi_{\eta_0 \circ \gamma_{i^*}} \|_2 \sum_{i=1}^K | c_i | \frac{\| U(\eta_0^{-1} \circ \tilde{\eta}) \phi_{\gamma_i} - \phi_{\gamma_i} \|_2}{\| \phi_{\delta_{j^*}} - \phi_{\eta_0 \circ \gamma_{i^*}} \|_2} \\
									& \leq \| \phi_{\delta_{j^*}} - \phi_{\eta_0 \circ \gamma_{i^*}} \|_2 \sup_{\gamma \in \mathcal{T}_d} \left\{ \frac{\| U(\eta_0^{-1} \circ \tilde{\eta}) \phi_{\gamma} - \phi_{\gamma} \|_2}{\|\phi_{\delta_{j^*}} - \phi_{\eta_0 \circ \gamma_{i^*}} \|_2}  \right\}  \sum_{i=1}^K | c_i |
\end{align*}
Since this inequality is valid for any $\tilde{\eta}$ of the form $\delta_{j^*} \circ \pi \circ \gamma_{i^*}^{-1}$ where $\pi \in \mathcal{S}_\phi$, we deduce from the previous inequality that:
\begin{align*}
d_a(p,q) - d(p,q) & \leq \| \phi_{\delta_{j^*}} - \phi_{\eta_0 \circ \gamma_{i^*}} \|_2 \inf_{\pi \in \mathcal{S}_{\phi}} \sup_{\gamma \in \mathcal{T}_d} \left\{ \frac{\| U(\eta_0^{-1} \circ \delta_{j^*} \circ \pi \circ \gamma_{i^*}^{-1}) \phi_{\gamma} - \phi_{\gamma} \|_2}{\| \phi_{ \gamma_{i^*}} - \phi_{\eta_0^{-1} \circ \delta_{j^*}} \|_2} \right\} \| c \|_1 \\
									& \leq \| \phi_{\delta_{j^*}} - \phi_{\eta_0 \circ \gamma_{i^*}} \|_2 \sup_{\eta \in \mathcal{T}} \sup_{\substack{\eta' \in \mathcal{T}_d \\ \eta' \notin \eta \circ \mathcal{S}_\phi}} \inf_{\pi \in \mathcal{S}_{\phi}}  \sup_{\gamma \in \mathcal{T}_d} \left\{ \frac{\| U(\eta \circ \pi \circ (\eta')^{-1}) \phi_{\gamma} - \phi_{\gamma} \|_2}{\| U(\eta) \phi - U(\eta') \phi \|_2} \right\} \| c \|_1 \\
									& = \| \phi_{\delta_{j^*}} - \phi_{\eta_0 \circ \gamma_{i^*}} \|_2 \rho \| c \|_1
\end{align*}
By using the same upper bound on $\| \phi_{\delta_{j^*}} - \phi_{\eta_0 \circ \gamma_{i^*}} \|_2$ in the exact same way as in Appendix \ref{app:proof_main_theorem} (thanks to the RLI property), we obtain the desired result.
\end{proof}


\section{Gradient descent refinement}
\label{app:gradient_descent}

We describe in this appendix the local optimization technique that we use to refine the estimation of the transformation obtained with our registration algorithm. Specifically, we present here briefly our gradient descent approach that respects the intrinsic geometry of our registration problem. In order to do so, we first define an appropriate distance in $\mathcal{T}$. Then, we formulate the induction of the gradient descent on $\mathcal{T}$, where we follow an approach similar to the work by Jacques et. al. in \cite{jacques2008geometrical}

The most direct distance in $\mathcal{T}$ is the mere Euclidean distance: $\sqrt{\sum_{i=1}^P (\gamma_1^i - \gamma_2^i)^2}$ where $\gamma_1^i$ and $\gamma_2^i$ denote respectively the components of $\gamma_1 \in \mathcal{T}$ and $\gamma_2 \in \mathcal{T}$. However, this distance is artificial since it mixes several components that are different in nature (translation, rotation and scale components for example). Thus, we use instead a distance that is naturally introduced by the continuous dictionary $\mathcal{D}_c = \{ U(\gamma) \phi: \gamma \in \mathcal{T}\} \subset L^2$. That is, rather than considering the distance directly between the parameters, we consider the distance between \emph{the atoms generated by these parameters}. Hence, we first introduce a distance in the signal space, and translate naturally this distance to the parameter space. 

The space $\mathcal{D}_c$ is a continuous submanifold of $L^2$ \cite{donoho2005image}. We let $g(\gamma_1, \gamma_2)$ be the geodesic distance between $\phi_{\gamma_1}$ and $\phi_{\gamma_2}$ in $\mathcal{D}_c$. It corresponds to the shortest path in $\mathcal{D}_c$ between $\phi_{\gamma_1}$ and $\phi_{\gamma_2}$, where $\phi$ is the generating function of the dictionary. Formally, we have:
\begin{align*}
g(\gamma_1, \gamma_2) = \inf \left\{L(\phi_{z}): \text{ all curves $z: [0, 1] \rightarrow \mathcal{T}$ satisfying $z(0) = \gamma_1$ and $z(1) = \gamma_2$ } \right\},
\end{align*}
where $L$ is the length of the curve $\phi_z$:
\begin{align}
\label{eq:curve_length}
L(\phi_{z}) = \int_{0}^1 \left\| \frac{d \phi_{z(t)}}{dt} \right\|_{L^2} dt.
\end{align}

We use the chain rule to expand the previous expression:
\begin{align*}
\frac{d \phi_{z(t)}}{dt} = \sum_{i=1}^P \dot{z}^i(t) \partial_i \phi_{z(t)},
\end{align*}
where $\dot{z}^i (t)$ denotes the $i$-th component of $\frac{dz}{dt} (t)$ and $\partial_i \phi_{z(t)} = \frac{\partial \phi_{z(t)}}{\partial \gamma^i}$. By injecting in Eq. (\ref{eq:curve_length}), we get:
\begin{align*}
L(\phi_{z}) & = \int_{0}^1 \sqrt{\sum_{i=1}^P \sum_{j=1}^P \dot{z}^i(t) \dot{z}^j(t) \scalprod{\partial_i \phi_{z(t)}}{\partial_j \phi_{z(t)}}} dt						
\end{align*}

The previous equation introduces a natural notion of metric in the parameter space, that is, a way to calculate the scalar product between two elements in a tangent space of $\mathcal{T}$. In order to see this, let $G_{\gamma}$ be a matrix of dimension $P \times P$ defined as follows: $G_{\gamma} \triangleq \left( \scalprod{\partial_i \phi_\gamma}{\partial_j \phi_\gamma} \right)_{1 \leq i,j \leq P}$, for any $\gamma \in \mathcal{T}$. Given two elements $\xi$ and $\chi$ living in the tangent space of $\mathcal{T}$ at a point $\gamma$ , we define the metric as follows:
\begin{align}
\label{eq:scal_prod_manifold}
\scalprod{\xi}{\chi}_{\gamma} = \xi^T G_{\gamma} \chi.
\end{align}
This metric is chosen in such a way that the geodesic distance in $\mathcal{D}_c$ coincides with the geodesic distance in $\mathcal{T}$. The matrix $G_{\gamma}$ is refered to as the \emph{Riemannian metric} associated to the manifold $\mathcal{T}$. We assume that this matrix is positive definite in the rest of this section.

Endowed with the above metric, starting from a point $\tau_0 \in \mathcal{T}$, the gradient descent induction is given as follows: 
\begin{align*}
\tau_{i+1} = \tau_{i} - w \nabla J(\tau_{i}) \text{ for } i \geq 0,
\end{align*}
where
\begin{align}
\label{eq:gradient_to_prove}
\nabla J(\tau_{i}) = G_{\tau_i}^{-1} \begin{bmatrix} \partial_1 J(\tau_i) \\ \vdots \\ \partial_p J(\tau_i) \end{bmatrix}
\end{align} 
and $w$ defines the step size. On a practical level, the step size $w$ is chosen using a line search at each iteration. We limit the overall number of iterations in order to control the computational complexity of the algorithm.

One can check that the above definition of the gradient $\nabla J(\tau_{i})$ is natural, since the gradient is defined with the following equality.
\begin{align*}
\scalprod{\nabla J(\tau)}{\xi}_\tau = dJ_{\tau} (\xi),
\end{align*}
for any $\tau \in \mathcal{T}$ and $\xi$ belongs to the tangent space at $\tau$, and $dJ_{\tau} (\xi)$ gives the directional derivative of $J$ in the direction of $\xi$ evaluated at $\tau$. We can expand $dJ_{\tau} (\xi)$ as follows:
\begin{align}
\label{eq:directional_derivative}
dJ_{\tau} (\xi) = \sum_{i=1}^p \partial_i J(\tau) \xi^i = [\xi^1 \dots \xi^P] \begin{bmatrix} \partial_1 J(\tau) \\ \dots \\ \partial_P J(\tau) \end{bmatrix}.
\end{align}
Besides, by using the scalar production definition of Eq. (\ref{eq:scal_prod_manifold}), we have:
\begin{align}
\label{eq:lhs_gradient}
\scalprod{\nabla J(\tau)}{\xi}_\tau = \xi^T G_{\tau} \nabla J(\tau).
\end{align}
By combining Eq. (\ref{eq:directional_derivative}) and Eq. (\ref{eq:lhs_gradient}), we obtain the definition stated in Eq. (\ref{eq:gradient_to_prove}).

In order to illustrate the benefits of this local gradient-based optimization step, we conduct an experiment where we compare the accuracy of the estimated transformation using our approach with and without gradient descent. Specifically, we generate 100 random transformations of the Duck image in Fig. \ref{fig:test_images} and register the original image with the transformed images using both methods. The translation, rotation and scale errors are measured respectively with $\| b - b_0' \|_2$, $| a - a_0' |$ and $\min(|\hat{\theta} - \theta_0'|, 180 - |\hat{\theta} - \theta_0'|)$, where the optimal transformation is denoted by $\eta_0' = (b_0', a_0', \theta_0')$ and the estimated transformation is equal to $\hat{\eta} = (\hat{b}, \hat{a}, \hat{\theta} )$. Table \ref{tab:improvement_gradient_descent} gives the mean errors in translation, rotation and scale parameters.

\begin{table}[ht]
\footnotesize
\centering
\begin{tabular}{|c|c|c|}
  \hline
   & Without GD & With GD  \\
  \hline 
  Translation error & 2.67 & 0.72 \\
  \hline
  Scale error & 0.11 & 0.02 \\
  \hline
  Rotation errror & $7.7^{\circ}$  & $3.66^{\circ}$ \\
  \hline
\end{tabular}
\caption{Mean value of translation, scale and rotation error over 100 random trials. All the experiments are performed on the 'Duck' image (Fig \ref{fig:test_images}). The sparsity value $K$ is set to $15$.} 
\label{tab:improvement_gradient_descent}
\end{table}

We observe in practice that the overall performance of our algorithm increases substantially when gradient descent is used to refine the estimation of our registration algorithm.

\section{Proof of Example \ref{prop:rli_example}}

\label{app:proof_example_rli}

Let $a$ be an arbitrary real vector of $K$ elements, and let $\tau_1, \dots, \tau_K$ be any real numbers such that $\tau_1 < \dots < \tau_K$. Let $\epsilon$ be a sufficiently small real number that satisfies $0 < \epsilon < \sqrt{\frac{3}{4^K-1}}$. We suppose that $a$ satisfies $\left\| \sum_{i=1}^K a_i v_{\tau_i} \right\|_2 < \epsilon \|a\|_2$. Our aim is to prove that there exist two box functions $v_{\tau_{i}}$ and $v_{\tau_{j}}$ that satisfy:
\begin{align*}
\left\| \frac{a_{i} v_{\tau_i}}{\left\| a_{i} v_{\tau_i} \right\|_2} + \frac{a_{j} v_{\tau_j}}{\left\| a_{j} v_{j} \right\|_2} \right\|_2 \leq \alpha,
\end{align*}
with $\alpha = \epsilon \sqrt{\frac{2}{3} ( 4^K - 1 ) }$.

We assume without loss of generality that $\|a\|_2 = 1$. We first show the following result, that establishes a lower bound on one of the components of the coefficient vector $a$:
\begin{lemma}
\quad
There exists $i \in \{ 1, \dots, K \}$ such that $|a_{i}| \geq 2^{i-1} Y$, with $Y = \sqrt{\frac{3}{4^K-1}}$.
\end{lemma}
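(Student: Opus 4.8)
The plan is to prove this lemma by contradiction, exploiting the fact that the threshold constant $Y = \sqrt{\frac{3}{4^K-1}}$ is precisely calibrated to a geometric sum. Notably, I expect the hypothesis $\left\| \sum_{i=1}^K a_i v_{\tau_i} \right\|_2 < \epsilon \|a\|_2$ to be \emph{unnecessary} for this particular statement: the lemma is a purely algebraic consequence of the normalization $\|a\|_2 = 1$, and the box-function structure together with the $\epsilon$-smallness of the combination will only be needed in the subsequent steps of the proof of Example \ref{prop:rli_example}.

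First I would assume, for the sake of contradiction, that the conclusion fails, i.e. that $|a_i| < 2^{i-1} Y$ for \emph{every} $i \in \{1, \dots, K\}$. Squaring and summing these strict inequalities gives
\begin{align*}
\|a\|_2^2 = \sum_{i=1}^K a_i^2 < Y^2 \sum_{i=1}^K 4^{i-1}.
\end{align*}
The key observation is then to evaluate the geometric sum $\sum_{i=1}^K 4^{i-1} = \frac{4^K-1}{3}$ and substitute the definition $Y^2 = \frac{3}{4^K-1}$, which yields $Y^2 \sum_{i=1}^K 4^{i-1} = 1$ exactly. Hence $\|a\|_2^2 < 1$, contradicting the normalization $\|a\|_2 = 1$ assumed at the outset of the appendix.

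From this contradiction I would conclude that there must exist at least one index $i \in \{1, \dots, K\}$ for which $|a_i| \geq 2^{i-1} Y$, which is the claim. There is no genuine obstacle here: the only point requiring care is recognizing that the factor $2^{i-1}$ on the coefficients corresponds, after squaring, to the ratio $4^{i-1}$ of the geometric series, and that the constant $Y$ is chosen exactly so that the weighted sum of these factors equals $1$. This tight calibration is what makes the strict inequalities collapse into a clean contradiction with the unit-norm assumption.
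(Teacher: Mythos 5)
Your proof is correct and is essentially identical to the paper's: both argue by contradiction, sum the squared bounds $|a_i|^2 < 4^{i-1}Y^2$, evaluate the geometric series to get exactly $1$, and contradict $\|a\|_2 = 1$. Your side remark that the $\epsilon$-smallness hypothesis is not needed for this lemma is also accurate — the paper's proof does not use it either.
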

\begin{proof}
We prove this lemma by contradiction.
We have:
\begin{align*}
\| a \|_2^2 & = \sum_{i=1}^K | a_i |^2 \\
						& < \sum_{i=0}^{K-1} 2^{2i} \frac{3}{4^K-1} \\
						& = \frac{3}{4^K-1} \sum_{i=0}^{K-1} 4^i \\
						& = \frac{3}{4^K-1} \frac{4^K - 1}{3} \\
						& = 1,
\end{align*}
which contradicts the fact that $\|a\|_2 = 1$.
\end{proof}

We let $i^*$ be the smallest integer that satisfies $|a_i| \geq 2^{i-1} Y$. The following lemma shows that there exists necessarily an interval where the function $\left| \sum_{i=1}^K a_i v_{\tau_i} (t) \right|$ is larger than $Y$.

\begin{lemma}
\quad
\label{lemma:boxes}
\begin{enumerate}
\item There exists an index $j$ satisfying $\tau_{i^*} < \tau_j \leq \tau_{i^*} + 1$ such that $a_{j} a_{i^*} < 0$.
\item Let $j^*$ be the smallest integer larger than $i^*$ that verifies $a_{j^*} a_{i^*} < 0$. For all $t \in [\tau_{i^*}, \tau_{j^*})$, $\left| \sum_{i=1}^K a_i v_{\tau_i} (t) \right| \geq Y$.
\end{enumerate}
\end{lemma}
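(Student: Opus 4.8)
The plan is to prove both parts at once by establishing a pointwise lower bound on the magnitude of the piecewise-constant function $f(t) := \sum_{i=1}^K a_i v_{\tau_i}(t)$ on an interval anchored at $\tau_{i^*}$, and then to read off Part~1 from the $L^2$ constraint and Part~2 from the same bound. Two structural facts drive everything. First, since $v_{\tau_i}$ is supported on $[\tau_i,\tau_i+1)$, the atom $v_{\tau_{i^*}}$ equals $1$ throughout $[\tau_{i^*},\tau_{i^*}+1)$, so its coefficient $a_{i^*}$ contributes to $f$ on that whole unit interval. Second, by the minimality of $i^*$ every smaller index satisfies $|a_i|<2^{i-1}Y$, whence $\sum_{i=1}^{i^*-1}|a_i| < Y\sum_{i=1}^{i^*-1}2^{i-1} = Y(2^{i^*-1}-1)$, while $|a_{i^*}|\ge 2^{i^*-1}Y$ by the choice of $i^*$.

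First I would analyze which atoms are active at a point $t\in[\tau_{i^*},\tau_{i^*}+1)$. Because $f(t)=\sum_{i:\,\tau_i\le t<\tau_i+1}a_i$ and the $\tau_i$ are strictly increasing, the active indices split into three groups: the index $i^*$ itself; the indices $i>i^*$ with $\tau_{i^*}<\tau_i\le t$, which necessarily lie in $(\tau_{i^*},\tau_{i^*}+1)$; and the indices $i<i^*$ whose atoms have not yet switched off. Only the third group can oppose $a_{i^*}$, and by the geometric estimate above its total magnitude is strictly less than $Y(2^{i^*-1}-1)$, independently of $t$.

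For Part~1 I would argue by contradiction: suppose no index $j$ with $\tau_{i^*}<\tau_j\le \tau_{i^*}+1$ satisfies $a_j a_{i^*}<0$. Then for every $t\in[\tau_{i^*},\tau_{i^*}+1)$ each second-group atom has the same sign as $a_{i^*}$ (or vanishes), so $a_{i^*}$ and these contributions add without cancellation and their partial sum has absolute value at least $|a_{i^*}|\ge 2^{i^*-1}Y$. Subtracting the third group yields
\[
|f(t)| \;\ge\; 2^{i^*-1}Y - Y\bigl(2^{i^*-1}-1\bigr) \;=\; Y
\]
for all $t\in[\tau_{i^*},\tau_{i^*}+1)$. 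Integrating over this interval of length one gives $\|f\|_2^2 \ge Y^2 = \tfrac{3}{4^K-1}$, i.e. $\|f\|_2\ge Y>\epsilon$, contradicting the standing hypothesis $\bigl\|\sum_i a_i v_{\tau_i}\bigr\|_2 < \epsilon\|a\|_2 = \epsilon$. Hence such an index $j$ exists, which is Part~1.

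Part~2 follows from the identical estimate on a shorter interval. Since $j^*$ is the smallest index exceeding $i^*$ with $a_{j^*}a_{i^*}<0$, Part~1 forces $\tau_{j^*}\le \tau_{i^*}+1$, so $[\tau_{i^*},\tau_{j^*})\subset[\tau_{i^*},\tau_{i^*}+1)$ and $v_{\tau_{i^*}}$ remains active throughout; moreover every index $i$ with $i^*<i<j^*$ has $a_i a_{i^*}\ge 0$ by minimality of $j^*$, so on $[\tau_{i^*},\tau_{j^*})$ the second group again consists solely of same-sign atoms and the bound $|f(t)|\ge Y$ repeats verbatim. I expect the sign bookkeeping in the second group to be the one point needing real care: one must check that every higher-index atom switching on before $\tau_{j^*}$ (respectively before $\tau_{i^*}+1$ under the contradiction hypothesis) genuinely shares the sign of $a_{i^*}$, so that the only admissible cancellation comes from the geometrically small lower-index coefficients. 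The endpoint convention is harmless, since an atom with $\tau_j=\tau_{i^*}+1$ switches on outside the half-open unit interval and the breakpoints form a measure-zero set that does not affect the $L^2$ computation.
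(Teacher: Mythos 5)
Your proof is correct and follows essentially the same route as the paper's: the same split of the active atoms into the anchor coefficient $a_{i^*}$, the same-sign higher-index group, and the geometrically small lower-index group bounded by $Y(2^{i^*-1}-1)$, with Part~1 obtained by integrating the pointwise bound $|f(t)|\geq Y$ over the unit interval to contradict $\|f\|_2<\epsilon<Y$, and Part~2 by repeating the estimate on $[\tau_{i^*},\tau_{j^*})\subset[\tau_{i^*},\tau_{i^*}+1)$ using the minimality of $j^*$. The only (harmless) difference is presentational: you unify both parts under a single pointwise estimate, whereas the paper writes the two cases out separately via the auxiliary index $j_0$.
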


\begin{proof}
\quad
\begin{enumerate}
\item We prove the first statement by contradiction. Suppose that either all box functions between  $\tau_{i^*}$ and $\tau_{i^*} + 1$ are associated with coefficients that have the same sign as $a_{i^*}$, or no box functions exist between $\tau_{i^*}$ and $\tau_{i^*} + 1$. Let $j_0$ be the largest index such that $\tau_{i^*} \leq \tau_{j_0} < \tau_{i^*} + 1$. We have:
\begin{align*}
\left\| \sum_{i=1}^K a_i v_{\tau_i} \right\|_2^2 & = \int_{\tau_1}^{+\infty} \left| \sum_{i=1}^K a_i v_{\tau_i} (t) \right|^2 dt \\
																					& \geq \int_{\tau_{i^*}}^{\tau_{i^*}+1} \left| \sum_{i=1}^K a_i v_{\tau_i} (t) \right|^2 dt \\
																					& = \int_{\tau_{i^*}}^{\tau_{i^*}+1} \left| \sum_{i=1}^{j_0} a_i v_{\tau_i} (t) \right|^2 dt \\
																					& = \int_{\tau_{i^*}}^{\tau_{i^*}+1} \left| \sum_{i=i^*}^{j_0} a_i v_{\tau_i} (t) + \sum_{i=1}^{i^*-1} a_i v_{\tau_i} (t) \right|^2 dt
\end{align*}
By using the triangle inequality, we have for any $t \in [\tau_{i^*}, \tau_{i^*}+1]$:
\begin{align*}
\left| \sum_{i=i^*}^{j_0} a_i v_{\tau_i} (t) + \sum_{i=1}^{i^*-1} a_i v_{\tau_i} (t) \right| & \geq \left| \sum_{i=i^*}^{j_0} a_i v_{\tau_i} (t) \right| - \sum_{i=1}^{i^*-1} | a_i | \\
							   & \geq | a_{i^*} | - \sum_{i=1}^{i^*-1} | a_i |.
\end{align*}
The last inequality derives from the fact that the coefficients $a_i$ have all the same sign for $i \in \{i^*, \dots, j_0\}$. As $i^*$ is by definition the smallest integer which satisfies $|a_{i^*}| \geq 2^{i^*-1} Y$, we have $|a_i| < 2^{i-1} Y$ for all $i \in \{1, \dots, i^*-1\}$. Hence:
\begin{align*}
\sum_{i=1}^{i^*-1} | a_{i} | \leq Y \sum_{i=1}^{i^*-1} 2^{i-1} = Y (2^{i^* - 1} - 1).
\end{align*}
Thus, $\left| a_i^* \right| - \sum_{i=1}^{i^*-1} |a_i| \geq 2^{i^*-1} Y - Y (2^{i^* - 1} - 1) \geq Y$. Finally, we have:
\begin{align*}
\int_{\tau_{i^*}}^{\tau_{i^*}+1} \left| \sum_{i=i^*}^{j_0} a_i v_{\tau_i} (t) + \sum_{i=1}^{i^*-1} a_i v_{\tau_i} (t) \right|^2 dt \geq \int_{\tau_{i^*}}^{\tau_{i^*}+1} \left(  | a_{i^*} | - \sum_{i=1}^{i^*-1} | a_i | dt \right)^2 dt \geq Y^2,
\end{align*}
 which leads to a contradiction since $\epsilon < Y$.

\item Let $t \in [\tau_{i^*}, \tau_{j^*})$. Then, we have: 
\begin{align*}
\left| \sum_{i=1}^K a_i v_{\tau_i} (t) \right| & = \left| \sum_{i=1}^{j^*-1} a_i v_{\tau_i} (t) + \sum_{i=j^*}^{K} a_i v_{\tau_i} (t) \right|  \\
																				& = \left| \sum_{i=1}^{j^*-1} a_i v_{\tau_i} (t) \right| \\
																				& \geq \left| \sum_{i=i^*}^{j^*-1} a_i v_{\tau_i} (t) \right| - \sum_{i=1}^{i^*-1} |a_i| \\
																				& \geq \left| a_{i^*} \right| - \sum_{i=1}^{i^*-1} |a_i| .
\end{align*}
The last inequality is obtained due to the fact that $\tau_{j^*} \leq \tau_{i^*}+1$ (hence $v_{\tau_{i^*}} (t) = 1$) and that the coefficients $a_i$ have the same sign for all $i \in \{i^*, \dots, j^*-1\}$. As $i^*$ is by definition the smallest integer that satisfies $|a_{i^*}| \geq 2^{i^*-1} Y$, we have $|a_i| < 2^{i-1} Y$ for all $i \in \{1, \dots, i^*-1\}$. Hence:
\begin{align*}
\sum_{i=1}^{i^*-1} | a_{i} | \leq Y \sum_{i=1}^{i^*-1} 2^{i-1} = Y (2^{i^* - 1} - 1).
\end{align*}
Thus, $\left| a_i^* \right| - \sum_{i=1}^{i^*-1} |a_i| \geq  2^{i^*-1} Y - Y (2^{i^* - 1} - 1) \geq Y $, which concludes the proof of the lemma.
\end{enumerate}
\end{proof}

We now prove that two box functions have necessarily to be close to each other since the function $\left| \sum_{i=1}^K a_i v_{\tau_i} (t) \right|$ is large enough in the interval $[\tau_{i^*}, \tau_{j^*})$ (and at the same time $\left\| \sum_{i=1}^K a_i v_{\tau_i} \right\|_2 < \epsilon$). We have:
\begin{align}
\epsilon^2 \geq \left\| \sum_{i=1}^K a_i v_{\tau_i} \right\|_2^2 & = \int_{\tau_1}^{+\infty} \left| \sum_{i=1}^K a_i v_{\tau_i} (t) \right|^2 dt \nonumber \\
																					& = \int_{\tau_1}^{\tau_{j^*-1}} \left| \sum_{i=1}^K a_i v_{\tau_i} (t) \right|^2 dt + \int_{\tau_{j^*-1}}^{\tau_{j^*}} \left| \sum_{i=1}^K a_i v_{\tau_i} (t) \right|^2 dt + \int_{\tau_{j^*}}^{\infty} \left| \sum_{i=1}^K a_i v_{\tau_i} (t) \right|^2 dt \nonumber \\
																					& \geq \int_{\tau_{j^*-1}}^{\tau_{j^*}} \left| \sum_{i=1}^K a_i v_{\tau_i} (t) \right|^2 dt \nonumber \\
																					& \geq (\tau_{j^*} - \tau_{j^*-1}) Y^2 \label{eq:box_eq1},
\end{align}
thanks to Lemma \ref{lemma:boxes}. We thus get:
\begin{align*}
\epsilon^2 \geq (\tau_{j^*} - \tau_{j^*-1}) Y^2
\end{align*}

Moreover, the relation between $\tau_{j^*} - \tau_{j^*-1}$ and $\scalprod{v_{\tau_{j^*}}}{v_{\tau_{j^*-1}}}$ can be obtained easily:
\begin{align*}
\scalprod{v_{\tau_{j^*}}}{v_{\tau_{j^*-1}}} = \begin{cases}
  1 - \left| \tau_{j^*} - \tau_{j^*-1} \right| & \text{if $\left| \tau_{j^*} - \tau_{j^*-1} \right| \leq 1$} \\
  0 & \text{otherwise}
\end{cases}
\end{align*}
As $\epsilon < Y$, we have $\left| \tau_{j^*} - \tau_{j^*-1} \right| < 1$. Hence,
\begin{align*}
1 - \scalprod{v_{\tau_{j^*}}}{v_{\tau_{j^*-1}}} \leq \frac{\epsilon^2}{Y^2}.
\end{align*}
Moreover, as $a_{j^*-1} a_{j^*} < 0$ by construction, we have: 
\begin{align*}
\left\| \frac{a_{j^*-1} v_{\tau_{j^*-1}}}{|a_{j^*-1}|} + \frac{a_{j^*} v_{\tau_{j^*}}}{|a_{j^*}|} \right\|_2 = \left\| v_{\tau_{j^* - 1}} - v_{\tau_{j^*}} \right\|_2 = \sqrt{2\left(1-\scalprod{v_{\tau_{j^*}}}{v_{\tau_{j^*-1}}}\right)} \leq \sqrt{2} \frac{\epsilon}{Y},
\end{align*}
which concludes the proof.

\vspace{-2mm}
\bibliographystyle{siam}
\bibliography{draft_alignment}

\end{document}